\documentclass[10pt,twocolumn,letterpaper]{article}

\usepackage{cvpr}
\usepackage{xcolor}
\usepackage{amsmath}
\usepackage{amsthm}

\newtheorem{theorem}{Theorem}

\newtheorem{corollary}{Corollary}
\theoremstyle{remark}

\usepackage{mathtools}
\usepackage{algorithm}
\usepackage{algorithmic}
\usepackage{booktabs}
\usepackage{multirow}
\usepackage{colortbl}
\usepackage{arydshln}
\usepackage{listings}
\usepackage[most]{tcolorbox}
\usepackage{subcaption}
\usepackage{placeins}
\usepackage{wrapfig}

\definecolor{cvprblue}{rgb}{0.21,0.49,0.74}
\usepackage[pagebackref,breaklinks,colorlinks,allcolors=cvprblue]{hyperref}

\def\paperID{1120}
\def\confName{CVPR}
\def\confYear{2026}
\title{ConsistentRFT: Reducing Visual Hallucinations in Flow-based Reinforcement Fine-Tuning}
\author{Xiaofeng Tan$^{1,3, \dag, *}$\quad Jun Liu$^{3,\dag}$\quad Yuanting Fan$^{3}$\quad Bin-Bin Gao$^{3}$\quad Xi Jiang$^{2}$\\
Xiaochen Chen$^{3}$\quad Jinlong Peng$^{3}$\quad Chengjie Wang$^{3}$\quad Hongsong Wang$^{1,\ddag}$\quad Feng Zheng$^{2,\ddag}$\\
$^{1}$Southeast University\quad $^{2}$Southern University of Science and Technology\quad $^{3}$Tencent Youtu Lab
}

\makeatletter
\newcommand{\AppendixTocStart}{%
  \setcounter{tocdepth}{2}%
  \renewcommand{\l@section}{\@dottedtocline{1}{0em}{1.6em}}%
  \renewcommand{\l@subsection}{\@dottedtocline{2}{2.5em}{2.4em}}%
}
\makeatother
\newcommand{\printSupplementToc}{%
  \begingroup
  \renewcommand{\contentsname}{Contents of Supplementary Material}
  \setcounter{tocdepth}{-1}
  \tableofcontents
  \endgroup
}

\begin{document}

\twocolumn[{
\renewcommand\twocolumn[1][]{#1}
\maketitle
 \vspace{-0.6cm}
\centering
\includegraphics[width=\linewidth]{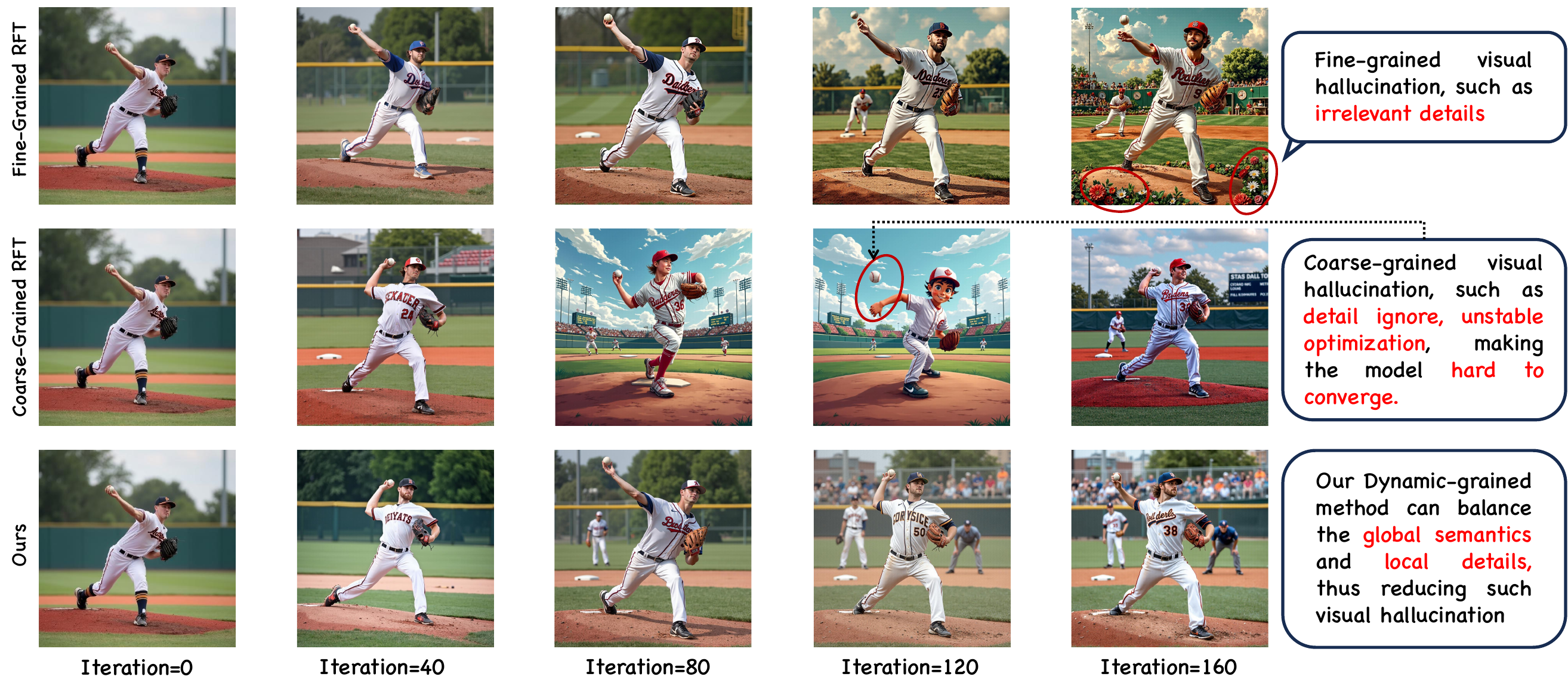} 
\captionsetup{type=figure}
 \vspace{-0.8cm}
\caption{The optimization process of existing flow-based RFT methods. We observe that existing methods only perform fine- or coarse-grained optimization, leading to visual hallucinations. \textit{Prompt: ``A baseball player pitching a baseball on a field."}}
 \vspace{0.5cm}
\label{fig:intro}
}]
\renewcommand{\thefootnote}{\fnsymbol{footnote}}%
\footnotetext[1]{Work done during Xiaofeng Tan's internship at Tencent Youtu Lab.}%
\footnotetext[2]{Equal contribution, $^{\ddag}$ Corresponding authors}%
\renewcommand{\thefootnote}{\arabic{footnote}}
\begin{abstract}
Reinforcement Fine-Tuning (RFT) on flow-based models is crucial for preference alignment. However, they often introduce visual hallucinations like over-optimized details and semantic misalignment. This work preliminarily explores why visual hallucinations arise and how to reduce them. We first investigate RFT methods from a unified perspective, and reveal the core problems stemming from two aspects, exploration and exploitation: (1) limited exploration during stochastic differential equation (SDE) rollouts, leading to an over-emphasis on local details at the expense of global semantics, and (2) trajectory imitation process inherent in policy gradient methods, distorting the model's foundational vector field and its cross-step consistency. Building on this, we propose ConsistentRFT, a general framework to mitigate these hallucinations. Specifically, we design a Dynamic Granularity Rollout (DGR) mechanism to balance exploration between global semantics and local details by dynamically scheduling different noise sources. We then introduce a Consistent Policy Gradient Optimization (CPGO) that preserves the model's  consistency by aligning the current policy with a more stable prior. Extensive experiments demonstrate that ConsistentRFT significantly mitigates visual hallucinations, achieving average reductions of 49\% for low-level  and 38\% for high-level perceptual hallucinations. Furthermore, ConsistentRFT outperforms other RFT methods on out-of-domain metrics, showing an improvement of 5.1\% (v.s. the baseline's decrease of -0.4\%) over FLUX1.dev. This is  \href{https://xiaofeng-tan.github.io/projects/ConsistentRFT}{Project Page}.
\end{abstract}
\vspace{-0.2cm}    
\section{Introduction}
\label{sec:intro}

Flow matching \cite{liu2022flow, esser2024scaling} models are prominent across image \cite{lipman2022flow}, video \cite{kong2024hunyuanvideo}, and robot manipulation \cite{zhang2024affordance}, yet often lack semantic or spatial  understanding~\cite{su2026generationenhancesunderstandingunified, pu2025dragging}. Reinforcement Fine-Tuning (RFT) \cite{fan2023dpok, wallace2024diffusion, yang2024using} effectively aligns diffusion models with complex semantic objectives \cite{liu2024alignment} using external reward models \cite{wu2023human}. To relax the deterministic ODE sampling in flow models \cite{karras2022elucidating, lipman2022flow, lu2022dpm}, recent work \cite{xue2025dancegrpo, liu2025flow} explores RFT by recasting sampling as an SDE \cite{dockhorn2021score, song2020score}.

Despite their successes, these methods \cite{li2025mixgrpo,wang2025pref,he2025tempflow,li2025branchgrpo,yu2025smart,wang2025coefficients,fu2025dynamic,zhou2025text} often struggle to generate consistent, realistic images due to \textit{visual hallucination} (see Fig. \ref{fig:intro_teaser}). This manifests as: (1) detail over-optimization \cite{zhang2024large}, including irrelevant details, over-sharpening, and  grid artifacts \cite{xue2025dancegrpo,dance_grpo_issue1}; (2) semantic misalignment with the prompt; and (3) cross-step inconsistency, characterized by semantic degradation and noticeable divergence under few-step sampling (see Fig. \ref{fig:exp_step_baseline}). Together, these issues limit the quality and reliability of current methods, particularly in out-of-domain evaluation.

\begin{figure}[t]
    \centering
    \includegraphics[width=\linewidth]{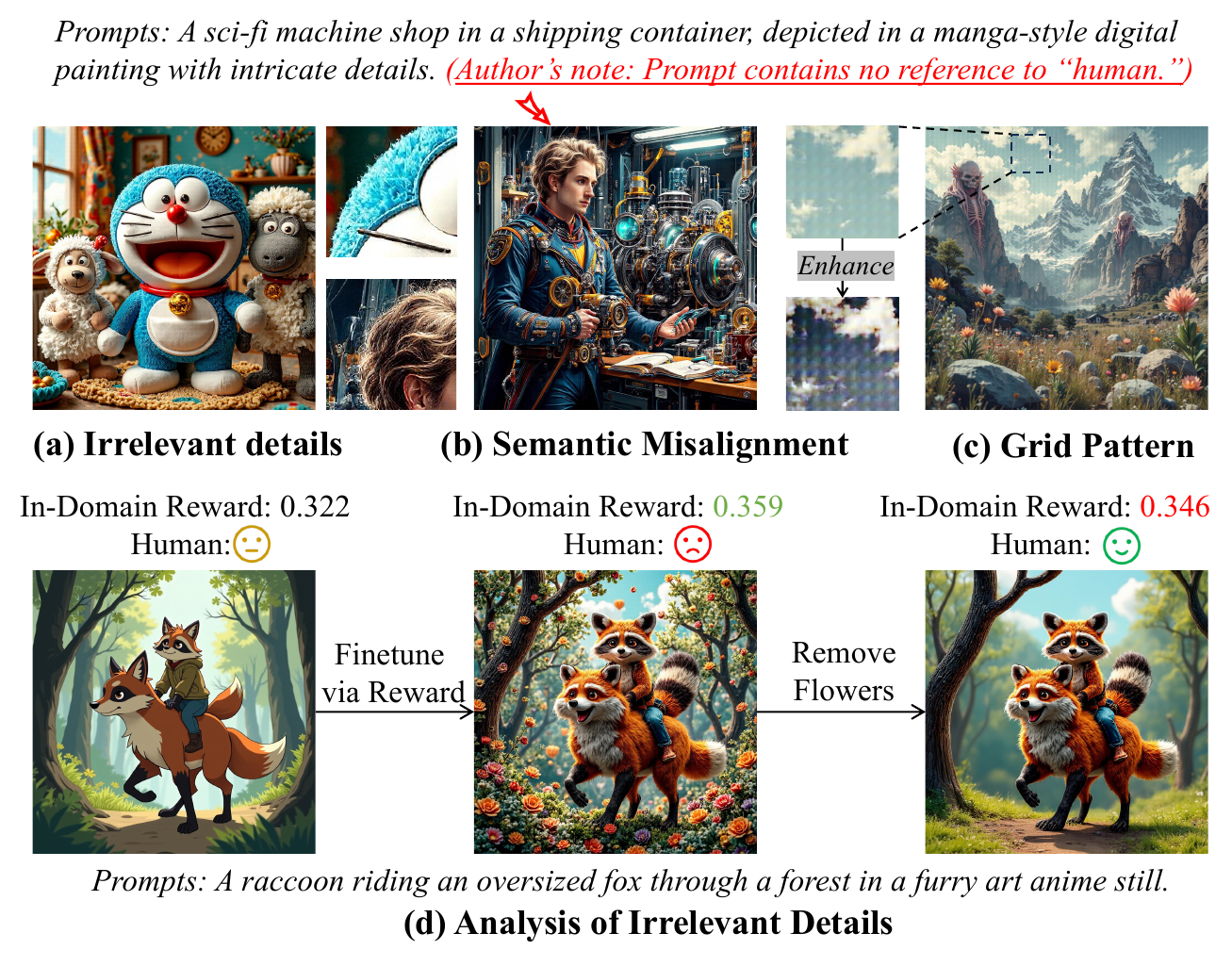}
    \vspace{-0.8cm}
    \caption{Visual hallucinations in flow-based reinforcement fine-tuning. We observe that (d) existing methods may learn a shortcut: boosting in-domain reward by injecting irrelevant details. We attribute this to a constrained exploration domain, discussed in Sec. \ref{sec:motivation}. See Fig.~\ref{supp:vh_large} for clearer over-optimization details.}
    \vspace{-0.4cm}
    \label{fig:intro_teaser}
\end{figure}

Although hallucinations have been extensively studied in large language models (LLMs) \cite{huang2025survey, zhang2025siren}, research on visual hallucinations remains limited. Prior works \cite{aithal2024understanding, oorloff2025mitigating, fu2025counting, kim2024tackling} primarily examine pretraining-phase hallucinations, which may arise from inappropriate smoothing of training data. In contrast to pretraining, reinforcement fine-tuning (RFT) operates without real-image data, rendering such explanations insufficient. This raises a central yet underexplored question: \textit{Why does reinforcement fine-tuning lead to, and even exacerbate, visual hallucinations in flow-based models?}

\noindent \textbf{Contributions.} In this work, we aim to mitigate visual hallucinations in flow-based models. To this end, we first analyze its potential causes and subsequently propose Semantic-Consistent Reinforcement Fine-Tuning (ConsistentRFT), a general framework designed for online RFT methods, including DDPO-, DPO-, and GRPO-like methods~\cite{black2023training, wallace2024diffusion, xue2025dancegrpo}. Our contributions are highlighted below.

Our first contribution is to analyze why visual hallucinations arise, focusing on two RL design choices: \textit{exploration} and \textit{exploitation}. \textbf{(1)} \textit{Exploration}: SDE rollouts yield a limited exploration domain and predominantly fine-grained preference feedback, which over-optimizes local details while neglecting global semantics. \textbf{(2)} \textit{Exploitation}: We reinterpret flow-based policy-gradient methods (DDPO, GRPO, DPO) as reward-weighted imitation of SDE-sampled trajectories, which distorts the consistent flow-matching vector field. This motivates \textit{how to reduce visual hallucinations in RFT methods}.

Building on this insight, we introduce Dynamic Granularity Rollout (DGR), which balances preference signals between global semantics and local details. DGR comprises inter-group and intra-group schedules. In the inter-group schedule, we steer the model toward fine- or coarse-grained optimization via dynamic exploration: progressive noise for fine-grained rollouts and initial noise for coarse-grained rollouts. In the intra-group schedule, we adopt a progress-aware rollout with clustering to maintain diversity while reducing computation. Together, these designs balance global and local information and mitigate detail over-optimization.

Third, we propose Consistent Policy Gradient Optimization (CPGO) for policy gradient methods (DPO, DDPO, GRPO) to mitigate inconsistencies arising from trajectory imitation. CPGO preserves flow-model consistency by maintaining single-step prediction fidelity: it aligns the current model's prediction at step $t$ with the old model’s prediction at step $(t\!-\!1)$. This constraint enables imitation of high-reward trajectories while retaining consistency.




Finally, we introduce a Visual Hallucination Evaluator (VH-Evaluator) to assess detail over-optimization and perceptual hallucinations. Extensive experiments demonstrate that our method achieves superior performance over state-of-the-art (SoTA) baselines. ConsistentRFT reduces low-level artifacts by 49\% and high-level hallucinations by 38\%, and improves out-of-domain comprehension on FLUX1.dev by +5.1\% (vs. -0.4\% for the baseline), while seamlessly integrating with DDPO and DPO.

\section{Related Work}

\noindent \textbf{Flow-based Model.} Flow matching~\cite{lipman2022flow, liu2022flow} provides a robust generative modeling paradigm by learning continuous normalizing flows that map a simple prior to a complex data distribution. Recent advances have scaled these models to large-scale text-to-image (T2I) synthesis~\cite{podell2023sdxl, kong2024hunyuanvideo, esser2024scaling, kolors, hidreami1technicalreport, wu2025qwenimagetechnicalreport, flux2024}, demonstrating strong performance across generative tasks. Nevertheless, despite these pre-training successes, systematic post-training, e.g., preference alignment, remains insufficiently explored.

\noindent \textbf{RFT in Flow-based Model.} RFT aligns generative models with human preferences~\cite{liu2024alignment}, semantic \cite{su2025safe, weng2025realign, tan2024sopo}, or safety \cite{su2025safe}. Early work~\cite{black2023training, fan2023dpok, wallace2024diffusion, yang2024using} introduced policy-gradient methods (DDPO, DPOK, DPO) to fine-tune diffusion models with reward signals~\cite{black2023training, fan2023dpok, wallace2024diffusion, yang2024using}. For flow-based models, recent methods such as DanceGRPO~\cite{xue2025dancegrpo} and Flow-GRPO~\cite{liu2025flow} convert deterministic ODE sampling into stochastic SDE rollouts, enabling tractable likelihood estimation. Building on this, concurrent studies advance RFT along several aspects~\cite{yu2025smart, wang2025coefficients, sheng2025understanding, zheng2025diffusionnft, zhou2025text, li2025mixgrpo, wang2025pref, li2025branchgrpo, fu2025dynamic, he2025tempflow}, for example, mixed ODE-SDE sampling, pairwise preference learning, and structured trajectory sampling. Related techniques also extend to AR-based generation~\cite{yuan2025argrpo, zhang2025group} and world models~\cite{ye2025reinforcement}. Further discussion is provided in App.~\ref{app:related_works}. Despite this progress, visual hallucinations persist. We conduct a preliminary analysis of their causes and introduce methods to mitigate them.

\section{Preliminaries: Unified Perspective on RFT}
To understand why visual hallucinations arise,we first establish a unified perspective on existing RFT methods.

\begin{figure*}[t]
\centering
\includegraphics[width=1\textwidth]{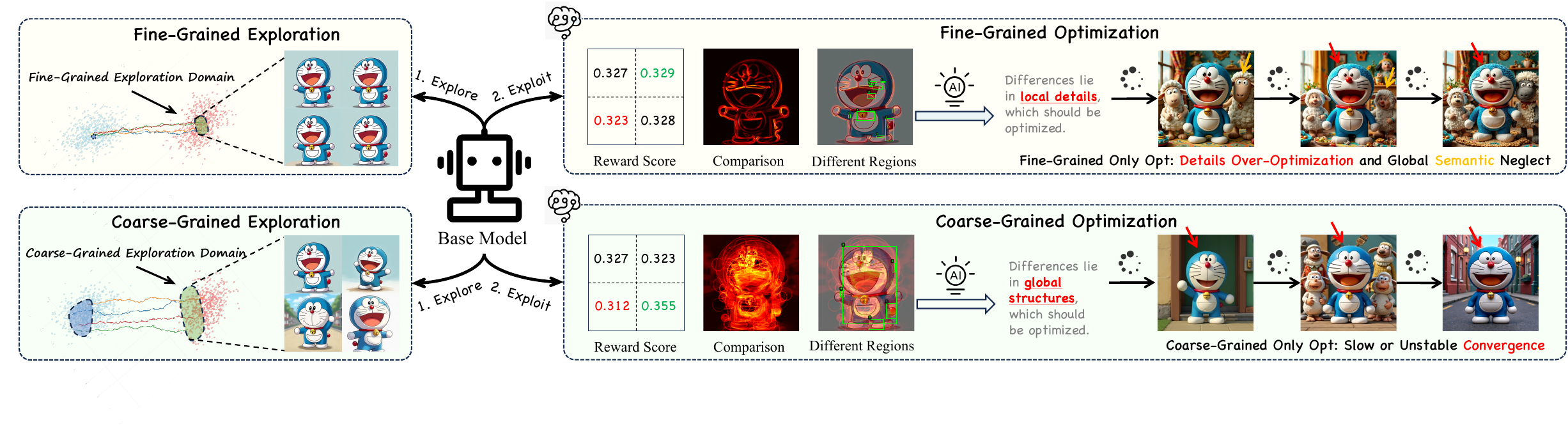}
\vspace{-0.4cm}
\caption{Relationship between exploration (rollout) and exploitation (optimization). We observe that the groups exhibiting coarse/fine-grained differences obtained during the rollout stage may steer the model toward coarse/fine-grained optimization. Ideally, the model should not focus solely on fine-grained details or on coarse-grained semantics. For more quantitative analysis, please refer to App.~\ref{app:com_fine_coarse_dynamic}.}
\vspace{-0.2cm}
\label{fig:motivation}
\end{figure*}

\subsection{Unified Paradigm: Exploration \& Exploitation}

RFT adapts a pretrained generative model using reward feedback. Given a pretrained flow model $\theta$, a condition dataset (e.g., prompts in T2I) $\mathcal{D}=\{c_i\}$, and a reward function $r(\mathbf{x},c)$, RFT seeks parameters $\theta'$ that maximize the reward $r(\mathbf{x},c)$ for samples $\mathbf{x}$ conditioned on $c$. Existing methods typically follow two stages:
(1) \emph{Exploration}: generate samples with the pretrained model on conditions from $\mathcal{D}$ and evaluate them using $r$;
(2) \emph{Exploitation}: update the model using the collected rewards under a specific optimization objective.
We describe these two stages below.
\noindent \textbf{Exploration.} 
Given a flow model $\theta$ and a dataset $\mathcal{D}$, the exploration stage is defined as:
\begin{equation}\small
\mathcal{G} = \{\mathbf{x}^k_0, ..., \mathbf{x}^k_{T-1}, \mathbf{x}^k_{T} \}_{i=1}^K = \pi_\theta(c,K),
\label{eq:exploration}
\end{equation}
where $\mathbf{x}_t^k$ denotes the $k$-th noised sample of the $t$-th step, $K$ is the number of sampled trajectories, $T$ denotes the number of sample steps, and $\pi$ denotes an ODE- or SDE-based sampler (e.g., DDIM~\cite{song2020denoising}, DDPM~\cite{ho2020denoising}).

To illustrate the sampling process, we introduce the SDE formulation in DanceGRPO~\cite{xue2025dancegrpo}:
\begin{equation}\scriptsize
\begin{cases}
\mathrm{d}\mathbf{x}_t = \Big(\mathbf{v}_t - \tfrac{1}{2}\varepsilon_t^2 \nabla\log p_t(\mathbf{x}_t)\Big)\mathrm{d}t + \varepsilon_t\,\mathrm{d}\mathbf{w}_t,\\
\mathbf{x}_0 = \mathbf{x}_T - \int_{0}^{T}\Big(\mathbf{v}_t - \tfrac{1}{2}\varepsilon_t^2 \nabla\log p_t(\mathbf{x}_t)\Big)\mathrm{d}t + \int_{0}^{T}\varepsilon_t\,\mathrm{d}\mathbf{w}_t.
\end{cases}\label{eq:sde}
\end{equation}
where $\mathbf{v}_t = v_\theta(\mathbf{x}_t, t)$ is the model's predicted velocity and $\varepsilon_t$ is the noise schedule. The score term is given by $\nabla\log p_t(\mathbf{x}_t) = \left(\mathbf{x}_t-\left(1-t\right)\cdot \hat{\mathbf{x}}_0\right)/t^2$, where $\hat{\mathbf{x}}_0 = \mathbf{x}_t - t\cdot \mathbf{v}_t$ is the clean sample from single-step prediction.

\noindent \textbf{Exploitation.} Given a flow model $\theta$, $K$ rollout samples with rewards $\{\mathcal{G}, r(\mathcal{G})\}$, exploitation stage aims to optimize their corresponding objective function $\mathcal{J}(\theta; \mathcal{G}, r(\mathcal{G}))$.

\subsection{Representative Methods}

\noindent \textbf{DPO.}
Online DPO~\cite{calandriello2024human, kim2025vip, lanchantin2025bridging} is a preference-based alignment method. In the exploration stage, it draws two candidates per condition via Eq.~\eqref{eq:exploration} (i.e., $K=2$). In the exploitation stage, it maximizes the preference margin relative with constraint from a frozen reference model $\theta_{\mathrm{ref}}$:
\begin{equation*}\small
  \begin{aligned}
& \mathcal{J}_{\mathrm{DPO}}(\theta, \mathcal{G})
= \,\mathbb{E}_{c\sim\mathcal{D},\mathbf{x}^{1}\succ \mathbf{x}^{2}, t\sim\mathcal{U}(0,T)}\\
   &\left[
   \log \sigma\!\left(
      \beta \log
      \left(
         \frac{p_{\theta}(\mathbf{x}^{1}_t\mid \mathbf{x}^{1}_{t-1}, c)}{p_{\theta_{\mathrm{ref}}}(\mathbf{x}^{1}_t\mid \mathbf{x}^{1}_{t-1}, c)}-
         \frac{p_{\theta}(\mathbf{x}^{2}_t\mid \mathbf{x}^{2}_{t-1}, c)}{p_{\theta_{\mathrm{ref}}}(\mathbf{x}^{2}_t\mid \mathbf{x}^{2}_{t-1}, c)}
      \right)
   \right)
   \right],
\label{eq:dpo}
\end{aligned}
\end{equation*}
where $\sigma(z)=\tfrac{1}{1+e^{-z}}$ and $\beta>0$ is a temperature parameter.

\noindent \textbf{DDPO.} As a classical RL method, DDPO~\cite{black2023training, fan2023dpok} optimizes the generative model via policy gradients using $K$ rollouts sampled by Eq.~\eqref{eq:exploration}, denoted as:
\begin{equation}\small
\begin{aligned}
\nabla_{\theta} \mathcal{J}_{\mathrm{DDPO}}(\theta, \mathcal{G})
&=  \mathbb{E}_{\substack{c\sim\mathcal{D},\; \mathbf{x}^{1:K}, t\sim\mathcal{U}(0,T),\; k\sim\mathcal{U}(1,K)}}\\
& \nabla_{\theta} \left[r(\mathbf{x}^k_0,c) \cdot
    \log p_{\theta}(\mathbf{x}^{k}_t \mid \mathbf{x}^{k}_{t-1}, c)
  \right].
\end{aligned}
\label{eq:ddpo}
\end{equation}

\noindent \textbf{GRPO.} By estimateing  advantages  $\mathcal{A}^{k}$ by a group of $K$ rollouts, GRPO~\cite{guo2025deepseek} optimizes a clipped surrogate:
\begin{equation}\small
\begin{aligned}
\mathcal{J}_{\mathrm{GRPO}}&(\theta, \mathcal{G})
= \mathbb{E}_{\substack{c\sim\mathcal{D},\, \mathbf{x}^{1:K}, t\sim\mathcal{U}(0,T),\, k\sim\mathcal{U}(1,K)}} \\
&\Big[
\min\!\big( \rho_{\theta}^{k,t}\,\mathcal{A}^{k},\;
\mathrm{clip}(\rho_{\theta}^{k,t},\,1-\varepsilon,\,1+\varepsilon)\,\mathcal{A}^{k} \big)
\Big],
\end{aligned}
\end{equation}
where advantages $\mathcal{A}^{k}$ and $\rho_{\theta}^{k,t}$ are computed as follows:
\begin{equation*}\small
\begin{aligned}
\mathcal{A}^{k} \,&=\, \frac{r(\mathbf{x}^{k}_0,c)-\mu(r(\mathbf{x}^{1:K},c))}{\sigma(r(\mathbf{x}^{1:K},c))},\;
\rho_{\theta}^{k,t} \,=\, 
\frac{p_{\theta}(\mathbf{x}^{k}_t \mid \mathbf{x}^{k}_{t-1}, c)}
     {p_{\theta_{\mathrm{old}}}(\mathbf{x}^{k}_t \mid \mathbf{x}^{k}_{t-1}, c)},
\end{aligned}\label{eq:grpo}
\end{equation*}
and $\mu(r(\mathbf{x}^{1:K},c))$ and $\sigma(r(\mathbf{x}^{1:K},c))$ are the mean and standard deviation of the reward value of group samples $\mathbf{x}^{1:K}$.

\noindent \textbf{Summary.} These methods share an SDE-based exploration via rollouts but differ in their exploitation strategies. Intuitively, samples generated during exploration serve as the informational basis for exploitation, thereby shaping the model's optimization behavior. This view motivates our investigation into the causes of visual hallucinations.

\section{Motivation: Why do  Hallucinations Arise?}
\label{sec:motivation}

Here, we preliminarily study visual hallucinations from rollout and optimization: (i) limited rollout diversity that overemphasizes fine-grained details and neglects global semantics; and (ii) imitation of SDE-sampled trajectories that disrupts velocity consistency in flow-based models.

\subsection{Exploration: Limited Exploration Domain}
\label{sec:exploration}

RFT methods optimize generative models by exploiting the variation among rollout samples and their rewards. For example,  DPO constructs preference pairs and maximize their probability margin, whereas GRPO exploits group-wise discrepancies with normalized advantages to emphasize high-reward trajectories. Consequently, rollout design is a primary determinant of the optimization dynamics.

\noindent \textbf{Discussion.} Ideally, the model should generate rollouts that exhibit both global semantic diversity and fine-grained variation, with reward signals that faithfully capture both. \textit{However, we observe that existing methods\footnote{Some works, such as Flow-GRPO, perform coarse-grained optimization for sparse rewards (e.g., object count). They are discussed in App.~\ref{app:coarse_grained_optimization}.} exhibit fine-grained optimization due to the limited diversity induced solely by SDE process.} As shown in Fig.~\ref{fig:motivation} (see \textit{Fine-Grained Optimization} Region), methods such as DanceGRPO~\cite{xue2025dancegrpo,li2025mixgrpo,li2025branchgrpo}, which use the same noise initialization but varying process noise, often produce highly similar images within the same group. While this facilitates learning fine details, it ignores global semantics. 

Thus, optimizing the model within such a limited exploration domain may cause the model to \textit{suffer from overemphasizing fine-grained details while ignoring global semantics}. More importantly, reward models trained on discrete preference data \cite{xu2023imagereward,wang2025unified} (e.g., preferred vs. unpreferred) typically induce non-smooth response surfaces~\cite{aithal2024understanding, oorloff2025mitigating, fu2025counting, kim2024tackling}. Fine-tuning model under such fine-grained feedback risks model failing to the local optima that exhibit high reward-value but poor visual quality (See Fig.~\ref{fig:motivation} \textit{Fine-Grained Optimization} Region).

\subsection{Exploitation: Trajectory Imitation}
To further understand the optimization objectives of RFT methods, we reinterpret them as \textit{trajectory imitation} by analyzing their gradients. Taking GRPO as a case study, we state our main result in Corollary \ref{thm:reinterpretation}; its proof is in App. \ref{app:theoretic:cannyor}. Analogous results for DDPO and DPO appear in App. \ref{app:theoretic:cannyor_dpo}.

\begin{corollary}[Reinterpretation of GRPO]\label{thm:reinterpretation}
Given a flow model $\theta$, a reward model $r(\mathbf{x},c)$, and trajectories sampled via SDE in Eq.~\eqref{eq:exploration}, the gradient of the GRPO satisfies
\begin{equation*}\scriptsize
\begin{split}
\nabla_{\theta} \mathcal{J}_\mathrm{GRPO} = \begin{dcases}
  0, \quad \quad \quad \quad \quad \quad \quad \quad \quad \quad \quad \text{if } (u_1 \wedge v_1) \vee  (u_2 \wedge \neg v_1), \\
  \nabla_{\theta} \mathbb{E}\!\left[ -\omega(t)\cdot \mathcal{A}^k \cdot \left\| \mu_{\theta}(\mathbf{x}_{t}, t, \mathbf{c}) - \mathbf{x}_{t-1} \right\|_2^2 \right], \text{else.}
\end{dcases}
\end{split}
\end{equation*}
where $\mu_\theta(\cdot)$ denotes the mean value of predicted distribution of $p_\theta (\mathbf{x}_t \mid \mathbf{x}_{t-1})$, $\mathbb{I}(\cdot)$ is the indicator function, and conditions $u_1$ and $u_2$ are defined as follows:
\begin{equation*}\scriptsize
\begin{aligned}
  u_1 \leftarrow \mathbb{I} \big(\mathbf{\rho}_{t,i} \leq 1-\epsilon\big),\quad
  u_2 \leftarrow \mathbb{I} \big(\mathbf{\rho}_{t,i} \geq 1+\epsilon\big),\quad
  v_1 \leftarrow \mathbb{I} \big(\mathcal{A}^k < 0\big).
\end{aligned}
\end{equation*}
\end{corollary}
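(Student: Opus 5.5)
The argument has two parts. First we use the clipping structure to find where the GRPO gradient vanishes. Then, on the remaining region, we use the Gaussian form of the SDE transition kernel to rewrite the gradient of $\rho$ as the gradient of a weighted squared distance.

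\textbf{Step 1 (clipping cases).} Fix a sample $k$ and step $t$, and write $\rho=\rho_\theta^{k,t}$ and $\mathcal{A}=\mathcal{A}^k$. We analyze $f(\rho)=\min(\rho\mathcal{A},\mathrm{clip}(\rho,1-\epsilon,1+\epsilon)\mathcal{A})$ pointwise.
\begin{itemize}
\item If $\mathcal{A}>0$, then $f=\min(\rho,\mathrm{clip}(\rho))\mathcal{A}$. For $\rho\ge 1+\epsilon$ the minimum is the clipped constant $(1+\epsilon)\mathcal{A}$, so $\nabla_\theta f=0$. For $\rho<1+\epsilon$ the minimum equals $\rho\mathcal{A}$, including when $\rho\le 1-\epsilon$, because there $\rho\le\mathrm{clip}(\rho)$.
\item If $\mathcal{A}<0$, the roles swap. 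For $\rho\le 1-\epsilon$ the minimum is the constant $(1-\epsilon)\mathcal{A}$, so the gradient is $0$. Otherwise $f=\rho\mathcal{A}$.
\end{itemize}
The zero region is therefore exactly $(u_1\wedge v_1)\vee(u_2\wedge\neg v_1)$. The case $\mathcal{A}=0$ is harmless, since both branches give zero there. In the complementary region, $\nabla_\theta f=\mathcal{A}\,\nabla_\theta\rho=\mathcal{A}\,\rho\,\nabla_\theta\log p_\theta(\mathbf{x}_t\mid\mathbf{x}_{t-1},c)$. Here the clipping boundary has measure zero, and we ignore it.

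\textbf{Step 2 (Gaussian kernel).} Discretize Eq.~\eqref{eq:sde} with Euler--Maruyama. One step gives a Gaussian transition, with mean $\mu_\theta$ and covariance $\sigma_t^2 I$:
\[
p_\theta(\cdot\mid\cdot)=\mathcal{N}\big(\mu_\theta(\mathbf{x}_t,t,\mathbf{c}),\sigma_t^2 I\big).
\]
The mean $\mu_\theta$ is an affine combination of $\mathbf{x}_t$, $\mathbf{v}_t$ and $\hat{\mathbf{x}}_0$, using the stated score formula. The covariance $\sigma_t^2 I$, with $\sigma_t$ determined by $\varepsilon_t$ and the step size, is independent of $\theta$. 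Hence
\[
\log p_\theta=-\tfrac{1}{2\sigma_t^2}\|\mu_\theta-\mathbf{x}_{t-1}\|_2^2+\text{const},
\]
where the sampled next state is denoted $\mathbf{x}_{t-1}$, following the corollary's indexing. The normalizing constant does not depend on $\theta$, so
\[
\nabla_\theta\log p_\theta=\nabla_\theta\Big[-\tfrac{1}{2\sigma_t^2}\|\mu_\theta-\mathbf{x}_{t-1}\|_2^2\Big].
\]

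\textbf{Step 3 (collect the weight).} Combining the two steps,
\[
\nabla_\theta f=\nabla_\theta\big[-\omega(t)\,\mathcal{A}\,\|\mu_\theta-\mathbf{x}_{t-1}\|^2\big],\qquad \omega(t)=\frac{\rho}{2\sigma_t^2}.
\]
The factor $\rho$ inside $\omega(t)$ is treated as a stop-gradient weight. This is legitimate at the gradient level, because $\nabla(\rho\mathcal{A})=\rho\mathcal{A}\nabla\log p$ with $\rho$ simply multiplying. In the on-policy first step $\rho=1$, so $\omega(t)=1/(2\sigma_t^2)$ exactly. Finally, we take the expectation over $c$, $k$ and $t$ and exchange it with $\nabla_\theta$. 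The exchange is valid because the sampling distribution comes from $\theta_{\mathrm{old}}$, not $\theta$.

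\textbf{Main obstacle.} The delicate step is Step 3's interpretation of $\omega(t)$. The corollary presents the gradient as that of an expectation, but $\rho$ depends on $\theta$. So the claim is an identity of gradients with $\omega$ frozen, not of objectives. I would state this convention explicitly, noting that $\omega(t)\ge 0$, so the sign of $\mathcal{A}^k$ determines whether training pulls toward the sampled trajectory (imitation) or pushes away from it.

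A secondary check is needed on the $\theta$-independence of the variance in DanceGRPO's discretization. If $\varepsilon_t$ is a fixed schedule, as stated, this holds.
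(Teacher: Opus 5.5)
Your proof is correct and follows the paper's skeleton: the case split on the clipped surrogate, the Gaussian Euler--Maruyama kernel with $\theta$-independent variance, then collecting constants. Where you differ is in how you handle the importance ratio.

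The paper writes $\mathbb{E}_{\pi_{\theta_{\mathrm{old}}}}[\rho\,\mathcal{A}^k]=\int \mathcal{A}^k\,p_\theta$, which it calls a ``change of measure to the uniform distribution''. It then applies the log-derivative trick and ends with an expectation under the current policy $\pi_\theta$. The ratio disappears, so $\omega(t)=1/(2\eta^2\Delta t)$ is a genuine function of $t$ alone.

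You instead differentiate $\rho$ directly via $\nabla_\theta\rho=\rho\,\nabla_\theta\log p_\theta$. You keep the expectation under $\pi_{\theta_{\mathrm{old}}}$ and absorb $\rho$ into a frozen weight $\omega=\rho/(2\sigma_t^2)$.

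The two expressions agree, since $\mathbb{E}_{\pi_{\theta_{\mathrm{old}}}}[\rho\,g]=\mathbb{E}_{\pi_\theta}[g]$, but each route gives something different:
\begin{itemize}
\item The paper's form matches the $t$-only notation $\omega(t)$ even off-policy. However, its sampling measure depends on $\theta$, so the gradient must sit inside the expectation, as in the appendix restatement. It cannot sit outside, as the main-text statement writes it.
\item Your form has a $\theta$-independent measure, so pulling $\nabla_\theta$ outside is legitimate. The price is a sample-dependent stop-gradient weight that reduces to $1/(2\sigma_t^2)$ only at $\rho=1$.
\end{itemize}

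Your per-sample case analysis is also tighter than the paper's. The paper simply asserts the unclipped form ``in the typical stable-training regime''. It then performs the change of measure on the unclipped objective as a whole and re-attaches the case split at the end. You make the zero region exact per sample, and you handle $\mathcal{A}^k=0$ and the measure-zero clipping boundary explicitly.
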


\noindent \textbf{Reinterpretation.}
Corollary~\ref{thm:reinterpretation} shows that GRPO effectively reinterpret as trajectory imitation: its optimization is equivalent, in gradient, to a reward-weighted trajectory-imitation objective. Specifically, the optimization encourages the predicted velocity field $\mathbf{v}_{\theta}$ to match the trajectory of the sampled SDE trajectory $\mu_{\theta}(\mathbf{x}_{t}^k) \approx \mathbf{x}_{t-1}^k$(See 
blue region in Fig.~\ref{fig:method_main}), weighted by the advantage $\mathcal{A}^k$. This indicates that the model tend to imitate trajectories with high reward ($\mathcal{A}^k > 0$), while forget trajectories with low reward ($\mathcal{A}^k < 0$).


\begin{figure}[t]
    \centering
    \includegraphics[width=1.0\linewidth]{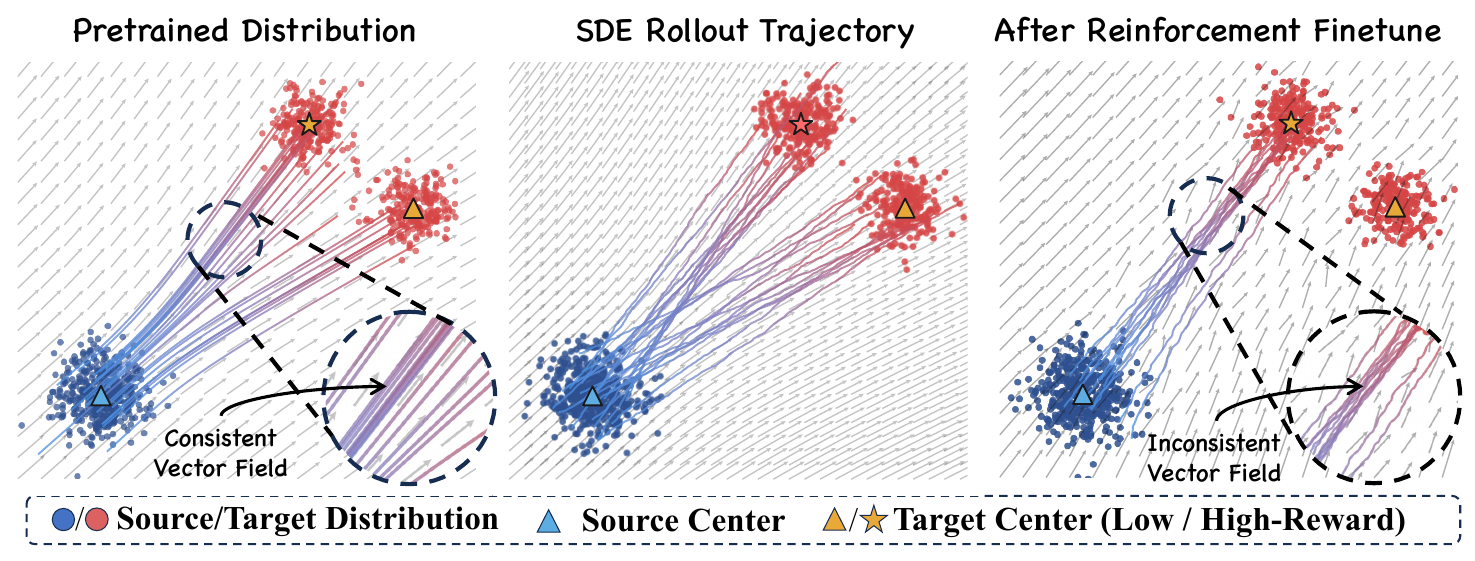}
    \caption{Toy example of trajectory imitation optimization in policy gradient methods. } 
    \vspace{-0.2cm}
    \label{fig:motivation_flow}
    \vspace{-0.3cm}
\end{figure}

\noindent \textbf{Discussion.} However, this post-training objective creates a fundamental conflict with the pre-training goal. For flow models, pre-training aims to learn a consistent velocity field that defines straight, deterministic trajectories, whereas GRPO-based fine-tuning compels the model to imitate stochastic, non-linear SDE trajectories. As a toy example in Fig.~\ref{fig:motivation_flow} illustrates, this conflict disrupts the learned velocity consistency, leading a inconsistent results across different sampling steps. More importantly, uncritically imitating the results of SDE may further aggravate the hallucinations discussed above in Sec. \ref{sec:exploration}.

\section{Method: How to Reduce Hallucinations?}
\begin{figure*}[t]
\centering
\includegraphics[width=1\textwidth]{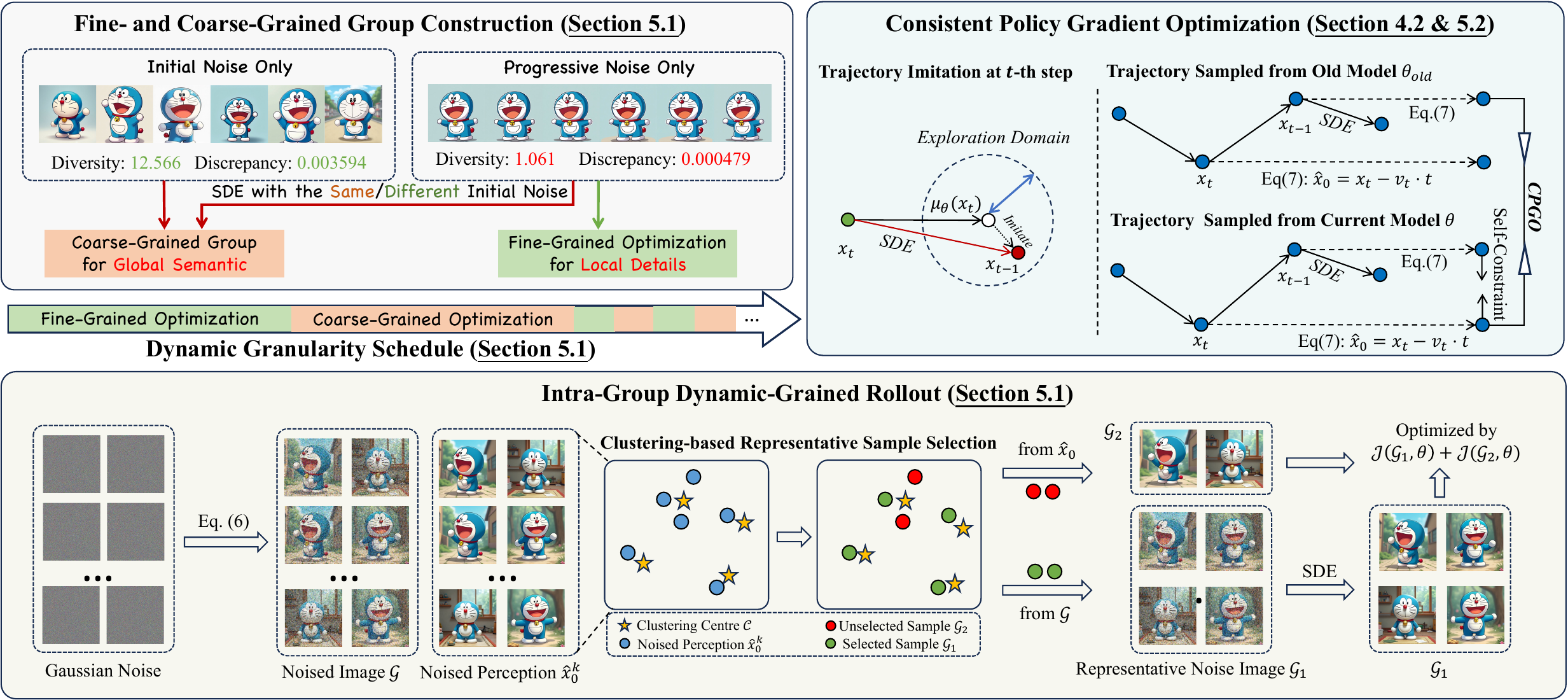}
\vspace{-0.3cm}
\caption{Overview of ConsistentRFT, consisting of  (i) Dynamic Granularity Rollout, and (ii) Consistent Policy Gradient Optimization.}
\label{fig:method_main}
\vspace{-0.3cm}
\end{figure*}

Here, we introduce our proposed ConsistentRFT, which comprises two core components designed to address the aforementioned causes of hallucinations. Specifically, (i) \textit{Dynamic Granularity Rollout} targets the {Limited Exploration Domain}, and (ii) \textit{Consistent Policy Gradient Optimization} resolves the issues arising from {Trajectory Imitation Optimization}.

\subsection{Exploration: Dynamic Granularity Rollout}

To overcome the mismatch between probabilistic objectives and deterministic dynamics, existing RFT methods inject stochasticity via SDE-based samplers to obtain the likelihood. SDE in Eq.~\eqref{eq:sde} indicates that rollout starts from a initialization $\mathbf{x}_T$ and further injects randomness via the Brownian term, yielding two sources of stochasticity: (i) the initial noise $\mathbf{x}_T$, and (ii) the progressive SDE noise $\mathbf{w}_t$. 


Our Dynamic Granularity Rollout starts from \textit{a key observation}: initial latent perturbations largely determine global semantics, while progressive perturbations refine local details (see Fig.~\ref{fig:motivation} and Fig.~\ref{fig:method_main}). Quantitative validation is provided in App.~\ref{app:initial_noise_analysis}, consistent with observations in~\cite{bai2024zigzag, Zhou_2025_ICCV, chen2024find}. Motivated by this observation, we introduce DGR with both (1) Intra-Group and (2) Inter-Group setting.

\noindent \textit{\textbf{(1) Inter-Group Dynamic-Grained Rollout.} }
Our goal is to enable the RFT method to perceive both global semantics and local details. Motivated by the above observation, we suggest achieving this dynamic schedule the ``fine-grained'' and ``coarse-grained'' groups to guide the model focus on such details and semantics, as shown in Fig.~\ref{fig:motivation}.

\noindent \textbf{Fine- and Coarse-Grained Group.} To obtain the fine- and coarse-grained groups, we leverage the different behavior between progressive and initial noise, as follows: 
\begin{equation}\scriptsize
        \{\mathbf{x}_T^k\}_{k=1}^K = 
        \begin{cases}
            \{\mathbf{x}_T^k\mid \mathbf{x}_T^k \stackrel{\text{i.i.d.}}{\sim}  \mathcal{N}(\mathbf{0}, \mathbf{I})\}, & \text{if Coarse-Grained} , \\
            \{\mathbf{x}_T^k\mid \mathbf{x}_T^k \leftarrow \left(\mathbf{x}_T \sim \mathcal{N}\left(\mathbf{0}, \mathbf{I}\right)\right)\}, & \text{else}.
        \end{cases}
\end{equation}
By contrasting samples within such groups, this method enables models to perceive global semantics and local details.

\noindent \textbf{Dynamic Granularity Schedule.} To mitigate hallucinations from fixed-granularity optimization, we introduce the Dynamic Granularity Schedule that balances global semantics and local details, as shown in Fig.~\ref{fig:method_main}. For an optimization of $N$ iterations, we partition training into $N/j$ periods, with $j = j_g + j_c$ iterations, each comprising $j_g$ fine-grained steps followed by $j_c$ coarse-grained steps. This strategy balances global structure and fine-grained detail, as shown in Fig.~\ref{fig:intro_teaser}.

\noindent \textit{\textbf{(2) Intra-Group Dynamic-Grained Rollout.}}
Although Inter-Group DGS enables both coarse- and fine-grained optimization, limited diversity in fine-grained groups leads to redundant computation and hallucinations. To this end, we propose Intra-Group Dynamic Granularity Rollout, which leverages the single-step prediction to coarsely assess intermediate states, enabling sample diversity enhancement.

\noindent \textbf{Coarse Progress Perception.} For each rollout, we first perform $t_{\text{s}}$ steps of sampling from $K$ initialized noise  $\mathbf{x}_T^k$. We then apply a single-step sampling at the intermediate states for coarse perspectives. This procedure is given by:
\begin{equation}\scriptsize
    \begin{aligned}
\begin{cases}
\mathbf{x}_{t_\text{s}}^k = \mathbf{x}_T^k - \int_{t_\text{s}}^{T} \left(\mathbf{v}_t^k - \frac{1}{2} \varepsilon_t^2 \nabla\log p_t(\mathbf{x}_t^k)\right) \mathrm{d}t + \int_{t_\text{s}}^{T} \varepsilon_t \; \mathrm{d}\mathbf{w}_t^k,\\
\hat{\mathbf{x}}_{0}^k = \mathbf{x}_{t_\text{s}}^k - \mathbf{v}_{t_\text{s}}^k \cdot t_\text{s}.
\end{cases}
    \end{aligned}
\end{equation}
Owing to the optimal transport in flow models, intermediat state perception at $t_{\text{s}}$ highly correlates with full-sampling outcomes. Quantitative results are in App.~\ref{app:progressive_perception}.


\noindent \textbf{Cluster-Based Selection \& Fine-Grained Refinement.} Building on intermediate-state perception, we seek representative samples to further enhance diversity for reducing hallucinations and computation. A naïve approach is to form a new group by selecting the $N$ highest- and lowest-reward samples; however, these selections often exhibit high similarity (e.g., the top-$N$ images are similar), yielding limited diversity. Thus, we adopt a clustering-based selection to obtain representative and diverse groups.

Our method (as shown in Fig.~\ref{fig:method_main}) initializes with $K$ Gaussian noises. After $t_{\text{s}}$ denoising steps, we obtain intermediate states $\{\hat{\mathbf{x}}_{t_{\text{s}}}^k\}_{k=1}^K$, which are clustered in the latent space to produce $N$ centers $\mathcal{C}$ ($N<K$). We then form the representative set $\mathcal{G}_1$ by selecting the $N$ samples nearest to $\mathcal{C}$ and define the remainder as $\mathcal{G}_2$. Denoising is resumed for $\mathcal{G}_1$ to complete trajectories. This yields two subsets: $\mathcal{G}_1$, representative, detail-rich, with completed trajectories, and, $\mathcal{G}_2$, coarsely intermediate perception. See details in App. \ref{app:clustering_selection}.

By optimizing such representative groups, our rollout preserve diversity with fewer samples and offers three benefits: (i) enhanced diversity to prevents fine-grained hallucination; (ii) reduced computation by selecting a representative $\mathcal{G}_1$ from intermediate state; and (iii) stronger early coarse-grained optimization, as both groups capture early steps~\cite{XieDyMO} and reinforce them via dual groups optimization.


\subsection{Exploitation: CPGO}
As discussed above, existing RFT methods can exacerbate visual hallucinations and compromise the intrinsic consistency of flow models. Hence, we propose Consistent Policy Gradient Optimization, which enforces consistency via ODE-based (rather than SDE-based) predictions.

As shown in Fig.~\ref{fig:method_main}, our core insight is to enforce single-step consistency via two complementary ways: (i) imitation of a model with better consistency (e.g., the old model in Eq.~\eqref{eq:sde}), and (ii) temporal self-consistency (e.g., aligning results at step $t$ with those of step $t$-1). We detail the formulation and provide theoretical justification below.

\noindent \textbf{Single-Step Prediction.} Flow models enable single-step prediction: clean samples can be recovered from noisy states in one-step first-order ODE Euler solver:
\begin{equation}\small
\begin{aligned}
\pi_\theta^\mathrm{ODE} (\mathbf{x}_t, t): \mathbf{x}_{0} &= \mathbf{x}_t -  t \cdot v_\theta(\mathbf{x}_t, t). \\
\end{aligned}
\label{eq:euler}
\end{equation}
No extra overhead is incurred, as it reuses the velocity $v_\theta(\mathbf{x}_t, t)$ from SDE sampling (see App.~\ref{app:overall}).

\noindent \textbf{Consistent Policy Gradient Optimization.} We enforce consistency via imitation and temporal self-consistency. A naive consistency-style formulation~\cite{song2023consistency} is constrained by (i) target proximity to the current policy and (ii) reliability of teacher predictions.  We therefore define:
\begin{equation*}\scriptsize
\begin{aligned}
&\mathcal{J}_{\mathrm{ConsistentRFT}}(\theta, \mathcal{G}) =  \mathcal{J}_{\mathrm{GRPO}}(\theta, \mathcal{G}) + \omega \cdot \mathcal{J}_{\mathrm{CPGO}}(\theta, \mathcal{G})\\
&\mathcal{J}_{\mathrm{CPGO}}(\theta, \mathcal{G}) =  - \mathbb{E}_{t \in \mathcal{U}(\tau,0)}  \left[\Vert  \pi_\theta^\mathrm{ODE} (\mathbf{x}_t^k, t) -  \pi_{\theta_\text{old}}^\mathrm{ODE} (\mathbf{x}_{t-1}^k, t-1)  \Vert_2^2\right] 
\end{aligned}
\end{equation*}
where $\tau$ is a threshold to exclude unreliable results from high-noise samples, and $\omega$ is the weight parameter.

This method ensures stability with cross-step targets from an old policy close to the current one, and reliability by applying filtering unreliable predictions. For theoretical justification, its effectiveness is provided in App.~\ref{app:theoretic:cpgo}.

In summary, CPGO enhances consistency by constraining RFT along the ODE trajectory, preventing the model from imitating hallucinated samples interpolated from the pre-trained distribution~\cite{aithal2024understanding, oorloff2025mitigating, fu2025counting, kim2024tackling}.

\subsection{Evaluation: Visual Hallucination Evaluator}

Existing benchmarks (e.g., preference~\cite{ma2025hpsv3}, personalized~\cite{peng2024dreambench++, liu2025f}, compositional~\cite{huang2025t2i, ghosh2023geneval}) overlook visual hallucinations, especially over-optimization. We introduce VH-Evaluator, combining objective low-level metrics  with pre-trained MLLM-based high-level assessment~\cite{Qwen-VL}. Further discussion and the  prompt appear in App.~\ref{app:vh_evaluator}.

\section{Experiments}
\label{sec:experiments}

\subsection{Experimental Setup}

\begin{table*}[t]
\centering
\caption{Comparison with SoTA RFT methods. \textcolor{gray}{Gray text} denotes in-domain metrics. Best results are highlighted in \textbf{bold}.}
\vspace{-0.2cm}
\label{tab:main_results}
\resizebox{\textwidth}{!}{%
\small
\begin{tabular}{lc@{\hspace{0.3cm}}c@{\hspace{0.3cm}}cc@{\hspace{0.3cm}}c@{\hspace{0.3cm}}c@{\hspace{0.3cm}}ccc}
\toprule
\multirow{2}{*}{Method} & \multirow{2}{*}{Reward} & \multicolumn{3}{c}{Human Preference} & \multicolumn{1}{c}{Aesthetic} & \multicolumn{2}{c}{Semantic} & \multicolumn{2}{c}{Comprehensive}  \\
\cmidrule(lr){3-5} \cmidrule(lr){6-6} \cmidrule(lr){7-8} \cmidrule(lr){9-10} 
 & & {HPSv2.1} & ImageReward & PickScore & Aes.Pred.v2.5 & CLIP & Unified Reward-S & Unified Reward & Avg. \\
\midrule
Flux Dev~\cite{flux2024} (\textit{Base}) & - & \textcolor{gray!50}{{0.312}} & 1.089 & 0.226 & 5.837 & 0.388 & 3.365 & 3.520 & 2.105 \\
\midrule
\multicolumn{9}{c}{\textit{DDPO-Based Methods}} \\
\cmidrule(lr){1-10}
DDPO~\cite{black2023training} & \textit{HPSv2.1} & \textcolor{gray!50}{\textbf{0.313$_{	\textcolor{blue!60}{+0.3\%}}$}} & 1.129$_{	\textcolor{blue}{+3.7\%}}$ & 0.227$_{	\textcolor{blue}{+0.4\%}}$ & 5.896$_{	\textcolor{blue}{+1.0\%}}$ & 0.391$_{	\textcolor{blue}{+0.8\%}}$ & 3.387$_{	\textcolor{blue}{+0.7\%}}$ & 3.568$_{	\textcolor{blue}{+1.4\%}}$ & 2.130$_{	\textcolor{blue}{+1.2\%}}$ \\
\rowcolor{gray!20}w/ ConsistentRFT & \textit{HPSv2.1} & \textcolor{gray!50}{{0.324$_{	\textcolor{blue!60}{+3.8\%}}$}} & \textbf{1.197$_{	\textcolor{blue}{+9.9\%}}$} & \textbf{0.228$_{	\textcolor{blue}{+0.9\%}}$} & \textbf{5.998$_{	\textcolor{blue}{+2.8\%}}$} & \textbf{0.396$_{	\textcolor{blue}{+2.1\%}}$} & \textbf{3.476$_{	\textcolor{blue}{+3.3\%}}$} & \textbf{3.593$_{	\textcolor{blue}{+2.1\%}}$} & \textbf{2.173$_{	\textcolor{blue}{+3.2\%}}$} \\
\midrule
\multicolumn{9}{c}{\textit{DPO-Based Methods}} \\
\cmidrule(lr){1-10}
Diffusion-DPO~\cite{wallace2024diffusion} & \textit{HPSv2.1} & \textcolor{gray!50}{{0.318$_{\textcolor{blue!60}{+1.9\%}}$}} & 1.177$_{\textcolor{blue}{+8.1\%}}$ & 0.230$_{\textcolor{blue}{+1.8\%}}$ & 5.967$_{\textcolor{blue}{+2.2\%}}$ & 0.393$_{\textcolor{blue}{+1.3\%}}$ & 3.424$_{\textcolor{blue}{+1.8\%}}$ & 3.528$_{\textcolor{blue}{+0.2\%}}$ & 2.148$_{\textcolor{blue}{+2.0\%}}$ \\
D3PO~\cite{yang2024using} & \textit{HPSv2.1} & \textcolor{gray!50}{{\textbf{0.338$_{\textcolor{blue!60}{+8.3\%}}$}}} & 1.185$_{\textcolor{blue}{+8.8\%}}$ & 0.227$_{\textcolor{blue}{+0.4\%}}$ & 5.849$_{\textcolor{blue}{+0.2\%}}$ & 0.379$_{\textcolor{red}{-2.3\%}}$ & 3.396$_{\textcolor{blue}{+0.9\%}}$ & 3.507$_{\textcolor{red}{-0.4\%}}$ & 2.126$_{\textcolor{blue}{+1.0\%}}$ \\
\rowcolor{gray!20}w/ ConsistentRFT & \textit{HPSv2.1} & \textcolor{gray!50}{\textbf{0.334$_{\textcolor{blue!60}{+7.1\%}}$}} & \textbf{1.249$_{\textcolor{blue}{+14.7\%}}$} & \textbf{0.232$_{\textcolor{blue}{+2.7\%}}$} & \textbf{6.077$_{\textcolor{blue}{+4.1\%}}$} & \textbf{0.394$_{\textcolor{blue}{+1.5\%}}$} & \textbf{3.499$_{\textcolor{blue}{+4.0\%}}$} & \textbf{3.570$_{\textcolor{blue}{+1.4\%}}$} & \textbf{2.194$_{\textcolor{blue}{+4.2\%}}$} \\
\midrule
\multicolumn{9}{c}{\textit{GRPO-Based Methods}} \\
\cmidrule(lr){1-10}
DanceGRPO~\cite{xue2025dancegrpo} & \textit{CLIP+HPSv2.1} & \textcolor{gray!50}{\textbf{0.336$_{\textcolor{blue!60}{+7.7\%}}$}} & 1.124$_{\textcolor{blue}{+3.2\%}}$ & 0.229$_{\textcolor{blue}{+1.3\%}}$ & 5.746$_{\textcolor{red}{-1.6\%}}$ & \textcolor{gray!50}{\textbf{0.407$_{\textcolor{blue!60}{+4.9\%}}$}} & 3.333$_{\textcolor{red}{-1.0\%}}$ & 3.491$_{\textcolor{red}{-0.8\%}}$ & 2.095$_{\textcolor{red}{-0.5\%}}$ \\
\rowcolor{gray!20}w/ ConsistentRFT & \textit{CLIP+HPSv2.1} & \textcolor{gray!50}{{0.329$_{\textcolor{blue!60}{+5.4\%}}$}} & \textbf{1.349$_{\textcolor{blue}{+23.9\%}}$} & \textbf{0.235$_{\textcolor{blue}{+4.0\%}}$} & \textbf{6.057$_{\textcolor{blue}{+3.8\%}}$} & \textcolor{gray!50}{\textbf{0.401$_{\textcolor{blue!60}{+3.4\%}}$}} & \textbf{3.493$_{\textcolor{blue}{+3.8\%}}$} & \textbf{3.612$_{\textcolor{blue}{+2.6\%}}$} & \textbf{2.211$_{\textcolor{blue}{+5.0\%}}$} \\[1pt]
\hdashline[1pt/2pt]
\noalign{\vskip 1pt}
MixGRPO~\cite{li2025mixgrpo} & \textit{HPSv2.1} & \textcolor{gray!50}{{\textbf{0.361$_{	\textcolor{blue!60}{+15.7\%}}$}}} & 1.201$_{	\textcolor{blue}{+10.3\%}}$ & 0.222$_{	\textcolor{red}{-1.8\%}}$ & 5.833$_{	\textcolor{red}{-0.1\%}}$ & 0.346$_{	\textcolor{red}{-10.8\%}}$ & 3.293$_{	\textcolor{red}{-2.1\%}}$ & 3.401$_{	\textcolor{red}{-3.4\%}}$ & 2.094$_{	\textcolor{red}{-0.5\%}}$ \\
\rowcolor{gray!20}w/ ConsistentRFT  & \textit{HPSv2.1} & \textcolor{gray!50}{{0.354$_{	\textcolor{blue!60}{+13.5\%}}$}} & \textbf{1.323$_{	\textcolor{blue}{+21.5\%}}$} & \textbf{0.228$_{	\textcolor{blue}{+0.9\%}}$} & \textbf{6.052$_{	\textcolor{blue}{+3.7\%}}$} & \textbf{0.374$_{	\textcolor{red}{-3.6\%}}$} & \textbf{3.328$_{	\textcolor{red}{-1.1\%}}$} & \textbf{3.432$_{	\textcolor{red}{-2.5\%}}$} & \textbf{2.156$_{	\textcolor{blue}{+2.4\%}}$} \\[1pt]
\hdashline[1pt/2pt]
\noalign{\vskip 1pt}
FlowGRPO~\cite{liu2025flow} & \textit{HPSv2.1} & \textcolor{gray!50}{{0.326$_{	\textcolor{blue!60}{+4.5\%}}$}} & 1.135$_{	\textcolor{blue}{+4.2\%}}$ & 0.226$_{	\textcolor{gray}{-}}$ & 5.926$_{	\textcolor{blue}{+1.5\%}}$ & 0.375$_{	\textcolor{red}{-3.4\%}}$ & 3.320$_{	\textcolor{red}{-1.3\%}}$ & 3.476$_{	\textcolor{red}{-1.3\%}}$ & 2.112$_{	\textcolor{blue}{+0.3\%}}$ \\
DanceGRPO~\cite{xue2025dancegrpo} & \textit{HPSv2.1} & \textcolor{gray!50}{{\textbf{0.353$_{	\textcolor{blue!60}{+13.1\%}}$}}} & 1.155$_{	\textcolor{blue}{+6.1\%}}$ & 0.226$_{	\textcolor{gray}{-}}$ & 5.897$_{	\textcolor{blue}{+1.0\%}}$ & 0.361$_{	\textcolor{red}{-7.0\%}}$ & 3.300$_{	\textcolor{red}{-1.9\%}}$ & 3.379$_{	\textcolor{red}{-4.0\%}}$ & 2.096$_{	\textcolor{red}{-0.4\%}}$ \\
PrefGRPO~\cite{wang2025pref} & \textit{HPSv2.1} & \textcolor{gray!50}{{0.346$_{\textcolor{blue!60}{+10.9\%}}$}} & 1.172$_{\textcolor{blue}{+7.6\%}}$ & 0.227$_{\textcolor{blue}{+0.4\%}}$ & 5.953$_{\textcolor{blue}{+2.0\%}}$ & 0.378$_{\textcolor{red}{-2.6\%}}$ & 3.338$_{\textcolor{red}{-0.8\%}}$ & 3.486$_{\textcolor{red}{-1.0\%}}$ & 2.129$_{\textcolor{blue}{+1.1\%}}$ \\
\rowcolor{gray!20}\cite{xue2025dancegrpo}~w/ ConsistentRFT  & \textit{HPSv2.1} & \textcolor{gray!50}{{0.348$_{	\textcolor{blue!60}{+11.5\%}}$}} & \textbf{1.295$_{	\textcolor{blue}{+18.9\%}}$} & \textbf{0.230$_{	\textcolor{blue}{+1.8\%}}$} & \textbf{6.197$_{	\textcolor{blue}{+6.2\%}}$} & \textbf{0.384$_{	\textcolor{red}{-1.0\%}}$} & \textbf{3.408$_{	\textcolor{blue}{+1.3\%}}$} & \textbf{3.622$_{	\textcolor{blue}{+2.9\%}}$} & \textbf{2.212$_{	\textcolor{blue}{+5.1\%}}$} \\
\bottomrule
\end{tabular}%
}\vspace{-0.2cm}
\end{table*}


\noindent \textbf{Datasets \& Models.}
We adopt the HPDv2~\cite{wu2023human} prompts for online training and evaluation under the HPS-v2 benchmark setup. Given that most post-training pipelines use fewer than 10$k$ prompts, we hold out the last 400 prompts from the 103.7k training set for validation. Our main text-to-image based model is FLUX.1~dev~\cite{flux2024}. See details in App.~\ref{app:add_exp_settings}.

\noindent \textbf{Metrics.}
We report human preference scores (ImageReward~\cite{xu2023imagereward}, PickScore~\cite{kirstain2023pick}), aesthetic quality ~\cite{discus0434_aesthetic_2024}, and semantic alignment (CLIP~\cite{radford2021learning}, Unified Reward~\cite{wang2025unified}). We distinguish the in-domain training reward (e.g., HPS-v2.1) from out-of-domain metrics, which we prioritize to evaluate generalization. Visual hallucinations are assessed across three aspects: (i) detail over-optimization, measured by our HV-Evaluator; (ii) semantic consistency, using CLIP and Unified Reward; and (iii) trajectory consistency, evaluated by the straightness of the latent trajectory (see App.~\ref{app:vh_evaluator}).

\begin{table*}[t!]
\centering
\caption{Comparison with SoTA pretrained models. {Best} results are in \textbf{bold}, and the \underline{second-best} results are in underline.}\vspace{-0.2cm}
\label{tab:sota_results}
\resizebox{\textwidth}{!}{%
\small
\begin{tabular}{lc@{\hspace{0.3cm}}c@{\hspace{0.3cm}}cc@{\hspace{0.3cm}}c@{\hspace{0.3cm}}c@{\hspace{0.3cm}}cccc}
\toprule
\multirow{2}{*}{Method} & \multirow{2}{*}{Reward} & \multicolumn{3}{c}{Human Preference} & \multicolumn{1}{c}{Aesthetic} & \multicolumn{2}{c}{Semantic} & \multicolumn{2}{c}{Comprehensive} & \multirow{2}{*}{Time (s)}  \\
\cmidrule(lr){3-5} \cmidrule(lr){6-6} \cmidrule(lr){7-8} \cmidrule(lr){9-10} 
 & & {HPSv2.1} & ImageReward & PickScore & Aes.Pred.v2.5 & CLIP & Unified Reward-S & Unified Reward & Avg. &  \\
\cmidrule(lr){1-11}
SDXL~\cite{podell2023sdxl} & - & 0.292 & 0.945 & 0.226 & 5.735 & \underline{0.418} & 3.275 & 3.425 & 1.988 & \textbf{9} \\
Hunyuan-DiT~\cite{kong2024hunyuanvideo} & - & 0.302 & 1.094 & 0.225 & 5.616 & 0.411 & 3.296 & 3.458 & 2.057 & 23 \\
SD-3.5-M~\cite{esser2024scaling} & - & 0.302 & 1.130 & 0.227 & 5.659 & 0.411 & 3.350 & 3.580 & 2.094 & \underline{14} \\
Kolor~\cite{kolors} & - & 0.312 & 0.974 & 0.225 & 6.000 & 0.386 & 3.296 & 3.376 & 2.098 & 31 \\
SD-3.5-L~\cite{esser2024scaling} & - & 0.303 & 1.143 & 0.228 & 5.853 & 0.410 & 3.330 & 3.627 & 2.128 & 17 \\
HiDream-Full~\cite{hidreami1technicalreport} & - & 0.325 & \underline{1.385} & \underline{0.231} & 5.742 & 0.412 & 3.407 & \underline{3.714} & {2.174} & 75 \\
Qwen-Image~\cite{wu2025qwenimagetechnicalreport} & - & 0.324 & \textbf{1.427} & \textbf{0.233} & 6.048 & \textbf{0.421} & \textbf{3.436} & \textbf{3.891} & \textbf{2.254} & 128 \\
\midrule
Flux Dev~\cite{flux2024} (\textit{Base}) & - & 0.312 & 1.089 & 0.226 & 5.837 & 0.388 & 3.365 & 3.520 & 2.105 & 36 \\
\cellcolor{gray!20}w/ Ours  & \cellcolor{gray!20}\textit{HPSv2.1} & \cellcolor{gray!20}\textbf{0.348} & \cellcolor{gray!20}1.295 & \cellcolor{gray!20}0.230 & \cellcolor{gray!20}\underline{6.197} & \cellcolor{gray!20}0.384 & \cellcolor{gray!20}\underline{3.408} & \cellcolor{gray!20}{3.622} & \cellcolor{gray!20}2.212 & \cellcolor{gray!20}36 \\
\cellcolor{gray!20}Ours w/ 20-step sampling & \cellcolor{gray!20}\textit{HPSv2.1} & \cellcolor{gray!20}\underline{0.346} & \cellcolor{gray!20}1.311 & \cellcolor{gray!20}\underline{0.231} & \cellcolor{gray!20}\textbf{6.229} & \cellcolor{gray!20}0.384 & \cellcolor{gray!20}{3.407}& \cellcolor{gray!20}{3.596}  & \cellcolor{gray!20}\underline{2.215} & \cellcolor{gray!20}\underline{14} \\
\bottomrule
\end{tabular}%
}\vspace{-0.3cm}
\end{table*}


\noindent \textbf{Implementation.}
Our method is compatible with standard policy-gradient baselines. For methods finetuning FLUX with HPS (e.g., DanceGRPO~\cite{xue2025dancegrpo}, MixGRPO~\cite{li2025mixgrpo}), we follow official hyperparameters. Classic baselines (DPO~\cite{wallace2024diffusion}, DDPO~\cite{black2023training}) are reproduced via the DanceGRPO codebase under our setup. During rollout, we use $N_{\text{old}}{=}16$ steps and group size 12; Inter-Group DGS period is 40 with coarse ratio 0.25; Intra-Group DGS uses intermediate perception at step 12 with dual groups of size 6. In optimization, the CPGO weight is $10^{-6}$ and the threshold is $\tau = 0.6$. Additional details are in App.~\ref{app:add_exp_settings}.

\subsection{Main Results}

\noindent \textbf{Comparison with SoTA RFT Methods.} 
As shown in Tab.~\ref{tab:main_results}, existing methods often sacrifice out-of-domain performance for in-domain gains, indicating vision hallucination. For example, MixGRPO boosts HPSv2.1 by 15.7\% but degrades CLIP by 10.8\%, while D3PO improves HPSv2.1 by 8.3\% at the cost of 2.3\% CLIP drop. In contrast, our ConsistentRFT achieves superior generalization with 18.9\% ImageReward and 6.2\% Aesthetic improvements, confirming that our ConsistentRFT effectively balance in-domain and out-of-domain performance.

\noindent \textbf{Comparison with SoTA Pretrained Model.} When applied to the FLUX-Dev base model, our method achieves a strong average score of 2.212 (+5.1\%) while maintaining a similar performance on the few-step setting, as detailed in Tab.~\ref{tab:sota_results}. These results surpass other efficient models (inference time $<$ 100s) and rival the powerful Qwen-Image model (2.254) while being 9x faster (14s vs. 128s). The significant speedup highlights the efficiency of our CPGO optimization, and the performance gains demonstrate our method's effectiveness in mitigating visual hallucinations.

\noindent \textbf{Evaluation on Vision Hallucination.}
We now evaluate visual hallucinations of RFT methods, with results presented in Tab.~\ref{tab:second_results}. The analysis confirms two key findings: (i) existing RFT methods often introduce visual hallucinations, and (ii) our method effectively mitigates these introduced hallucinations. For instance, while DanceGRPO~\cite{xue2025dancegrpo} and MixGRPO~\cite{li2025mixgrpo} inflate artifact scores to 2.04 and 2.21 (from a 0.66 baseline), our ConsistentRFT reverses this degradation, reducing the scores by 38\% (to 1.26) and 24\% (to 1.68), respectively. These results with the user study (Fig. \ref{fig:user_study}) demonstrate that our approach successfully curtails visual hallucinations, leading to reliable image generation.

\noindent \textbf{Comparison with Methods for Reward Hacking.}
Reward hacking is closely related to visual hallucination and is likely one of the primary causes. We compare ConsistentRFT with established mitigation techniques (Tab.~\ref{tab:second_results}). 
Common techniques such as LoRA \cite{wortsman2022robust} and Early Stopping \cite{clark2023directly} offer limited gains: LoRA reduces artifacts by 5.4\%, while Early Stopping increases them by 17\%. KL divergence cuts 19\% of artifacts but suppresses preference scores~\cite{xue2025dancegrpo}, and PrefGRPO~\cite{wang2025pref} only 9.8\%. By contrast, our method reduces hallucination
 by 38\% and raises the average out-of-domain score by 5.7\%, confirming its effectiveness against visual hallucinations.


\begin{figure}[b]
    \centering
    \vspace{-0.3cm}
    \begin{subfigure}[b]{0.48\linewidth}
        \centering
        \includegraphics[width=\linewidth]{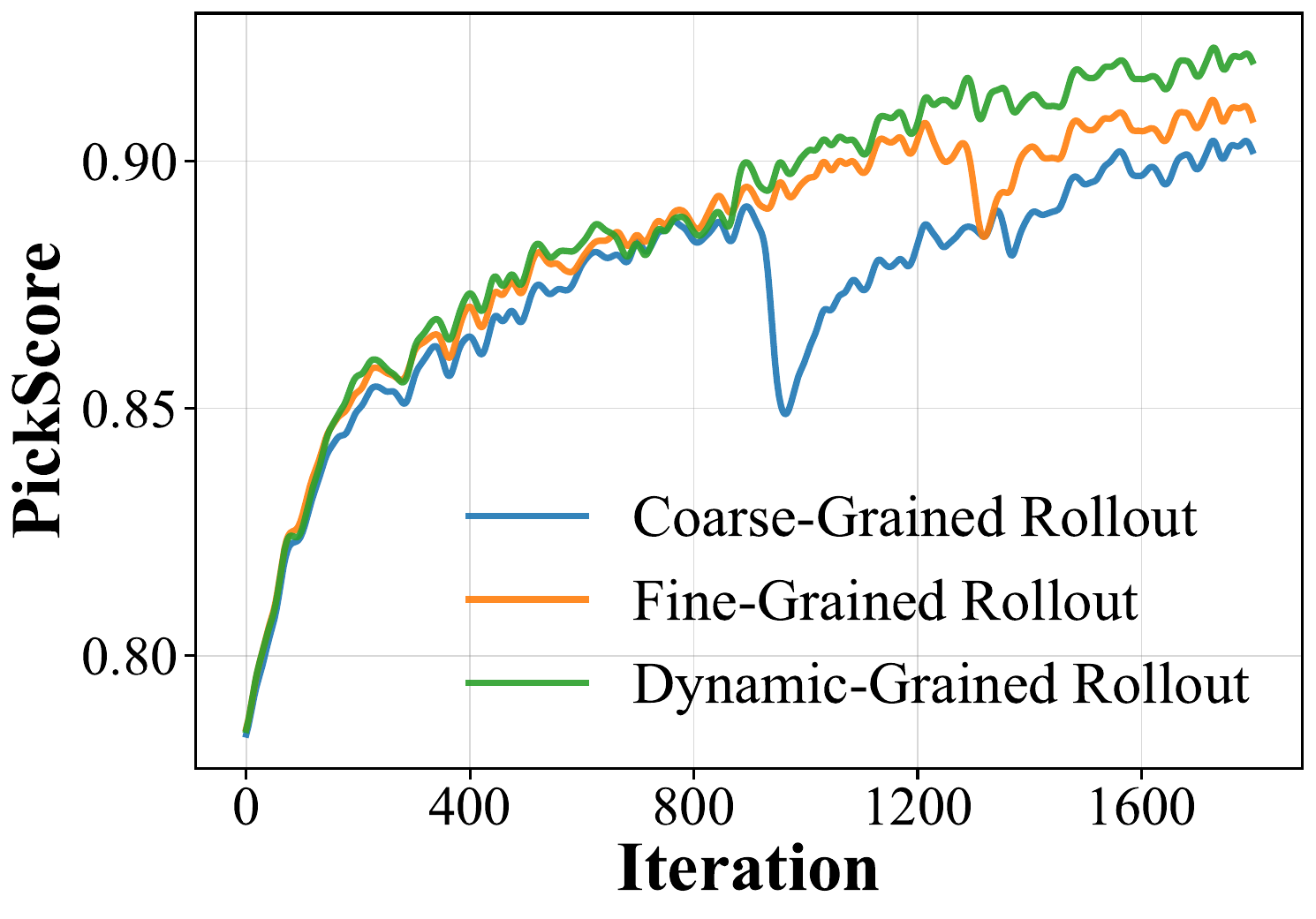}
        \vspace{-0.5cm}
        \caption{}
        \label{fig:exp_reward_curve}
    \end{subfigure}
    \begin{subfigure}[b]{0.48\linewidth}
        \centering
        \includegraphics[width=\linewidth]{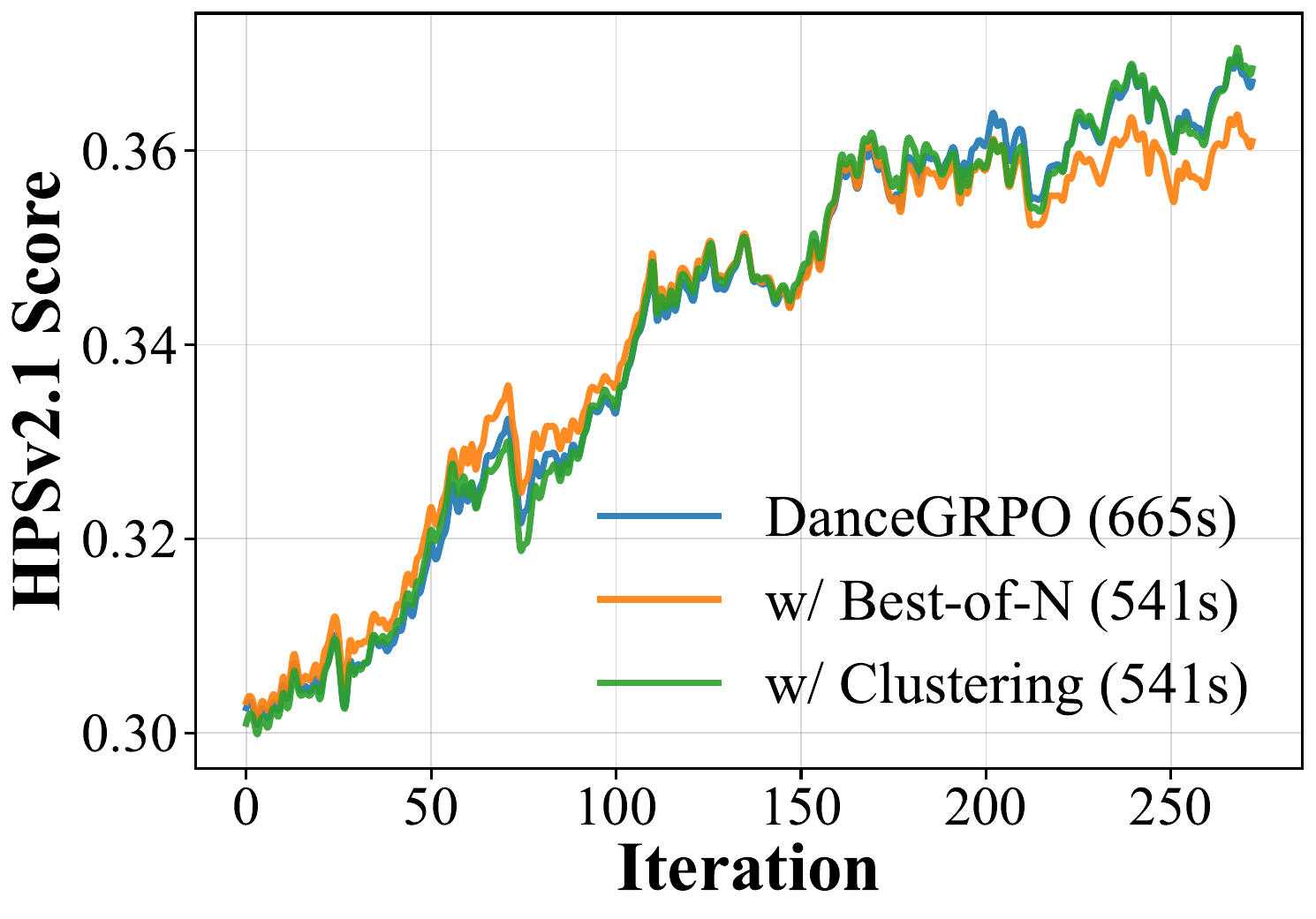}
        \vspace{-0.5cm}
        \caption{}
        \label{fig:exp_reward_2}
    \end{subfigure}
        \vspace{-0.3cm}
    \caption{Reward curves. (a) reward curve over training iterations with FlowGRPO trained using PickScore; (b) reward curve over training iterations with DanceGRPO trained using HPSv2.1.}
    \label{fig:reward_curves}
\end{figure}

\begin{table*}[t!]
\centering
\caption{Evaluation results on visual hallucination and out-of-domain metrics.}\vspace{-0.3cm}
\label{tab:second_results}
\resizebox{\textwidth}{!}{%
\footnotesize
\begin{tabular}{l@{\hspace{0.08cm}}l@{\hspace{0.20cm}}l@{\hspace{0.08cm}}l@{\hspace{0.18cm}}l@{\hspace{0.18cm}}c@{\hspace{0.08cm}}c@{\hspace{0.08cm}}c@{\hspace{0.08cm}}l@{\hspace{0.18cm}}c@{\hspace{0.08cm}}c@{\hspace{0.28cm}}c@{\hspace{0.28cm}}c@{\hspace{0.28cm}}c@{\hspace{0.20cm}}c@{\hspace{0.20cm}}c@{\hspace{0.20cm}}l}
\toprule
\multirow{2}{*}{Method} &  \multicolumn{4}{c}{Over-Optimization (Low-Level) $\downarrow$} & \multicolumn{4}{c}{Over-Optimization (High-Level) $\downarrow$}  & \multicolumn{1}{c}{Cons. $\downarrow$} & \multicolumn{7}{c}{Out-of-Domain Metric $\uparrow$}  \\
\cmidrule(lr){2-5} \cmidrule(lr){6-9} \cmidrule(lr){10-10} \cmidrule(lr){11-17}
 & {Lap. Var.} & High-Freq. & Edge &  Noise & Sharp. & Irrel. Det. & Grid Pat. & Avg. & Lat. Cons. & IR & PS & Aes. & CLIP & UR-S & UR & Avg. \\
\midrule
Flux Dev~\cite{flux2024}  & 977 & 26 & 61 & 0.27 & 1.00 & 0.87 & 0.12 & 0.66 & 0.30 & 1.09 & 0.226 & 5.84 & 0.39 & 3.37 & 3.52 & 2.40 \\
\midrule
DDPO~\cite{black2023training} & 3895 & 68 & 71 & 0.78 & 2.48 & 2.52 & 0.61 & 1.87 & 0.32 & 1.13 & 0.227 & 5.90 & 0.39 & 3.39 & 3.57 & 2.43 \\
\rowcolor{gray!20} w/ ConsistentRFT & \textbf{1028}$_{\textcolor{blue}{-74\%}}$ & \textbf{40}$_{\textcolor{blue}{-41.2\%}}$ & \textbf{66}$_{\textcolor{blue}{-7.0\%}}$ & \textbf{0.62}$_{\textcolor{blue}{-20.5\%}}$ & \textbf{1.07} & \textbf{0.92} & \textbf{0.12} & \textbf{0.70}$_{\textcolor{blue}{-62.6\%}}$ & \textbf{0.30} & \textbf{1.20} & \textbf{0.228} & \textbf{6.00} & \textbf{0.40} & \textbf{3.48} & \textbf{3.59} & \textbf{2.48}$_{\textcolor{blue}{+1.9\%}}$ \\
\midrule
OnlineDPO~\cite{yang2024using} & 2752 & 44 & 62 & 0.61 & 2.43 & 2.20 & 0.58 & 1.74 & 0.32 & 1.19 & 0.227 & 5.85 & 0.38 & 3.40 & 3.51 & 2.43 \\
\rowcolor{gray!20} w/ ConsistentRFT & \textbf{1131}$_{\textcolor{blue}{-59\%}}$ & \textbf{37}$_{\textcolor{blue}{-15.9\%}}$ & \textbf{59}$_{\textcolor{blue}{-4.8\%}}$ & \textbf{0.59}$_{\textcolor{blue}{-3.3\%}}$ & \textbf{1.50} & \textbf{1.34} & \textbf{0.29} & \textbf{1.04}$_{\textcolor{blue}{-40.2\%}}$ & \textbf{0.30} & \textbf{1.25} & \textbf{0.232} & \textbf{6.08} & \textbf{0.39} & \textbf{3.50} & \textbf{3.57} & \textbf{2.50}$_{\textcolor{blue}{+3.2\%}}$ \\
\midrule
MixGRPO~\cite{li2025mixgrpo} & 3900 & 82 & 66 & 2.65 & 3.12 & 2.73 & 0.79 & 2.21 & 0.35 & 1.20 & 0.222 & 5.83 & 0.35 & 3.29 & 3.40 & 2.38 \\
\rowcolor{gray!20} w/ ConsistentRFT & \textbf{1309}$_{\textcolor{blue}{-66\%}}$ & \textbf{45}$_{\textcolor{blue}{-45.1\%}}$ & \textbf{53}$_{\textcolor{blue}{-19.7\%}}$ & \textbf{1.17}$_{\textcolor{blue}{-55.8\%}}$ & \textbf{2.43} & \textbf{2.08} & \textbf{0.54} & \textbf{1.68}$_{\textcolor{blue}{-24.0\%}}$ & \textbf{0.31} & \textbf{1.32} & \textbf{0.228} & \textbf{6.05} & \textbf{0.37} & \textbf{3.33} & \textbf{3.43} & \textbf{2.46}$_{\textcolor{blue}{+3.4\%}}$ \\
\midrule
DanceGRPO~\cite{xue2025dancegrpo} & 5348 & 91 & 68 & 2.10 & 2.76 & 2.69 & 0.68 & 2.04 & 0.33 & 1.16 & 0.226 & 5.90 & 0.36 & 3.30 & 3.38 & 2.39 \\
w/ LoRA Scale~\cite{wortsman2022robust}  & 6707$_{\textcolor{red}{+25\%}}$ & 99$_{\textcolor{red}{+8.7\%}}$ & 69$_{\textcolor{blue}{+0.5\%}}$ & 2.00$_{\textcolor{blue}{-4.8\%}}$ & 2.71 & 2.43 & 0.66 & 1.93$_{\textcolor{blue}{-5.4\%}}$ & 0.33 & 1.16 & 0.228 & 5.98 & 0.37 & 3.40 & 3.45 & 2.43$_{\textcolor{blue}{+1.6\%}}$ \\
w/ KL \cite{clark2023directly} & 3699$_{\textcolor{blue}{-31\%}}$ & 79$_{\textcolor{blue}{-13.0\%}}$ & 68$_{\textcolor{blue}{-0.2\%}}$ & 2.22$_{\textcolor{red}{+5.9\%}}$ & 2.37 & 2.03 & 0.55 & 1.65$_{\textcolor{blue}{-19.0\%}}$ & 0.31 & 1.11 & 0.227 & 5.92 & 0.37 & 3.31 &  3.46 & 2.40$_{\textcolor{blue}{+0.6\%}}$ \\
w/ Early Stop \cite{clark2023directly}  & 3604$_{\textcolor{blue}{-33\%}}$ & 77$_{\textcolor{blue}{-15.0\%}}$ & 65$_{\textcolor{blue}{-4.2\%}}$ & 2.29$_{\textcolor{red}{+9.3\%}}$ & 3.16 & 3.17 & 0.83 & 2.39$_{\textcolor{red}{+17.0\%}}$ & 0.32 & 1.09 & 0.225 & 5.92 & 0.36 & 3.29  & 3.38 & 2.38$_{\textcolor{blue}{-0.4\%}}$ \\
w/ PrefGRPO \cite{wang2025pref} & 2634$_{\textcolor{blue}{-51\%}}$ & 71$_{\textcolor{blue}{-21.0\%}}$ & 62$_{\textcolor{blue}{-9.5\%}}$ & 2.24$_{\textcolor{red}{+6.9\%}}$ & 2.59 & 2.20 & 0.73 & 1.84$_{\textcolor{blue}{-9.8\%}}$ & 0.33 & 1.17 & 0.227 & 5.95 & 0.37 & 3.34 & 3.49 & 2.43$_{\textcolor{blue}{+1.7\%}}$ \\
\rowcolor{gray!20} w/ ConsistentRFT  & \textbf{1421}$_{\textcolor{blue}{-73\%}}$ & \textbf{45}$_{\textcolor{blue}{-50.5\%}}$ & \textbf{57}$_{\textcolor{blue}{-16.2\%}}$ & \textbf{0.93}$_{\textcolor{blue}{-55.7\%}}$ & \textbf{1.81} & \textbf{1.59} & \textbf{0.37} & \textbf{1.26}$_{\textcolor{blue}{-38.2\%}}$ & \textbf{0.30} & \textbf{1.30} & \textbf{0.230} & \textbf{6.20} & \textbf{0.38} & \textbf{3.41} & \textbf{3.62} & \textbf{2.52}$_{\textcolor{blue}{+5.4\%}}$ \\
\bottomrule
\end{tabular}%
}
\vspace{-0.5cm}
\end{table*}

\subsection{Ablation Study} 
\begin{table}[t!]
\centering
\caption{Ablation study of ConsistentRFT components.} \vspace{-0.2cm}
\label{tab:ablation}
\resizebox{\columnwidth}{!}{%
\small
\begin{tabular}{lllccccccccc}
\toprule
Method & \textit{Granularity} & Time & HPS & IR & PS & Aes. & CLIP & UR-S & UR & Avg. \\
\midrule
Flux Dev (\textit{Base}) & - & - & \textcolor{gray!50}{0.312} & 1.09 & 0.226 & 5.84 & 0.388 & 3.37 & 3.52 & 2.40 \\
\midrule
DanceGRPO & \textit{Fine} & 665 & \textcolor{gray!50}{\textbf{0.353}} & 1.16 & 0.226 & 5.90 & 0.361 & 3.30 & 3.38 & 2.39 \\
- & \textit{Coarse} & 665 & \textcolor{gray!50}{0.326} & 1.14 & 0.226 & 5.93 & 0.375 & 3.32 & 3.48 & 2.41 \\
+ Inter DGR & \textit{Dynamic} & 665 & \textcolor{gray!50}{\underline{0.349}} & 1.19 & \underline{0.230} & 6.14 & \underline{0.376} & 3.35 & 3.50 & 2.46 \\
+ CPGO & \textit{Dynamic} & 665 & \textcolor{gray!50}{{0.348}} & \textbf{1.30} & \underline{0.230} & \textbf{6.20} & \textbf{0.384} & \textbf{3.41} & \textbf{3.62} & \textbf{2.52} \\
+ Intra DGR & \textit{Dynamic} & \textbf{541} & \textcolor{gray!50}{{0.348}} & \underline{1.28} & \textbf{0.231} & \underline{6.16} & 0.373 & \underline{3.39} & \underline{3.53} & \underline{2.49} \\
\hdashline[1pt/2pt]
\noalign{\vskip 1pt}
OnlineDPO & \textit{Coarse} & 373 & \textcolor{gray!50}{\underline{0.338}} & 1.19 & 0.227 & 5.85 & 0.379 & 3.40 & 3.51 & 2.42 \\
- & \textit{Fine} & 373 & \textcolor{gray!50}{\textbf{0.340}} & 1.15 & \underline{0.228} & 5.92 & 0.374 & 3.38 & 3.44 & \textbf{2.42} \\
+ Inter DGR & \textit{Dynamic}  & 373 & \textcolor{gray!50}{0.322} & \underline{1.19} & \underline{0.228} & \underline{6.00} & \underline{0.392} & \underline{3.42} & \underline{3.52} & 2.46 \\
+ CPGO & \textit{Dynamic}  & 373 & \textcolor{gray!50}{0.334} & \textbf{1.25} & \textbf{0.232} & \textbf{6.08} & \textbf{0.394} & \textbf{3.50} & \textbf{3.57} & \underline{2.50} \\
\hdashline[1pt/2pt]
\noalign{\vskip 1pt}
DDPO & \textit{Fine} & 665 & \textcolor{gray!50}{0.313} & 1.13 & \underline{0.227} & 5.90 & 0.391 & 3.39 & 3.57 & 2.43 \\
- & \textit{Coarse}  & 665 & \textcolor{gray!50}{0.309} & 1.11 & 0.226 & 5.91 & 0.389 & 3.37 & 3.57 & 2.43 \\
+ Inter DGR & \textit{Dynamic}  & 665 & \textcolor{gray!50}{0.316} & 1.15 & \textbf{0.228} & \underline{5.95} & 0.392 & 3.41 & \underline{3.59} & 2.45 \\
+ CPGO & \textit{Dynamic}  & 665 & \textcolor{gray!50}{\textbf{0.324}} & \textbf{1.20} & \textbf{0.228} & \textbf{6.00} & \underline{0.396} & \underline{3.48} & \textbf{3.59} & \textbf{2.48} \\
+  Intra DGR & \textit{Dynamic}  & \textbf{541} & \textcolor{gray!50}{\underline{0.322}} & \underline{1.18} & \textbf{0.228} & 5.95 & \textbf{0.401} & \textbf{3.50} & 3.57 & \underline{2.47} \\
\bottomrule
\end{tabular}%
}
\vspace{-0.5cm}
\end{table}

\noindent \textbf{Fine-, Coarse-, and Dynamic-Grained Optimization.} As illustrated in Fig.~\ref{fig:intro}, fine-grained optimization favors local details but hurts global semantics, while coarse-grained does the opposite, revealing a clear trade-off (Tab.~\ref{tab:ablation}). Specifically, fine-grained yields high in-domain HPS (0.353) but low CLIP (0.361), whereas coarse-grained flips this (0.326 HPS, 0.375 CLIP). Our DGR resolves this trade-off, attaining competitive HPS (0.349) and CLIP (0.376) by balancing local details and global semantics.

\noindent \textbf{Effectiveness of CPGO.} CPGO enhances internal consistency, raising the average out-of-domain score to 2.523 and ImageReward by 9.2\% (Tab.~\ref{tab:ablation}). We attribute these gains to constrained interpolation within smoother distributions~\cite{aithal2024understanding}. CPGO also sustains quality under few-step sampling where DanceGRPO degrades (Fig.~\ref{fig:exp_step_baseline}(a)). Notably, 20-step sampling (2.215) slightly surpasses the full-step result (2.212) in Tab.~\ref{tab:main_results}, indicating improved vector-field consistency across steps. Overall, CPGO improves reliability and fidelity by enforcing consistency for efficient generation.

\noindent \textbf{Effectiveness of Intra-Group DGS.} Intra-group DGS uses intermediate perception to preserve diversity while accelerating fine-tuning: it reduces DanceGRPO training time from 665 to 541 (18.6\% speedup, Fig.~\ref{fig:reward_curves} (b)) and maintains a similar average out-of-domain score (2.52 vs. 2.49 in Tab.~\ref{tab:ablation}). Unlike best-of-$N$ strategies that increase compute and often yield highly similar top/bottom-$N$ samples, thus hindering late-stage optimization as diversity declines during RFT (Fig.~\ref{fig:reward_curves} (b)), Intra-group DGS sustains diversity more efficiently with lower computational overhead.

\noindent \textbf{Dynamic vs. Static Noise.} To obtain high-quality positive samples, we use Gold Noise \cite{Zhou_2025_ICCV} for positive samples and Static Noise for negatives. Results are shown in Tab.~\ref{tab:noise_ablation}.


\noindent \textbf{More Results \& Discussion.} Additional results and analyses are provided in App.~\ref{app:add_exp_results}, including qualitative evaluations, hyperparameter sensitivity, and further analyses.

\begin{figure}[t]
    \centering
    \vspace{0.2cm}
    \begin{minipage}[b]{0.48\linewidth}
        \centering
        \includegraphics[width=\linewidth]{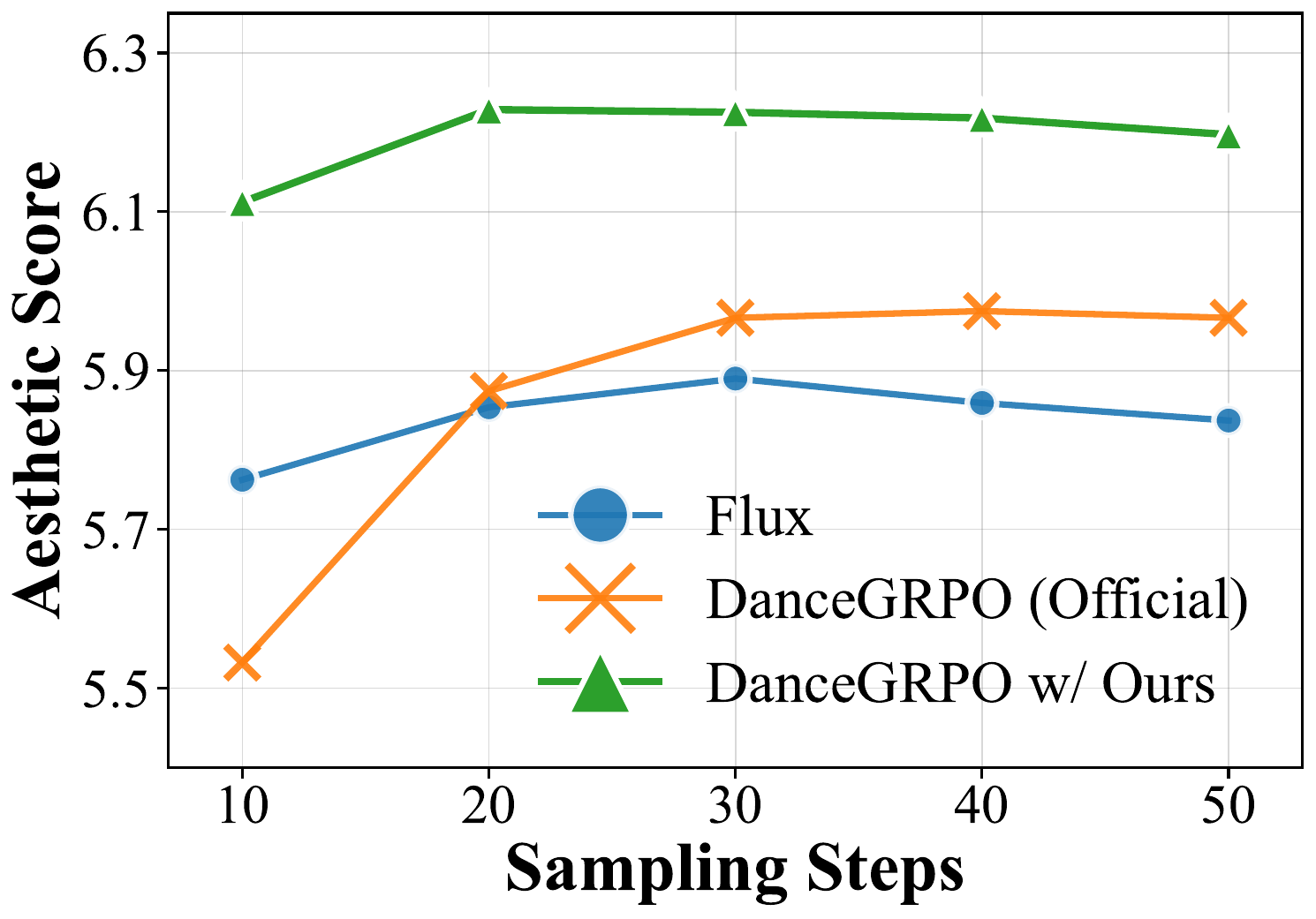}
        \vspace{-0.6cm}
        \caption{Results on sample steps. DanceGRPO adopt official model weights.}
        \label{fig:exp_step_baseline}
    \end{minipage}
    \hfill
    \begin{minipage}[b]{0.48\linewidth}
        \centering
        \includegraphics[width=\linewidth]{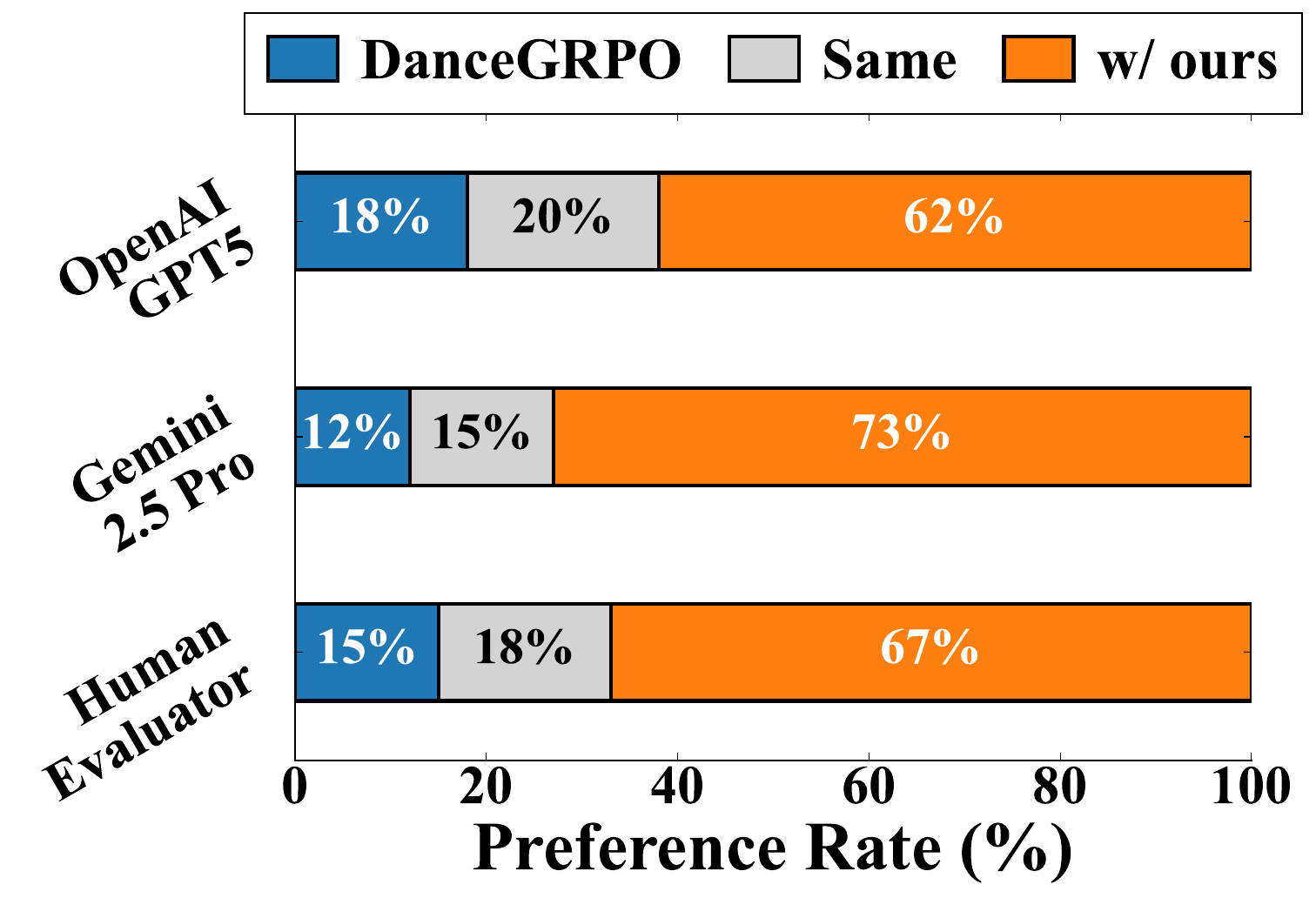}
        \vspace{-0.6cm}
        \caption{Results on LLM evaluation and user study on human preference.}
        \label{fig:user_study}
    \end{minipage}
    \vspace{-0.5cm}\label{fig:exp_step_user}
\end{figure}

\begin{table}[b]
\centering
 \vspace{-0.6cm}
\caption{Ablation on noise initialization strategies.} \vspace{-0.2cm}
\label{tab:noise_ablation}
\resizebox{\columnwidth}{!}{%
\small
\begin{tabular}{lccclccc}
\toprule
& \textbf{SDXL} & \textbf{w/ Static} & \textbf{w/ Gold+Static} & & \textbf{SDXL} & \textbf{w/ Static} & \textbf{w/ Gold+Static} \\
\midrule
Avg. & 1.998 & 2.045 & \textbf{2.093} & Uni.Rew. & 3.425 & 3.472 & \textbf{3.536} \\
\bottomrule
\end{tabular}%
}
\vspace{-0.3cm}
\end{table}
\section{Conclusion}

In this work, we introduce ConsistentRFT, a general framework designed to mitigate visual hallucinations in RFT for flow models. Our framework proposes two key innovations: DGR to balance fine- and coarse-grained optimization, and CPGO to preserve the model's predictive consistency. Extensive experiments show that ConsistentRFT achieves state-of-the-art performance, reducing visual hallucinations by up to 49\% while improving out-of-domain generalization (+5.1\% on FLUX1.dev).

\noindent{\textbf{Limitations.}} Although this work provides a preliminary investigation into visual hallucinations during post-training and proposes an initial mitigation strategy, we recognize that addressing this challenge requires not only a robust RFT method but also more powerful and scalable reward models \cite{wu2025rewarddancerewardscalingvisual}. We anticipate future developments of stronger open-source reward models by the community.

{
    \small
    \bibliographystyle{ieeenat_fullname}
    \bibliography{main}

@String(CVPR= {IEEE Conf. Comput. Vis. Pattern Recog.})

@String(ICCV= {Int. Conf. Comput. Vis.})

@String(CVPR  = {CVPR})

@String(ICCV  = {ICCV})

@article{podell2023sdxl,
    title   = {SDXL: Improving Latent Diffusion Models for High-Resolution Image Synthesis},
    author  = {Podell, Dustin and English, Zion and Lacey, Kyle and Blattmann, Andreas and Dockhorn, Tim and M{\"u}ller, Jonas and Penna, Joe and Rombach, Robin},
    journal = {arXiv preprint arXiv:2307.01952},
    year    = {2023}
}

@article{esser2024scaling,
    title   = {Scaling Rectified Flow Transformers for High-Resolution Image Synthesis},
    author  = {Esser, Patrick and Kulal, Sumith and Blattmann, Andreas and Entezari, Rahim and M{\"u}ller, Jonas and Saini, Harry and Levi, Yam and Lorenz, Dominik and Sauer, Axel and Boesel, Frederic and others},
    journal = {arXiv preprint arXiv:2403.03206},
    year    = {2024}
}

@article{xue2025dancegrpo,
    title   = {DanceGRPO: Unleashing GRPO on Visual Generation},
    author  = {Xue, Zeyue and Wu, Jie and Gao, Yu and Kong, Fangyuan and Zhu, Lingting and Chen, Mengzhao and Liu, Zhiheng and Liu, Wei and Guo, Qiushan and Huang, Weilin and others},
    journal = {arXiv preprint arXiv:2505.07818},
    year    = {2025}
}

@article{liu2025flow,
    title   = {Flow-grpo: Training flow matching models via online rl},
    author  = {Liu, Jie and Liu, Gongye and Liang, Jiajun and Li, Yangguang and Liu, Jiaheng and Wang, Xintao and Wan, Pengfei and Zhang, Di and Ouyang, Wanli},
    journal = {arXiv preprint arXiv:2505.05470},
    year    = {2025}
}

@article{lipman2022flow,
    title   = {Flow matching for generative modeling},
    author  = {Lipman, Yaron and Chen, Ricky TQ and Ben-Hamu, Heli and Nickel, Maximilian and Le, Matt},
    journal = {arXiv preprint arXiv:2210.02747},
    year    = {2022}
}

@article{liu2022flow,
    title   = {Flow straight and fast: Learning to generate and transfer data with rectified flow},
    author  = {Liu, Xingchao and Gong, Chengyue and Liu, Qiang},
    journal = {arXiv preprint arXiv:2209.03003},
    year    = {2022}
}

@misc{kong2024hunyuanvideo,
    title         = {Hunyuan-DiT: A Powerful Multi-Resolution Diffusion Transformer with Fine-Grained Chinese Understanding},
    author        = {Li, Zhimin and Zhang, Jianwei and Lin, Qin and Xiong, Jiangfeng and Long, Yanxin and Deng, Xinchi and Zhang, Yingfang and Liu, Xingchao and Huang, Minbin and Xiao, Zedong and Chen, Dayou and He, Jiajun and Li, Jiahao and Li, Wenyue and Zhang, Chen and Quan, Rongwei and Lu, Jianxiang and Huang, Jiabin and Yuan, Xiaoxiao and Zheng, Xiaoxiao and Li, Yixuan and Zhang, Jihong and Zhang, Chao and Chen, Meng and Liu, Jie and Fang, Zheng and Wang, Weiyan and Xue, Jinbao and Tao, Yangyu and Zhu, Jianchen and Liu, Kai and Lin, Sihuan and Sun, Yifu and Li, Yun and Wang, Dongdong and Chen, Mingtao and Hu, Zhichao and Xiao, Xiao and Chen, Yan and Liu, Yuhong and Liu, Wei and Wang, Di and Yang, Yong and Jiang, Jie and Lu, Qinglin},
    year          = {2024},
    eprint        = {2405.08748},
    archiveprefix = {arXiv},
    primaryclass  = {cs.CV}
}

@article{zhang2024affordance,
    title   = {Affordance-based robot manipulation with flow matching},
    author  = {Zhang, Fan and Gienger, Michael},
    journal = {arXiv preprint arXiv:2409.01083},
    year    = {2024}
}

@article{fan2023dpok,
    title   = {Dpok: Reinforcement learning for fine-tuning text-to-image diffusion models},
    author  = {Fan, Ying and Watkins, Olivia and Du, Yuqing and Liu, Hao and Ryu, Moonkyung and Boutilier, Craig and Abbeel, Pieter and Ghavamzadeh, Mohammad and Lee, Kangwook and Lee, Kimin},
    journal = {Advances in Neural Information Processing Systems},
    volume  = {36},
    pages   = {79858--79885},
    year    = {2023}
}

@article{black2023training,
    title   = {Training diffusion models with reinforcement learning},
    author  = {Black, Kevin and Janner, Michael and Du, Yilun and Kostrikov, Ilya and Levine, Sergey},
    journal = {arXiv preprint arXiv:2305.13301},
    year    = {2023}
}

@article{xu2023imagereward,
    title   = {Imagereward: Learning and evaluating human preferences for text-to-image generation},
    author  = {Xu, Jiazheng and Liu, Xiao and Wu, Yuchen and Tong, Yuxuan and Li, Qinkai and Ding, Ming and Tang, Jie and Dong, Yuxiao},
    journal = {Advances in Neural Information Processing Systems},
    volume  = {36},
    pages   = {15903--15935},
    year    = {2023}
}

@inproceedings{wallace2024diffusion,
    title     = {Diffusion model alignment using direct preference optimization},
    author    = {Wallace, Bram and Dang, Meihua and Rafailov, Rafael and Zhou, Linqi and Lou, Aaron and Purushwalkam, Senthil and Ermon, Stefano and Xiong, Caiming and Joty, Shafiq and Naik, Nikhil},
    booktitle = {Proceedings of the IEEE/CVF Conference on Computer Vision and Pattern Recognition},
    pages     = {8228--8238},
    year      = {2024}
}

@inproceedings{yang2024using,
    title     = {Using human feedback to fine-tune diffusion models without any reward model},
    author    = {Yang, Kai and Tao, Jian and Lyu, Jiafei and Ge, Chunjiang and Chen, Jiaxin and Shen, Weihan and Zhu, Xiaolong and Li, Xiu},
    booktitle = {Proceedings of the IEEE/CVF Conference on Computer Vision and Pattern Recognition},
    pages     = {8941--8951},
    year      = {2024}
}

@article{wu2023human,
    title   = {Human preference score v2: A solid benchmark for evaluating human preferences of text-to-image synthesis},
    author  = {Wu, Xiaoshi and Hao, Yiming and Sun, Keqiang and Chen, Yixiong and Zhu, Feng and Zhao, Rui and Li, Hongsheng},
    journal = {arXiv preprint arXiv:2306.09341},
    year    = {2023}
}

@article{dockhorn2021score,
    title   = {Score-based generative modeling with critically-damped langevin diffusion},
    author  = {Dockhorn, Tim and Vahdat, Arash and Kreis, Karsten},
    journal = {arXiv preprint arXiv:2112.07068},
    year    = {2021}
}

@article{song2020score,
    title   = {Score-based generative modeling through stochastic differential equations},
    author  = {Song, Yang and Sohl-Dickstein, Jascha and Kingma, Diederik P and Kumar, Abhishek and Ermon, Stefano and Poole, Ben},
    journal = {arXiv preprint arXiv:2011.13456},
    year    = {2020}
}

@article{karras2022elucidating,
    title   = {Elucidating the design space of diffusion-based generative models},
    author  = {Karras, Tero and Aittala, Miika and Aila, Timo and Laine, Samuli},
    journal = {Advances in neural information processing systems},
    volume  = {35},
    pages   = {26565--26577},
    year    = {2022}
}

@article{lu2022dpm,
    title   = {Dpm-solver: A fast ode solver for diffusion probabilistic model sampling in around 10 steps},
    author  = {Lu, Cheng and Zhou, Yuhao and Bao, Fan and Chen, Jianfei and Li, Chongxuan and Zhu, Jun},
    journal = {Advances in neural information processing systems},
    volume  = {35},
    pages   = {5775--5787},
    year    = {2022}
}

@article{li2025mixgrpo,
    title   = {Mixgrpo: Unlocking flow-based grpo efficiency with mixed ode-sde},
    author  = {Li, Junzhe and Cui, Yutao and Huang, Tao and Ma, Yinping and Fan, Chun and Yang, Miles and Zhong, Zhao},
    journal = {arXiv preprint arXiv:2507.21802},
    year    = {2025}
}

@article{wang2025pref,
    title   = {Pref-grpo: Pairwise preference reward-based grpo for stable text-to-image reinforcement learning},
    author  = {Wang, Yibin and Li, Zhimin and Zang, Yuhang and Zhou, Yujie and Bu, Jiazi and Wang, Chunyu and Lu, Qinglin and Jin, Cheng and Wang, Jiaqi},
    journal = {arXiv preprint arXiv:2508.20751},
    year    = {2025}
}

@article{he2025tempflow,
    title   = {Tempflow-grpo: When timing matters for grpo in flow models},
    author  = {He, Xiaoxuan and Fu, Siming and Zhao, Yuke and Li, Wanli and Yang, Jian and Yin, Dacheng and Rao, Fengyun and Zhang, Bo},
    journal = {arXiv preprint arXiv:2508.04324},
    year    = {2025}
}

@article{li2025branchgrpo,
    title   = {Branchgrpo: Stable and efficient grpo with structured branching in diffusion models},
    author  = {Li, Yuming and Wang, Yikai and Zhu, Yuying and Zhao, Zhongyu and Lu, Ming and She, Qi and Zhang, Shanghang},
    journal = {arXiv preprint arXiv:2509.06040},
    year    = {2025}
}

@article{yu2025smart,
    title   = {Smart-GRPO: Smartly Sampling Noise for Efficient RL of Flow-Matching Models},
    author  = {Yu, Benjamin and Liu, Jackie and Cui, Justin},
    journal = {arXiv preprint arXiv:2510.02654},
    year    = {2025}
}

@article{wang2025coefficients,
    title   = {Coefficients-Preserving Sampling for Reinforcement Learning with Flow Matching},
    author  = {Wang, Feng and Yu, Zihao},
    journal = {arXiv preprint arXiv:2509.05952},
    year    = {2025}
}

@article{fu2025dynamic,
    title   = {Dynamic-TreeRPO: Breaking the Independent Trajectory Bottleneck with Structured Sampling},
    author  = {Fu, Xiaolong and Ma, Lichen and Guo, Zipeng and Zhou, Gaojing and Wang, Chongxiao and Dong, ShiPing and Zhou, Shizhe and Liu, Ximan and Fu, Jingling and Sin, Tan Lit and others},
    journal = {arXiv preprint arXiv:2509.23352},
    year    = {2025}
}

@article{zhou2025text,
    title   = {G2RPO: Granular GRPO for Precise Reward in Flow Models},
    author  = {Zhou, Yujie and Ling, Pengyang and Bu, Jiazi and Wang, Yibin and Zang, Yuhang and Wang, Jiaqi and Niu, Li and Zhai, Guangtao},
    journal = {arXiv preprint arXiv:2510.01982},
    year    = {2025}
}

@article{sheng2025understanding,
    title   = {Understanding Sampler Stochasticity in Training Diffusion Models for RLHF},
    author  = {Sheng, Jiayuan and Zhao, Hanyang and Chen, Haoxian and Yao, David D. and Tang, Wenpin},
    journal = {arXiv preprint arXiv:2510.10767},
    year    = {2025}
}

@article{zhang2025group,
    title   = {Group Critical-token Policy Optimization for Autoregressive Image Generation},
    author  = {Zhang, Guohui and Yu, Hu and Ma, Xiaoxiao and Zhang, JingHao and Pan, Yaning and Yao, Mingde and Xiao, Jie and Huang, Linjiang and Zhao, Feng},
    journal = {arXiv preprint arXiv:2509.22485},
    year    = {2025}
}

@article{ye2025reinforcement,
    title   = {Reinforcement Learning with Inverse Rewards for World Model Post-training},
    author  = {Ye, Yang and He, Tianyu and Yang, Shuo and Bian, Jiang},
    journal = {arXiv preprint arXiv:2509.23958},
    year    = {2025}
}

@article{zheng2025diffusionnft,
    title   = {DiffusionNFT: Online Diffusion Reinforcement with Forward Process},
    author  = {Zheng, Kaiwen and Chen, Huayu and Ye, Haotian and Wang, Haoxiang and Zhang, Qinsheng and Jiang, Kai and Su, Hang and Ermon, Stefano and Zhu, Jun and Liu, Ming-Yu},
    journal = {arXiv preprint arXiv:2509.16117},
    year    = {2025}
}

@article{yuan2025argrpo,
    title   = {AR-GRPO: Training Autoregressive Image Generation Models via Reinforcement Learning},
    author  = {Yuan, Shihao and Liu, Yahui and Yue, Yang and Zhang, Jingyuan and Zuo, Wangmeng and Wang, Qi and Zhang, Fuzheng and Zhou, Guorui},
    journal = {arXiv preprint arXiv:2508.06924},
    year    = {2025}
}

@misc{dance_grpo_issue1,
    author       = {GitHub Users XueZeyue},
    title        = {Discussion on DanceGRPO Issue 36},
    howpublished = {\url{https://github.com/XueZeyue/DanceGRPO/issues/36}},
    year         = {2025},
    note         = {Accessed: 2025-07-18}
}

@article{zhang2025siren,
    title     = {Siren's Song in the AI Ocean: A Survey on Hallucination in Large Language Models},
    author    = {Zhang, Yue and Li, Yafu and Cui, Leyang and Cai, Deng and Liu, Lemao and Fu, Tingchen and Huang, Xinting and Zhao, Enbo and Zhang, Yu and Chen, Yulong and others},
    journal   = {Computational Linguistics},
    pages     = {1--46},
    year      = {2025},
    publisher = {MIT Press 255 Main Street, 9th Floor, Cambridge, Massachusetts 02142, USA}
}

@article{huang2025survey,
    title     = {A survey on hallucination in large language models: Principles, taxonomy, challenges, and open questions},
    author    = {Huang, Lei and Yu, Weijiang and Ma, Weitao and Zhong, Weihong and Feng, Zhangyin and Wang, Haotian and Chen, Qianglong and Peng, Weihua and Feng, Xiaocheng and Qin, Bing and others},
    journal   = {ACM Transactions on Information Systems},
    volume    = {43},
    number    = {2},
    pages     = {1--55},
    year      = {2025},
    publisher = {ACM New York, NY}
}

@article{aithal2024understanding,
    title   = {Understanding hallucinations in diffusion models through mode interpolation},
    author  = {Aithal, Sumukh K and Maini, Pratyush and Lipton, Zachary and Kolter, J Zico},
    journal = {Advances in Neural Information Processing Systems},
    volume  = {37},
    pages   = {134614--134644},
    year    = {2024}
}

@article{oorloff2025mitigating,
    title   = {Mitigating Hallucinations in Diffusion Models through Adaptive Attention Modulation},
    author  = {Oorloff, Trevine and Yacoob, Yaser and Shrivastava, Abhinav},
    journal = {arXiv preprint arXiv:2502.16872},
    year    = {2025}
}

@article{fu2025counting,
    title   = {Counting Hallucinations in Diffusion Models},
    author  = {Fu, Shuai and Zhou, Jian and Chen, Qi and Jing, Huang and Nguyen, Huy Anh and Liu, Xiaohan and Zeng, Zhixiong and Ma, Lin and Zhang, Quanshi and Wu, Qi},
    journal = {arXiv preprint arXiv:2510.13080},
    year    = {2025}
}

@inproceedings{kim2024tackling,
    title        = {Tackling structural hallucination in image translation with local diffusion},
    author       = {Kim, Seunghoi and Jin, Chen and Diethe, Tom and Figini, Matteo and Tregidgo, Henry FJ and Mullokandov, Asher and Teare, Philip and Alexander, Daniel C},
    booktitle    = {European Conference on Computer Vision},
    pages        = {87--103},
    year         = {2024},
    organization = {Springer}
}

@inproceedings{Zhou_2025_ICCV,
    author    = {Zhou, Zikai and Shao, Shitong and Bai, Lichen and Zhang, Shufei and Xu, Zhiqiang and Han, Bo and Xie, Zeke},
    title     = {Golden Noise for Diffusion Models: A Learning Framework},
    booktitle = {Proceedings of the IEEE/CVF International Conference on Computer Vision (ICCV)},
    month     = {October},
    year      = {2025},
    pages     = {17688-17697}
}

@article{bai2024zigzag,
    title   = {Zigzag diffusion sampling: Diffusion models can self-improve via self-reflection},
    author  = {Bai, Lichen and Shao, Shitong and Zhou, Zikai and Qi, Zipeng and Xu, Zhiqiang and Xiong, Haoyi and Xie, Zeke},
    journal = {The Thirteenth International Conference on Learning Representations},
    year    = {2024}
}

@article{liu2024alignment,
    title   = {Alignment of Diffusion Models: Fundamentals, Challenges, and Future},
    author  = {Liu, Buhua and Shao, Shitong and Li, Bao and Bai, Lichen and Xu, Zhiqiang and Xiong, Haoyi and Kwok, James and Helal, Sumi and Xie, Zeke},
    journal = {arXiv preprint arXiv 2024.07253},
    year    = {2024}
}

@article{calandriello2024human,
    title   = {Human alignment of large language models through online preference optimisation},
    author  = {Calandriello, Daniele and Guo, Daniel and Munos, Remi and Rowland, Mark and Tang, Yunhao and Pires, Bernardo Avila and Richemond, Pierre Harvey and Lan, Charline Le and Valko, Michal and Liu, Tianqi and others},
    journal = {arXiv preprint arXiv:2403.08635},
    year    = {2024}
}

@inproceedings{kim2025vip,
    title     = {Vip: Iterative online preference distillation for efficient video diffusion models},
    author    = {Kim, Jisoo and Seo, Wooseok and Kim, Junwan and Park, Seungho and Park, Sooyeon and Yu, Youngjae},
    booktitle = {Proceedings of the IEEE/CVF International Conference on Computer Vision},
    pages     = {17235--17245},
    year      = {2025}
}

@article{guo2025deepseek,
    title     = {Deepseek-r1 incentivizes reasoning in llms through reinforcement learning},
    author    = {Guo, Daya and Yang, Dejian and Zhang, Haowei and Song, Junxiao and Wang, Peiyi and Zhu, Qihao and Xu, Runxin and Zhang, Ruoyu and Ma, Shirong and Bi, Xiao and others},
    journal   = {Nature},
    volume    = {645},
    number    = {8081},
    pages     = {633--638},
    year      = {2025},
    publisher = {Nature Publishing Group UK London}
}

@article{song2023consistency,
    title   = {Consistency models},
    author  = {Song, Yang and Dhariwal, Prafulla and Chen, Mark and Sutskever, Ilya},
    journal = {arXiv preprint arXiv:2303.01469},
    year    = {2023}
}

@inproceedings{XieDyMO,
    author    = {Xie, Xin and Gong, Dong},
    booktitle = {2025 IEEE/CVF Conference on Computer Vision and Pattern Recognition (CVPR)},
    title     = {DyMO: Training-Free Diffusion Model Alignment with Dynamic Multi-Objective Scheduling},
    year      = {2025},
    volume    = {},
    number    = {},
    pages     = {13220-13230},
    doi       = {10.1109/CVPR52734.2025.01234}
}

@article{lanchantin2025bridging,
    title   = {Bridging Offline and Online Reinforcement Learning for LLMs},
    author  = {Lanchantin, Jack and Chen, Angelica and Lan, Janice and Li, Xian and Saha, Swarnadeep and Wang, Tianlu and Xu, Jing and Yu, Ping and Yuan, Weizhe and Weston, Jason E and others},
    journal = {arXiv preprint arXiv:2506.21495},
    year    = {2025}
}

@inproceedings{zhang2024large,
    title        = {Large-scale reinforcement learning for diffusion models},
    author       = {Zhang, Yinan and Tzeng, Eric and Du, Yilun and Kislyuk, Dmitry},
    booktitle    = {European Conference on Computer Vision},
    pages        = {1--17},
    year         = {2024},
    organization = {Springer}
}

@inproceedings{Bansal2016,
    author    = {Bansal, Raghav and Raj, Gaurav and Choudhury, Tanupriya},
    booktitle = {2016 International Conference System Modeling and Advancement in Research Trends (SMART)},
    title     = {Blur image detection using Laplacian operator and Open-CV},
    year      = {2016},
    volume    = {},
    number    = {},
    pages     = {63-67},
    doi       = {10.1109/SYSMART.2016.7894491}
}

@article{abdel2003analysis,
    title     = {Analysis of edge-detection techniques for crack identification in bridges},
    author    = {Ikhlas Abdel-Qader and Osama Abudayyeh and Michael E. Kelly},
    journal   = {Journal of computing in civil engineering},
    volume    = {17},
    number    = {4},
    pages     = {255--263},
    year      = {2003},
    publisher = {American Society of Civil Engineers}
}

@article{ziou1998edge,
    title   = {Edge detection techniques-an overview},
    author  = {Ziou, Djemel and Tabbone, Salvatore},
    journal = {Pattern Recognition and Image Analysis: Advances in Mathematical Theory and Applications},
    volume  = {8},
    number  = {4},
    pages   = {537--559},
    year    = {1998}
}

@misc{flux2024,
    author       = {Black Forest Labs},
    title        = {FLUX},
    year         = {2024},
    howpublished = {\url{https://github.com/black-forest-labs/flux}}
}

@article{kirstain2023pick,
    title   = {Pick-a-pic: An open dataset of user preferences for text-to-image generation},
    author  = {Kirstain, Yuval and Polyak, Adam and Singer, Uriel and Matiana, Shahbuland and Penna, Joe and Levy, Omer},
    journal = {Advances in Neural Information Processing Systems},
    volume  = {36},
    pages   = {36652--36663},
    year    = {2023}
}

@article{Shin2005,
    author   = {Dong-Hyuk Shin and Rae-Hong Park and Seungjoon Yang and Jae-Han Jung},
    journal  = {IEEE Transactions on Consumer Electronics},
    title    = {Block-based noise estimation using adaptive Gaussian filtering},
    year     = {2005},
    volume   = {51},
    number   = {1},
    pages    = {218-226},
    doi      = {10.1109/TCE.2005.1405723}
}

@article{Rakhshanfar2016,
    author   = {Rakhshanfar, Meisam and Amer, Maria A.},
    journal  = {IEEE Transactions on Image Processing},
    title    = {Estimation of Gaussian, Poissonian–Gaussian, and Processed Visual Noise and Its Level Function},
    year     = {2016},
    volume   = {25},
    number   = {9},
    pages    = {4172-4185},
    doi      = {10.1109/TIP.2016.2588320}
}

@inproceedings{ma2025hpsv3,
    title     = {Hpsv3: Towards wide-spectrum human preference score},
    author    = {Ma, Yuhang and Wu, Xiaoshi and Sun, Keqiang and Li, Hongsheng},
    booktitle = {Proceedings of the IEEE/CVF International Conference on Computer Vision},
    pages     = {15086--15095},
    year      = {2025}
}

@article{peng2024dreambench++,
    title   = {Dreambench++: A human-aligned benchmark for personalized image generation},
    author  = {Peng, Yuang and Cui, Yuxin and Tang, Haomiao and Qi, Zekun and Dong, Runpei and Bai, Jing and Han, Chunrui and Ge, Zheng and Zhang, Xiangyu and Xia, Shu-Tao},
    journal = {arXiv preprint arXiv:2406.16855},
    year    = {2024}
}

@inproceedings{liu2025f,
    title     = {F-bench: Rethinking human preference evaluation metrics for benchmarking face generation, customization, and restoration},
    author    = {Liu, Lu and Duan, Huiyu and Hu, Qiang and Yang, Liu and Cai, Chunlei and Ye, Tianxiao and Liu, Huayu and Zhang, Xiaoyun and Zhai, Guangtao},
    booktitle = {Proceedings of the IEEE/CVF International Conference on Computer Vision},
    pages     = {10982--10994},
    year      = {2025}
}

@article{huang2025t2i,
    title     = {T2i-compbench++: An enhanced and comprehensive benchmark for compositional text-to-image generation},
    author    = {Huang, Kaiyi and Duan, Chengqi and Sun, Kaiyue and Xie, Enze and Li, Zhenguo and Liu, Xihui},
    journal   = {IEEE Transactions on Pattern Analysis and Machine Intelligence},
    year      = {2025},
    publisher = {IEEE}
}

@article{ghosh2023geneval,
    title   = {Geneval: An object-focused framework for evaluating text-to-image alignment},
    author  = {Ghosh, Dhruba and Hajishirzi, Hannaneh and Schmidt, Ludwig},
    journal = {Advances in Neural Information Processing Systems},
    volume  = {36},
    pages   = {52132--52152},
    year    = {2023}
}

@article{Qwen-VL,
    title   = {Qwen-VL: A Frontier Large Vision-Language Model with Versatile Abilities},
    author  = {Bai, Jinze and Bai, Shuai and Yang, Shusheng and Wang, Shijie and Tan, Sinan and Wang, Peng and Lin, Junyang and Zhou, Chang and Zhou, Jingren},
    journal = {arXiv preprint arXiv:2308.12966},
    year    = {2023}
}

@article{wang2025unified,
    title   = {Unified reward model for multimodal understanding and generation},
    author  = {Wang, Yibin and Zang, Yuhang and Li, Hao and Jin, Cheng and Wang, Jiaqi},
    journal = {arXiv preprint arXiv:2503.05236},
    year    = {2025}
}

@misc{discus0434_aesthetic_2024,
    author       = {Discus0434},
    title        = {{Aesthetic-Predictor-v2-5}: SigLIP-based Aesthetic Score Predictor},
    howpublished = {\url{https://github.com/discus0434/aesthetic-predictor-v2-5?tab=readme-ov-file}},
    year         = {2024},
    note         = {Accessed: May 27, 2024}
}

@inproceedings{radford2021learning,
    title        = {Learning transferable visual models from natural language supervision},
    author       = {Radford, Alec and Kim, Jong Wook and Hallacy, Chris and Ramesh, Aditya and Goh, Gabriel and Agarwal, Sandhini and Sastry, Girish and Askell, Amanda and Mishkin, Pamela and Clark, Jack and others},
    booktitle    = {International conference on machine learning},
    pages        = {8748--8763},
    year         = {2021},
    organization = {PmLR}
}

@misc{wu2025qwenimagetechnicalreport,
    title         = {Qwen-Image Technical Report},
    author        = {Wu, Chenfei and Li, Jiahao and Zhou, Jingren and Lin, Junyang and Gao, Kaiyuan and Yan, Kun and Yin, Sheng-ming and Bai, Shuai and Xu, Xiao and Chen, Yilei and others},
    year          = {2025},
    eprint        = {2508.02324},
    archiveprefix = {arXiv},
    primaryclass  = {cs.CV},
    url           = {https://arxiv.org/abs/2508.02324}
}

@article{hidreami1technicalreport,
    title   = {HiDream-I1: A High-Efficient Image Generative Foundation Model with Sparse Diffusion Transformer},
    author  = {Cai, Qi and Chen, Jingwen and Chen, Yang and Li, Yehao and Long, Fuchen and Pan, Yingwei and Qiu, Zhaofan and Zhang, Yiheng and Gao, Fengbin and Xu, Peihan and others},
    journal = {arXiv preprint arXiv:2505.22705},
    year    = {2025}
}

@article{kolors,
    title   = {Kolors: Effective Training of Diffusion Model for Photorealistic Text-to-Image Synthesis},
    author  = {Kolors Team},
    journal = {arXiv preprint},
    year    = {2024}
}

@inproceedings{wortsman2022robust,
    title     = {Robust fine-tuning of zero-shot models},
    author    = {Wortsman, Mitchell and Ilharco, Gabriel and Kim, Jong Wook and Li, Mike and Kornblith, Simon and Roelofs, Rebecca and Lopes, Raphael Gontijo and Hajishirzi, Hannaneh and Farhadi, Ali and Namkoong, Hongseok and others},
    booktitle = {Proceedings of the IEEE/CVF conference on computer vision and pattern recognition},
    pages     = {7959--7971},
    year      = {2022}
}

@article{clark2023directly,
    title   = {Directly fine-tuning diffusion models on differentiable rewards},
    author  = {Clark, Kevin and Vicol, Paul and Swersky, Kevin and Fleet, David J},
    journal = {arXiv preprint arXiv:2309.17400},
    year    = {2023}
}

@inproceedings{ho2020denoising,
    title     = {Denoising Diffusion Probabilistic Models},
    author    = {Ho, Jonathan and Jain, Ajay and Abbeel, Pieter},
    booktitle = {Advances in Neural Information Processing Systems},
    volume    = {33},
    pages     = {6840--6851},
    year      = {2020}
}

@inproceedings{song2020denoising,
    title     = {Denoising Diffusion Implicit Models},
    author    = {Song, Jiaming and Meng, Chenlin and Ermon, Stefano},
    booktitle = {International Conference on Learning Representations},
    year      = {2021}
}

@inproceedings{chen2024find,
    title     = {Find: Fine-tuning initial noise distribution with policy optimization for diffusion models},
    author    = {Chen, Changgu and Yang, Libing and Yang, Xiaoyan and Chen, Lianggangxu and He, Gaoqi and Wang, Changbo and Li, Yang},
    booktitle = {Proceedings of the 32nd ACM International Conference on Multimedia},
    pages     = {6735--6744},
    year      = {2024}
}

@misc{wu2025rewarddancerewardscalingvisual,
    title         = {RewardDance: Reward Scaling in Visual Generation},
    author        = {Jie Wu and Yu Gao and Zilyu Ye and Ming Li and Liang Li and Hanzhong Guo and Jie Liu and Zeyue Xue and Xiaoxia Hou and Wei Liu and Yan Zeng and Weilin Huang},
    year          = {2025},
    eprint        = {2509.08826},
    archiveprefix = {arXiv},
    primaryclass  = {cs.CV},
    url           = {https://arxiv.org/abs/2509.08826}
}

@inproceedings{Wei_2024_CVPR,
    author    = {Wei, Min and Zhou, Jingkai and Sun, Junyao and Zhang, Xuesong},
    title     = {Adversarial Score Distillation: When score distillation meets GAN},
    booktitle = {Proceedings of the IEEE/CVF Conference on Computer Vision and Pattern Recognition (CVPR)},
    month     = {June},
    year      = {2024},
    pages     = {8131-8141}
}

@misc{dance_grpo_issue2,
    author       = {GitHub Users XueZeyue},
    title        = {Discussion on DanceGRPO Issue 72},
    howpublished = {\url{https://github.com/XueZeyue/DanceGRPO/issues/72}},
    year         = {2025},
    note         = {Accessed: 2025-09-16}
}

@inproceedings{deng2024prdp,
  title={Prdp: Proximal reward difference prediction for large-scale reward finetuning of diffusion models},
  author={Deng, Fei and Wang, Qifei and Wei, Wei and Hou, Tingbo and Grundmann, Matthias},
  booktitle={Proceedings of the IEEE/CVF Conference on Computer Vision and Pattern Recognition},
  pages={7423--7433},
  year={2024}
}

@article{wu2025takes,
  title={It Takes Two: Your GRPO Is Secretly DPO},
  author={Wu, Yihong and Ma, Liheng and Ding, Lei and Li, Muzhi and Wang, Xinyu and Chen, Kejia and Su, Zhan and Zhang, Zhanguang and Huang, Chenyang and Zhang, Yingxue and others},
  journal={arXiv preprint arXiv:2510.00977},
  year={2025}
}

@article{tong2025delving,
  title={Delving into RL for Image Generation with CoT: A Study on DPO vs. GRPO},
  author={Tong, Chengzhuo and Guo, Ziyu and Zhang, Renrui and Shan, Wenyu and Wei, Xinyu and Xing, Zhenghao and Li, Hongsheng and Heng, Pheng-Ann},
  journal={arXiv preprint arXiv:2505.17017},
  year={2025}
}

@article{tan2024sopo,
  title={Sopo: Text-to-motion generation using semi-online preference optimization},
  author={Tan, Xiaofeng and Wang, Hongsong and Geng, Xin and Zhou, Pan},
  journal={arXiv preprint arXiv:2412.05095},
  year={2024}
}

@article{weng2025realign,
  title={ReAlign: Text-to-Motion Generation via Step-Aware Reward-Guided Alignment},
  author={Weng, Wanjiang and Tan, Xiaofeng and Wang, Junbo and Xie, Guo-Sen and Zhou, Pan and Wang, Hongsong},
  journal={arXiv preprint arXiv:2511.19217},
  year={2025}
}

@article{su2025safe,
  title={Safe-Sora: Safe Text-to-Video Generation via Graphical Watermarking},
  author={Su, Zihan and Qiu, Xuerui and Xu, Hongbin and Jiang, Tangyu and Zhuang, Junhao and Yuan, Chun and Li, Ming and He, Shengfeng and Yu, Fei Richard},
  journal={arXiv preprint arXiv:2505.12667},
  year={2025}
}

@misc{su2026generationenhancesunderstandingunified,
      title={Generation Enhances Understanding in Unified Multimodal Models via Multi-Representation Generation}, 
      author={Zihan Su and Hongyang Wei and Kangrui Cen and Yong Wang and Guanhua Chen and Chun Yuan and Xiangxiang Chu},
      year={2026},
      eprint={2601.21406},
      archivePrefix={arXiv},
      primaryClass={cs.CV},
      url={https://arxiv.org/abs/2601.21406}, 
}

@article{pu2025dragging,
  title={Dragging with Geometry: From Pixels to Geometry-Guided Image Editing},
  author={Pu, Xinyu and Wang, Hongsong and Gui, Jie and Zhou, Pan},
  journal={arXiv preprint arXiv:2509.25740},
  year={2025}
}
}

\clearpage
\appendix
\addtocontents{toc}{\protect\AppendixTocStart}

\renewcommand{\theequation}{S\arabic{equation}}
\renewcommand{\thefigure}{S\arabic{figure}}
\renewcommand{\thetable}{S\arabic{table}}
\renewcommand{\thesection}{\Alph{section}}
\renewcommand{\thetheorem}{S\arabic{theorem}}

\renewcommand{\theHequation}{supp.S\arabic{equation}}
\renewcommand{\theHfigure}{supp.S\arabic{figure}}
\renewcommand{\theHtable}{supp.S\arabic{table}}
\renewcommand{\theHsection}{supp.\Alph{section}}
\renewcommand{\theHtheorem}{supp.S\arabic{theorem}}

\setcounter{equation}{0}
\setcounter{figure}{0}
\setcounter{table}{0}
\setcounter{theorem}{0}

\twocolumn[{
\renewcommand\twocolumn[1][]{#1}
\maketitlesupplementary
\centering
\includegraphics[width=0.92\linewidth]{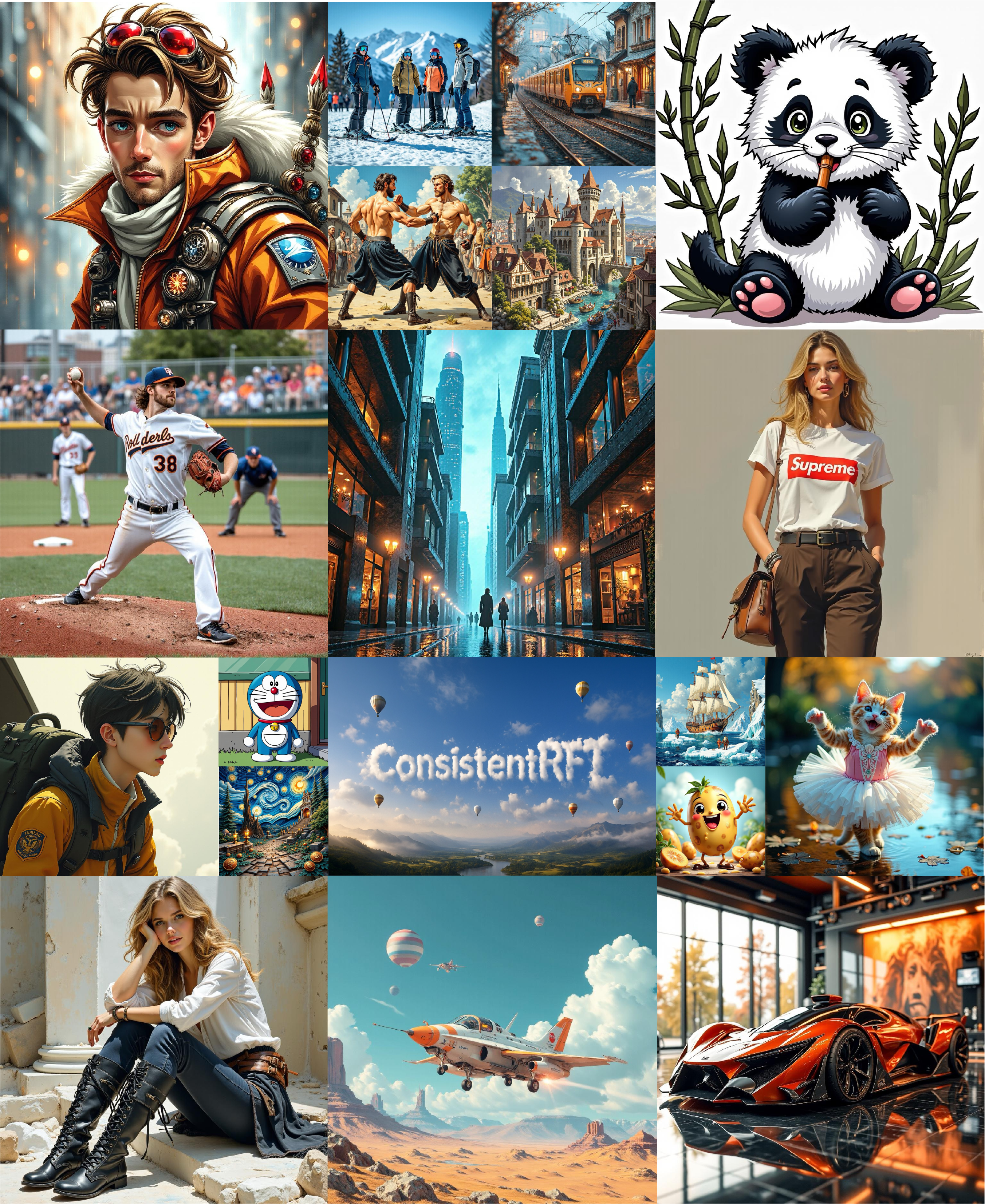} 
\captionsetup{type=figure}
\caption{Visualization Results.}
\label{fig:app_cover}
}]
\clearpage
\onecolumn

\printSupplementToc
\noindent\rule{\textwidth}{0.4pt}
\vspace{0.5cm}

This supplementary document provides additional quantitative and qualitative results, implementation details, theoretical justifications, and visualizations for ConsistentRFT. It is organized as follows. Sec.~\ref{app:add_exp_results} reports additional experimental results, including extended quantitative comparisons, parameter sensitivity studies, and experimental settings. Sec.~\ref{app:method_details} provides further method details and discussion of our dynamic-granularity rollout and consistency regularization. Sec.~\ref{app:related_works} reviews additional related work on reinforcement fine-tuning and GRPO-style methods. Sec.~\ref{app:vh_evaluator} introduces the proposed visual hallucination evaluator, including low-level metrics, high-level MLLM-based assessment, and evaluation prompts, while Sec.~\ref{app:theoretic} presents theoretical justifications for our training objectives and the CPGO regularizer. Finally, Sec.~\ref{app:visualization} provides visualizations.

\section{Additional Experiments Results}
\label{app:add_exp_results}

This section presents comprehensive additional experimental results, including quantitative analyses, parameter sensitivity studies, and detailed experimental settings that support the main findings in the paper.

\subsection{Additional Quantitative Results}
\label{app:Quantitative}
This section summarizes the quantitative findings and sensitivity analyses presented in the following subsections. Below we provide quantitative and qualitative results for key analyses discussed throughout this supplementary document. Sec.~\ref{app:com_fine_coarse_dynamic} presents a qualitative comparison of fine-, coarse-, and dynamic optimization strategies, demonstrating how our approach balances detail refinement and semantic diversity (see Fig.~\ref{supp:app:coarse_opt}). Sec.~\ref{app:progressive_perception} reports quantitative metrics for Coarse Progress Perception, including cosine similarity across intermediate steps and ranking correlation with ImageReward, as shown in Fig.~\ref{supp:app:Coarse_Progress_Perception} and Fig.~\ref{supp:app:vis_coarse_progress_perception}. Sec.~\ref{app:initial_noise_analysis} quantifies the distinct roles of initial noise and progressive noise in generating diverse samples, with results presented in Fig.~\ref{supp:image_variance_comparison}. Finally, Sec.~\ref{app:param_sensitivity_analysis} provides comprehensive parameter sensitivity analysis in Table~\ref{tab:Sensitivity}, with detailed hyperparameter analysis illustrated in Fig.~\ref{supp:vis_cpgo}. All results are supplemented by qualitative visualizations including Fig.~\ref{supp:vh_large} and Fig.~\ref{supp:app:blurred_img} in Sec.~\ref{app:visualization}.

\subsection{Additional Experiments Settings}
\label{app:add_exp_settings}

\noindent \textbf{Datasets \& Models.} For online settings, we use prompts from the  HPDv2 \cite{wu2023human} datasets for training and testing, following the HPS-v2 benchmark configuration. Considering that the existing post-training methods typically require fewer than 10$k$ data prompts, we reserve the last 400 prompts from the training set with 103.7k prompts for validation. We employ FLUX.1 DEV~\cite{flux2024} as our text-to-image model, using PickScore~\cite{kirstain2023pick} as the reward for FlowGRPO~\cite{liu2025flow} and HPS-v2.1~\cite{wu2023human} for the remaining methods. For offline settings, we generate 20k image pairs using FLUX due to the insufficient quality of the HPDv2 dataset, with sample pairs determined by HPS-v2.1 scores.

\noindent \textbf{Metrics.} Following prior work, we evaluate our method using multiple metrics focused on diverse aspects, including human preference evaluators (ImageReward~\cite{xu2023imagereward}, PickScore~\cite{kirstain2023pick}), an aesthetic model (Aesthetic Predictor v2.5~\cite{discus0434_aesthetic_2024}), a semantic metric (CLIP Score~\cite{radford2021learning}), and a comprehensive reward model (Unified Reward~\cite{wang2025unified}). Particularly, the reward model used for training (e.g., HPS-v2.1) serves as the \textit{in-domain metric}, while others are treated as \textit{out-of-domain metrics}. Here, we suggest paying more attention to the out-of-domain metrics, as they reflect the model's generalizability.

We evaluate visual hallucinations across three key aspects: (i) detail over-optimization, (ii) semantic consistency, and (iii) sampling trajectory consistency. Over-optimization is assessed using our HV-Evaluator, which combines low-level metrics with MLLM-based evaluation. Semantic consistency is measured by CLIP and Unified Reward scores, and trajectory consistency is determined by the straightness of the latent trajectory. Detailed descriptions of these metrics can be found in Sec. \ref{app:vh_evaluator}.
\begin{wraptable}{r}{0.50\linewidth}
\setlength{\tabcolsep}{10pt}
\centering
\vspace{-0.3cm}
\caption{Hyperparameter configurations for DanceGRPO and MixGRPO. MixGRPO-specific parameters are marked with $\dagger$.}
\label{tab:hyperparameters}
\begin{tabular}{lcc}
\toprule
\textbf{Hyperparameter} & \textbf{DanceGRPO} & \textbf{MixGRPO} \\
\midrule
Learning Rate & $1e{-5}$ & $1e{-5}$ \\
Batch Size & 2 & 2 \\
Grad Accum Steps & 12 & 3 \\
Num Generations & 12 & 12 \\
Sampling Steps & 16 & 25 \\
Resolution & 720 $\times$ 720 & 720 $\times$ 720 \\
SDE Noise Weight ($\eta$) & 0.3 & 0.7 \\
Timestep Fraction & 0.6 & 0.6 \\
Clip Range & $1e{-4}$ & $1e{-4}$ \\
Max Grad Norm & 0.01 & 1.0 \\
Weight Decay & $1e{-4}$ & $1e{-4}$ \\
Adv Clip Max & 5.0 & 5.0 \\
Shift & 3 & 3 \\
\midrule
Sample Strategy$^\dagger$ & SDE & SDE-ODE \\
DPM Solver$^\dagger$ & --- & Yes \\
\bottomrule
\end{tabular}
\vspace{-0.4cm}
\end{wraptable}
\noindent \textbf{Implementation.} All experiments are conducted on 8 GPUs. Unless otherwise specified, we follow the official hyperparameter settings of baseline methods (see Table~\ref{tab:hyperparameters}). For DanceGRPO, we adopt the standard configuration with $\eta = 0.3$, timestep fraction = 0.6, and clip range = $10^{-4}$. For MixGRPO, we employ a mixed reward strategy with $\eta = 0.7$, progressive sampling with overlap, and multi-reward aggregation via advantage weighting. To ensure fair comparison, DPO and DDPO are configured to align with DanceGRPO's base settings. Notably, since DPO optimizes via preference pairs rather than groups, we increase its batch size to 16 to maintain equivalent sample counts per iteration. For LoRA-based fine-tuning experiments, we set the LoRA rank to 128 and the LoRA scaling factor ($\alpha$) to 256. Our method introduces additional hyperparameters: the CPGO weight $\omega$, the inter-group DGS period, the coarse-grained ratio, and the intermediate perception step $t_{\mathrm{s}}$. These are selected via validation set performance. For baseline methods in comparative experiments, the KL divergence weight is set to $10^{-6}$, and early stopping is triggered when the in-domain metric on the validation set begins to decline (evaluated every 20 iterations). Parameter analysis for ConsistentRFT is provided in Sec.~\ref{app:param_sensitivity_analysis}. During evaluation, we adopt the same inference configuration as DanceGRPO. For all pretrained models in Tab.~\ref{tab:sota_results}, we likewise strictly follow their official inference settings.

\subsection{Comparison of Fine-, Coarse-, and Dynamic Optimization Strategies}
\label{app:com_fine_coarse_dynamic}

Figure~\ref{supp:app:coarse_opt} presents a qualitative comparison of optimization strategies with different granularities. Fine-grained optimization tends to focus on local details, which can lead to over-optimization artifacts such as excessive sharpening and detail accumulation. Coarse-grained optimization, by contrast, maintains greater semantic diversity but may sacrifice fine-grained detail refinement. Our dynamic-grained optimization strategy balances both objectives by adaptively mixing coarse- and fine-grained rollouts, achieving improved visual quality with fewer hallucination artifacts.

\begin{figure}[t]
  \centering
  \includegraphics[width=\linewidth]{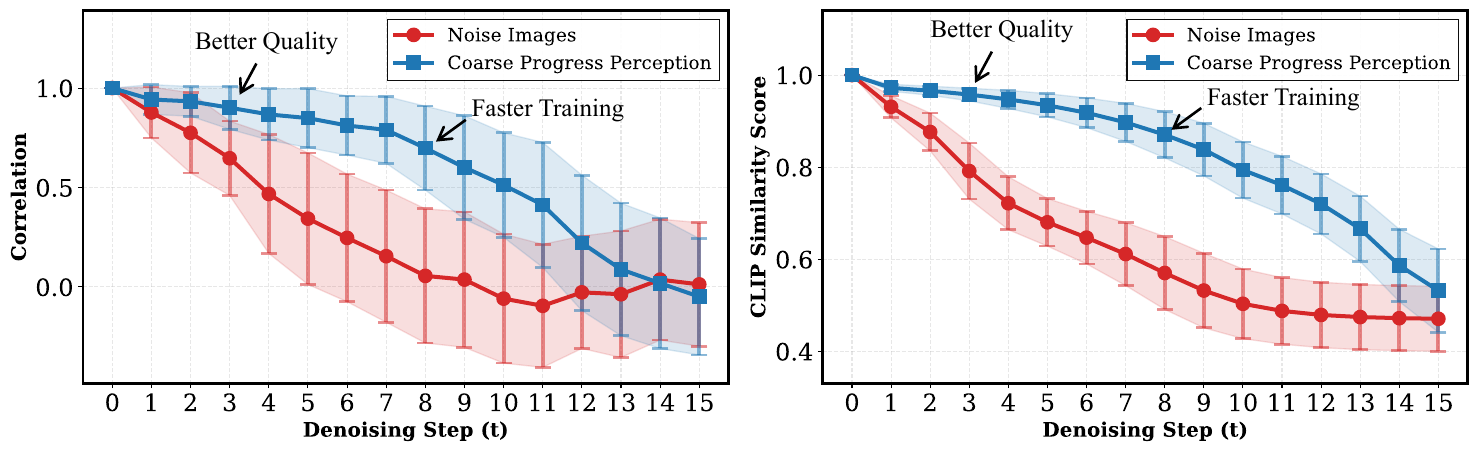}
  \caption{Coarse Progress Perception analysis. We compare the Coarse Progress Perception against full-trajectory denoising in terms of  correlation between Coarse-Progress-Perception-derived rankings and ImageReward-based rankings (left) and cosine similarity across intermediate steps using CLIP ViT-L/14 features (right) over 12 images per prompt.}
  \label{supp:app:Coarse_Progress_Perception}
\end{figure}

\begin{figure}[b]
  \centering
  \includegraphics[width=\linewidth]{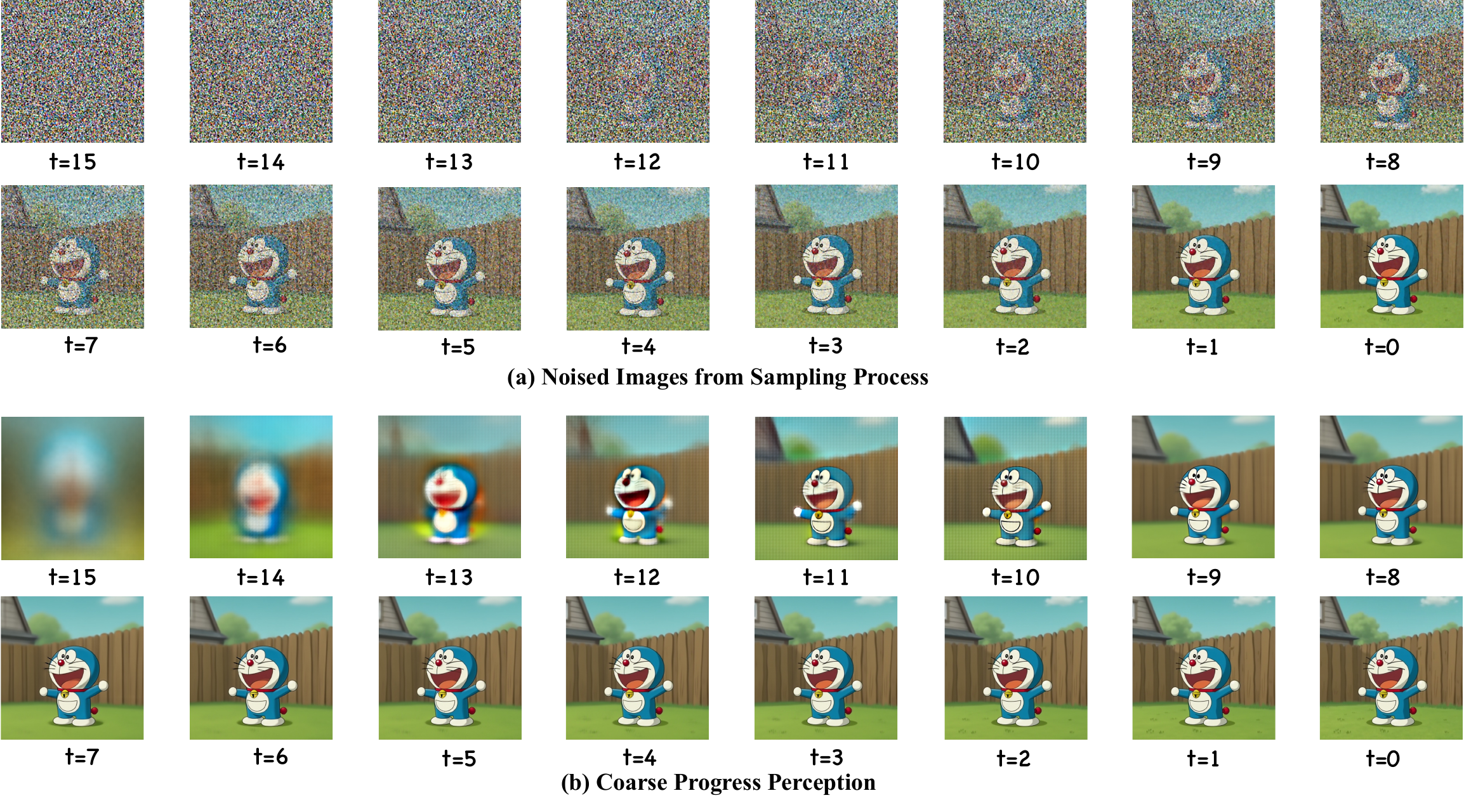}
  \caption{Visualization of noised images and their coarse perceptions. It can be observed that coarse progress perception facilitates earlier (smaller-$t$) recognition of image semantics and quality by the reward model.}
    \label{supp:app:vis_coarse_progress_perception}
\end{figure}

\subsection{Coarse Progress Perception Results}
\label{app:progressive_perception}

\noindent \textbf{Experimental Settings.} To assess the accuracy of the coarse progressive perception results relative to full-trajectory denoising, we conduct two analyses. We first select 50 prompts from HPDv2~\cite{wu2023human}. For each prompt, using FLUX.1~dev~\cite{flux2024} as the base model, we generate 12 images with 16 inference steps, a guidance scale of 3.5, and a resolution of $720\times720$. (1)  For each of the 16 intermediate steps, we store both the noisy images and their noise-aware perceptions, and compute cosine similarity using CLIP ViT-L/14 features~\cite{radford2021learning}. (2) We further investigate the correlation between the Coarse Progress Perception Results and rewards derived from full-trajectory denoising. Specifically, we adopt ImageReward~\cite{xu2023imagereward} as the reward model to rank the 12 images generated for each prompt; we then rank the same set based on the Coarse Progress Perception Results and compare the two rankings to quantify their correlation. To ensure statistical significance and robustness, we conduct all experiments over 50 prompts with 50 independent repetitions each, and report the mean.

\noindent \textbf{Results.} We first present quantitative analyses. As shown in Fig.~\ref{supp:app:Coarse_Progress_Perception}(a), we compare the similarity between noisy images at step $t$ and their clean counterparts, together with the corresponding coarse progress perception. Fig.~\ref{supp:app:Coarse_Progress_Perception}(b) reports the reward consistency between noisy images at step $t$ and their noise-aware perceptions. The results indicate that our approach can reliably anticipate image quality and semantics from intermediate trajectories. Based on this analysis, we recommend two thresholds for $t_{\mathrm{s}}$: $t_{\mathrm{s}}{=}12$ (for more accurate training; cf. the ``t=4'' in the $x$-axis) and $t_{\mathrm{s}}{=}8$ (for faster training; cf. the ``t=8'' in the $x$-axis). We also provide qualitative examples in Fig.~\ref{supp:app:vis_coarse_progress_perception}, which illustrate the same phenomena.


\begin{figure}[t]
  \centering
  \includegraphics[width=\linewidth]{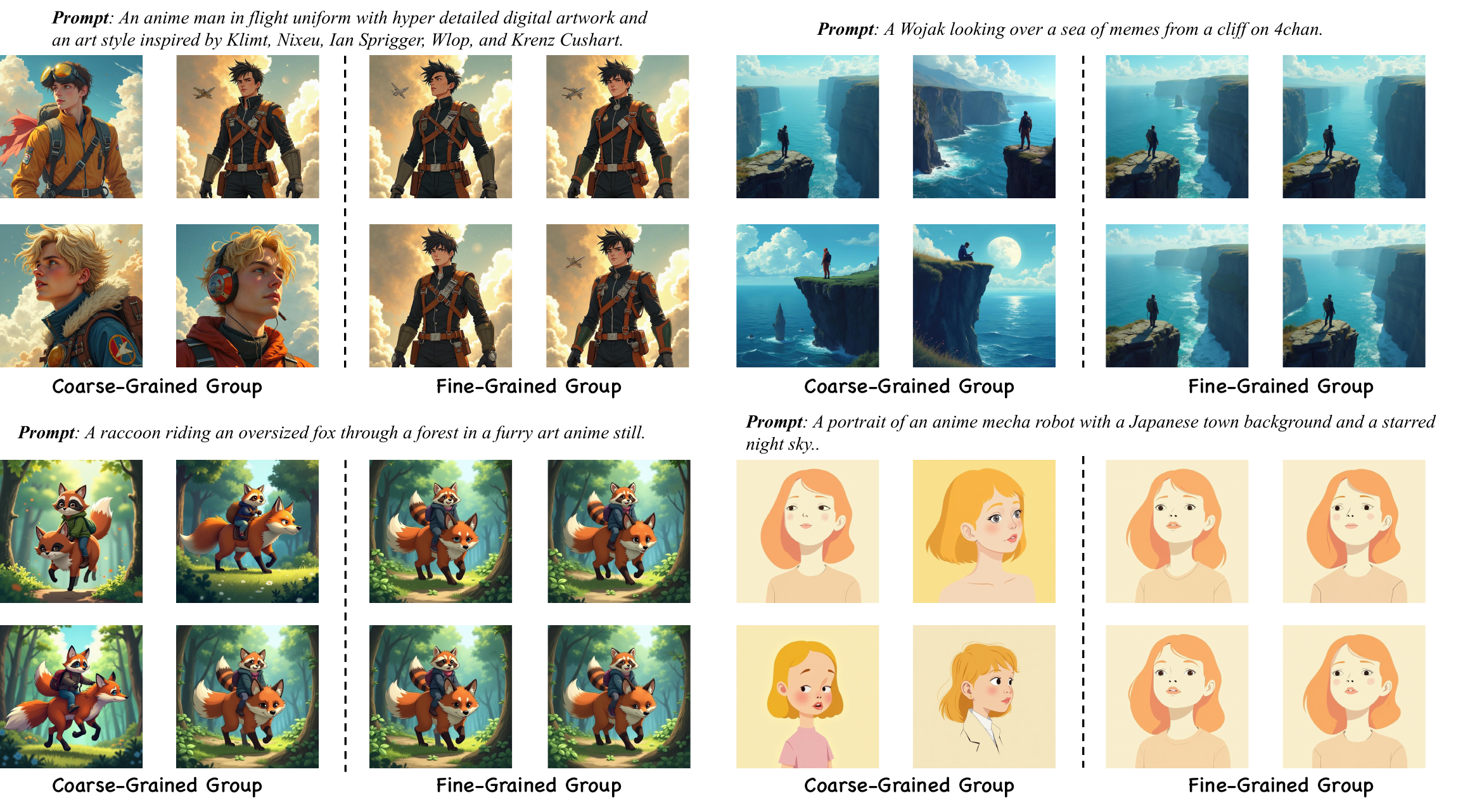}
\caption{Visualization of coarse- and fine-grained groups. Coarse-grained groups exhibit greater variation in global semantics, whereas fine-grained groups show higher similarity in local details.}
\label{supp:fig:coarse_fine_group}
\end{figure}

\begin{figure}[b]
  \centering
  \includegraphics[width=0.48\linewidth]{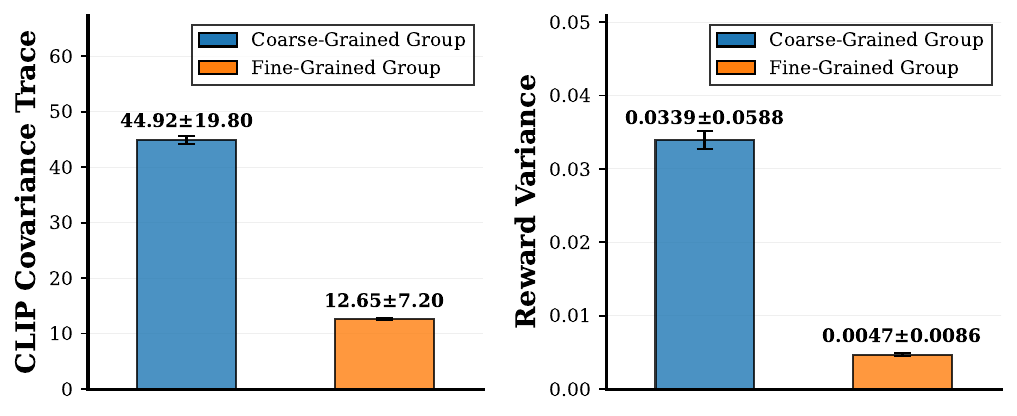}
  \caption{Comparison for fine-grained and coarse-grained groups in terms of (a) diversity and (b) contrast. The fine-grained group isolates randomness from progressive (in-trajectory) noise only, while the coarse-grained group includes randomness from both initial noise and progressive noise.}
  \label{supp:image_variance_comparison}
\end{figure}

\subsection{Comparison between Initial Noise and Progressive Noise}
\label{app:initial_noise_analysis}
\noindent \textbf{Experimental Settings.} To compare the characteristics of initial noise and progressive (in-trajectory) noise, we quantitatively measure contrast and diversity for both coarse-grained and fine-grained groups. Concretely, we select 50 prompts from HPDv2~\cite{wu2023human}. For each prompt, using FLUX.1~dev~\cite{flux2024} as the base model, we generate 12 images with 16 inference steps, a guidance scale of 3.5, and a resolution of $720\times720$. (1) For each group, we compute diversity as the variance in the CLIP feature space, that is, the trace of the covariance matrix. (2) In addition, we examine the dispersion of reward values within each group by reporting the standard deviation of ImageReward~\cite{xu2023imagereward}. To ensure statistical significance and robustness, all experiments are conducted on 50 prompts with 50 independent repetitions per prompt, and we report the mean.


\noindent \textbf{Results.} As shown in Fig.~\ref{supp:image_variance_comparison}, we compare coarse- and fine-grained groups with respect to diversity and contrast. Diversity is defined as the trace of the covariance matrix in the CLIP feature space, and contrast is measured by the standard deviation of ImageReward within each group. Quantitatively, the coarse-grained group exhibits markedly higher diversity and contrast than the fine-grained group. This pattern indicates that randomness attributable to initial noise, present only in the coarse-grained setting, primarily shapes global semantics, whereas progressive in-trajectory noise, shared by both settings, mainly affects local details. Consistent qualitative evidence is shown in Fig.~\ref{supp:fig:coarse_fine_group}, which further corroborates the distinct roles of initial and progressive noise in the generation process.

\begin{table}[t]
\centering
\caption{Parameter Sensitivity Analysis.}
\label{tab:Sensitivity}
\resizebox{\columnwidth}{!}{%
\small
\begin{tabular}{lcccccccccc}
\toprule
Method & Period & Ratio & HPS-v2.1 & ImageReward & PickScore & Aesthetic Pred. v2.5 & CLIP Score & Unified Reward-S & Unified Reward & Avg. \\
\midrule
Flux Dev (\textit{Base}) & - & - & \textcolor{gray!50}{0.312} & 1.09 & 0.226 & 5.84 & 0.388 & 3.37 & 3.52 & 2.40 \\
DanceGRPO \cite{xue2025dancegrpo} & - & - & \textcolor{gray!50}{\textbf{0.353}} & 1.16 & 0.226 & 5.90 & 0.361 & 3.30 & 3.38 & 2.39 \\
\midrule
ConsistentRFT & 20 & 0.25 & \textcolor{gray!50}{0.351} & 1.28 & \textbf{0.231} & 6.16 & 0.378 & 3.38 & 3.57 & 2.50 \\
ConsistentRFT & 50 & 0.25 & \textcolor{gray!50}{0.346} & 1.29 & 0.229 & 6.13 & 0.387 & 3.43 & 3.60 & 2.51 \\
\cellcolor{gray!20}ConsistentRFT  & \cellcolor{gray!20}40 & \cellcolor{gray!20}0.25 & \cellcolor{gray!20}\textcolor{gray!50}{0.348} & \cellcolor{gray!20}\textbf{1.30} & \cellcolor{gray!20}0.230 & \cellcolor{gray!20}\textbf{6.20} & \cellcolor{gray!20}0.384 & \cellcolor{gray!20}3.41 & \cellcolor{gray!20}\textbf{3.62} & \cellcolor{gray!20}\textbf{2.52} \\
\midrule
ConsistentRFT & 40 & 0.75 & \textcolor{gray!50}{0.342} & 1.25 & 0.228 & 6.04 & \textbf{0.389} & \textbf{3.46} & 3.55 & 2.49 \\
ConsistentRFT & 40 & 0.5 & \textcolor{gray!50}{0.344} & 1.27 & 0.229 & 6.12 & 0.385 & 3.43 & 3.58 & 2.50 \\
\cellcolor{gray!20}ConsistentRFT  & \cellcolor{gray!20}40 & \cellcolor{gray!20}0.25 & \cellcolor{gray!20}\textcolor{gray!50}{0.348} & \cellcolor{gray!20}\textbf{1.30} & \cellcolor{gray!20}0.230 & \cellcolor{gray!20}\textbf{6.20} & \cellcolor{gray!20}0.384 & \cellcolor{gray!20}3.41 & \cellcolor{gray!20}\textbf{3.62} & \cellcolor{gray!20}\textbf{2.52} \\
\bottomrule
\end{tabular}%
}
\end{table}

\begin{figure}[b]
  \centering
  \includegraphics[width=1\linewidth]{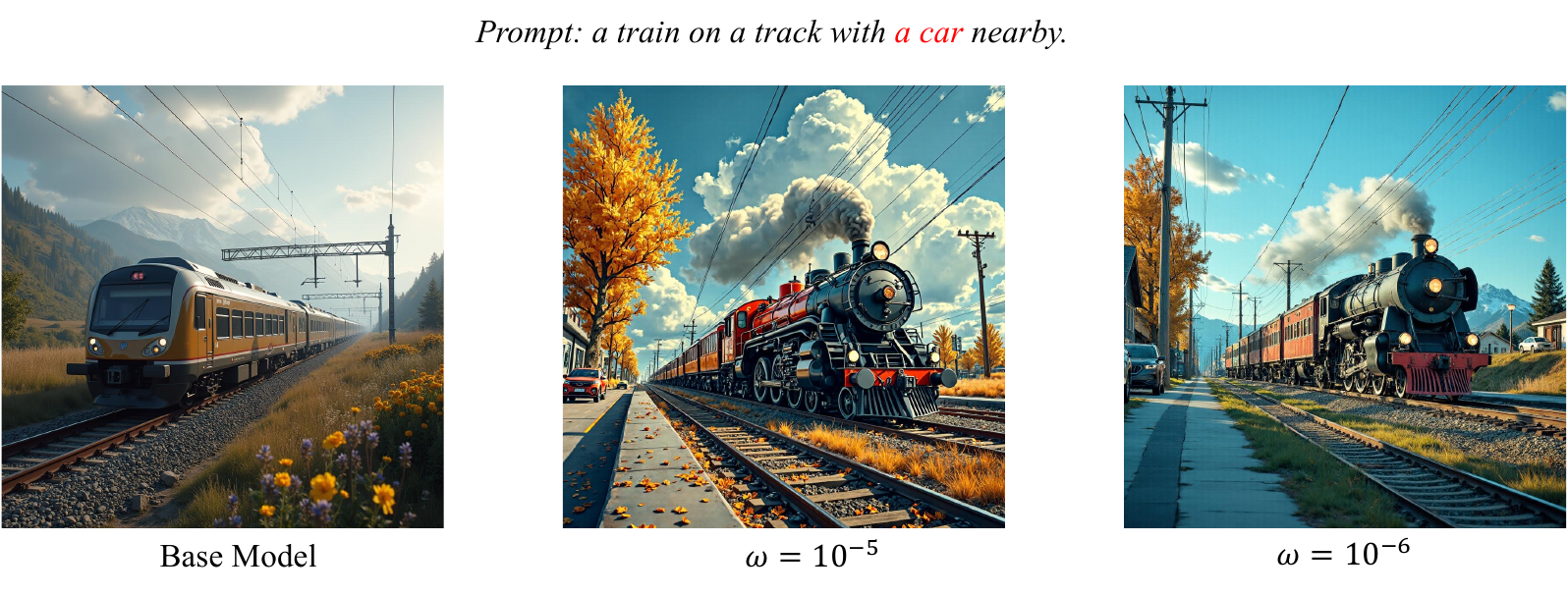}
  \caption{Effect of the CPGO weight $\omega$ on optimization. A larger weight (e.g., $10^{-5}$) can cause saturation, while moderate values ($10^{-6}$ or less) strike a better balance between constraint strength and optimization flexibility.}
  \label{supp:vis_cpgo}
\end{figure}

\subsection{Parameter Selection Discussion}
\label{app:param_sensitivity_analysis}
In this section, we discuss the key hyperparameters of our ConsistentRFT, including the period of the inter-group dynamic-grained rollout, the coarse-grained ratio, the coarse perception timestep in the intra-group dynamic-grained rollout, the weight of CPGO, and the threshold used to filter unreliable timesteps. All hyperparameters are determined either from validation experiments or heuristic studies (e.g., Fig.~\ref{supp:app:Coarse_Progress_Perception}). Moreover, when applying our method to DPO and DDPO, we simply align their hyperparameter settings with those of DanceGRPO without additional tuning, highlighting the robustness and transferability of our hyperparameter choices. {{In practice, choosing appropriate hyperparameters is crucial; in what follows, we provide practical guidelines for selecting them.}}

\noindent \textbf{Coarse Perception Timestep $t_\text{s}$.}
The coarse perception timestep $t_{\mathrm{s}}$ specifies the denoising step at which we perform Coarse Progress Perception, allowing an intermediate noisy sample to serve as a proxy for the final clean image. To determine a suitable value of $t_{\mathrm{s}}$, we conduct an empirical study in which, for each candidate timestep, we (i) measure the similarity between noisy and clean samples and (ii) compute the correlation between Coarse Progress Perception and full-trajectory rewards (see Fig.~\ref{supp:app:Coarse_Progress_Perception}). These analyses show that the reward signal becomes increasingly predictive as $t_{\mathrm{s}}$ moves towards later steps. Based on these results, we set $t_{\mathrm{s}}{=}12$ as our default choice and additionally consider a faster variant with $t_{\mathrm{s}}{=}8$, which provides a favorable trade-off between accuracy and efficiency. {{Therefore, for fine-tuning methods with a different number of sampling steps (i.e., other than 16), we recommend a simple visualization experiment similar to Fig.~\ref{supp:app:Coarse_Progress_Perception} to choose $t_{\mathrm{s}}$, which only takes about 10 minutes for computation.}}

\noindent \textbf{Weight of CPGO $\omega$.}
The CPGO weight $\omega$ controls the strength of the consistency constraint that encourages the ODE-based prediction in CPGO to mimic the SDE sampling trajectory used by GRPO. Empirically, we observe that the GRPO loss is on the order of $10^{-7}$, whereas the CPGO loss is typically around $10^{-4}$. Since CPGO acts as a regularization term, we recommend searching $\omega$ in $\{10^{-5},10^{-6},10^{-7}\}$ so that the CPGO loss remains 2--4 orders of magnitude smaller than the GRPO loss.
As illustrated in Fig.~\ref{supp:vis_cpgo}, setting $\omega=10^{-5}$ can lead to saturation phenomena reminiscent of over-strong distillation effects in adversarial score distillation~\cite{Wei_2024_CVPR}, while $\omega=10^{-6}$ or smaller yields more desirable behavior, with smaller values gradually weakening the effect of the constraint. In practice, we therefore recommend choosing $\omega$ between $10^{-6}$ and $10^{-7}$. In all our experiments, we unifiedly fix $\omega=10^{-6}$ for DPO, DDPO, and GRPO.

\noindent \textbf{Period of Inter-Group DGR and Coarse-Grained Rollout Ratio.} We analyze the sensitivity of the method to two key hyperparameters: the period controlling the switching frequency between coarse-grained and fine-grained optimization, and the ratio governing the proportion of coarse-grained rollouts. To balance the effectiveness of each optimization type, we recommend selecting the period from \{20, 40, 50\} and the ratio from \{0.25, 0.5, 0.75\}. A more detailed discussion of how the coarse-grained ratio interacts with different types of reward functions is provided in Sec.~\ref{app:coarse_grained_optimization}. The results in Table~\ref{tab:Sensitivity} align with the analysis presented in Section~\ref{app:com_fine_coarse_dynamic}. Both excessively small and large periods tend to diminish performance. Importantly, the method demonstrates robustness to these hyperparameter choices, with relatively stable results across different configurations.

\section{Additional Method Details \& Discussion}

This section provides implementation details, training algorithms, clustering mechanisms, and unified comparisons across GRPO, DPO, and DDPO.
\label{app:method_details}
\subsection{Additional Preliminaries}
\label{app:preliminaries}

Flow matching~\cite{lipman2022flow} trains a time-dependent velocity field $v_\theta(\mathbf{x}_t, t)$ that transports a simple prior distribution to the data distribution via continuous-time dynamics. We summarize the training objective and inference procedure below.

\noindent \textbf{Training Objective.} The flow matching loss minimizes the discrepancy between the learned velocity field and a target conditional velocity along an interpolation path:
\begin{equation}
\mathcal{L}_{\mathrm{FM}}(\theta) = \mathbb{E}_{t \sim \mathcal{U}(0,1),\, \mathbf{x}_0 \sim p_{\text{data}},\, \mathbf{x}_1 \sim \mathcal{N}(\mathbf{0}, \sigma^2 \mathbf{I})} \left[ \|v_\theta(\mathbf{x}_t, t) - u_t(\mathbf{x}_t \mid \mathbf{x}_0, \mathbf{x}_1)\|_2^2 \right],
\label{eq:fm_loss}
\end{equation}
where the linear interpolation and target velocity are defined by
\begin{equation}
\mathbf{x}_t = (1-t)\, \mathbf{x}_0 + t\, \mathbf{x}_1, \quad u_t(\mathbf{x}_t \mid \mathbf{x}_0, \mathbf{x}_1) = \mathbf{x}_1 - \mathbf{x}_0.
\label{eq:interpolation}
\end{equation}
This formulation directly regresses the network output $v_\theta$ to the conditional flow velocity $u_t$, avoiding path simulation and enabling efficient training.

\noindent \textbf{Inference via ODE.} Image synthesis proceeds by solving the probability flow ODE using the learned velocity field:
\begin{equation}
\frac{d\mathbf{x}}{dt} = v_\theta(\mathbf{x}_{t}, t), \qquad \mathbf{x}_{1} \sim \mathcal{N}(\mathbf{0}, \sigma^2 \mathbf{I}),
\label{eq:ode_inference}
\end{equation}
which integrates backward from $t{=}1$ (noise distribution) to $t{=}0$ (real data distribution). A first-order Euler solver with $T$ discrete steps $1 = t_0 > t_1 > \cdots > t_T = 0$ yields
\begin{equation}
\mathbf{x}_{t_{n+1}} = \mathbf{x}_{t_n} + (t_{n+1} - t_n)\, v_\theta(\mathbf{x}_{t_n}, t_n).
\label{eq:single_step}
\end{equation}
The single-step ODE prediction function is
\begin{equation}
\pi_\theta^{\mathrm{ODE}}(\mathbf{x}_t, t) := \mathbf{x}_t - t\, v_\theta(\mathbf{x}_t, t),
\label{eq:pi_definition}
\end{equation}
which recovers a clean estimate from an intermediate state using the learned velocity.

\subsection{Algorithm}
\label{app:overall}
\begin{algorithm}[t]
\caption{ConsistentRFT Training Algorithm}
\label{alg:ConsistentRFT}
\small
\begin{algorithmic}[1]
\REQUIRE Velocity field $v_\theta(\cdot, \cdot, \cdot)$, reward model $r(\cdot, \cdot)$, prompt dataset $\mathcal{D}_c$, group size $K$, intermediate step $t_\text{s}$, cluster size $K_1$
\ENSURE Optimized velocity field $v_\theta$
\FOR{iteration $m = 1, \ldots, M$}
    \STATE Sample prompts $\mathcal{D}_b \sim \mathcal{D}_c$
    \FOR{prompt $\mathbf{c} \in \mathcal{D}_b$}
        \STATE \textit{$\triangleright$ \underline{\textbf{Exploration: Dynamic Granularity Rollout}}}
        \IF{\textit{Fine-Grained Exploration}}
            \STATE Initialize $K$ Gaussian noises via $\{\mathbf{x}_T^k\}_{k=1}^K=\{\mathbf{x}_T^k\mid \mathbf{x}_T^k \stackrel{\text{i.i.d.}}{\sim}  \mathcal{N}(\mathbf{0}, \mathbf{I})\}$
        \ELSE
            \STATE Initialize $K$ Gaussian noises via $\{\mathbf{x}_T^k\mid \mathbf{x}_T^k \leftarrow \left(\mathbf{x}_T \sim \mathcal{N}\left(\mathbf{0}, \mathbf{I}\right)\right)\}$
        \ENDIF
        \FOR{$k = 1, \ldots, K$}
            \STATE Compute velocities: $\{\mathbf{v}_t^k\}_{t \in [t_\text{s}:T]} \gets v_\theta(\mathbf{x}_t^k, t, \mathbf{c})$, $\{\mathbf{v}_t^{k,\text{old}}\}_{t \in [t_\text{s}:T]} \gets v_{\theta_\text{old}}(\mathbf{x}_t^k, t, \mathbf{c})$
            \STATE SDE-based sampling: $\mathbf{x}_{t_\text{s}}^k = \mathbf{x}_T^k - \int_{t_\text{s}}^{T} \left(\mathbf{v}_t^k - \tfrac{1}{2}\varepsilon_t^2 \nabla\log p_t(\mathbf{x}_t^k)\right)\mathrm{d}t + \int_{t_\text{s}}^{T} \varepsilon_t \, \mathrm{d}\mathbf{w}_t^k$
            \STATE Coarse perception: $\hat{\mathbf{x}}_0^k = \mathbf{x}_{t_\text{s}}^k - t_\text{s} \, \mathbf{v}_{t_\text{s}}^k$
        \ENDFOR
        \STATE \textit{$\triangleright$ \underline{\textbf{Clustering-based Selection}}}
        \STATE Compute latents: $\mathbf{z}^k \gets \mathrm{Enc}(\hat{\mathbf{x}}_0^k)$ for $k = 1, \ldots, K$
        \STATE Perform $k$-means clustering: $\mathcal{C} \gets \mathrm{KMeans}(\{\mathbf{z}^k\}_{k=1}^K, K_1)$
        \STATE Select representative samples: $\mathcal{G}_1 \gets \mathrm{Select}(\{\mathbf{x}_{t_\text{s}}^k\}_{k=1}^K, \mathcal{C})$
        \STATE Get complementary set: $\mathcal{G}_2 \gets \{\mathbf{x}_{t_\text{s}}^k\}_{k=1}^K \setminus \mathcal{G}_1$
        \STATE \textit{$\triangleright$ \underline{\textbf{Fine-Grained Refinement}}}
        \FOR{$\mathbf{x}_{t_\text{s}}^k \in \mathcal{G}_1$}
            \STATE Compute velocities: $\{\mathbf{v}_t^k\}_{t \in [0:t_\text{s}]} \gets v_\theta(\mathbf{x}_t^k, t, \mathbf{c})$, $\{\mathbf{v}_t^{k,\text{old}}\}_{t \in [0:t_\text{s}]} \gets v_{\theta_\text{old}}(\mathbf{x}_t^k, t, \mathbf{c})$
            \STATE SDE-Based sampling: $\mathbf{x}_0^k = \mathbf{x}_{t_\text{s}}^k - \int_0^{t_\text{s}} \left(\mathbf{v}_t^k - \tfrac{1}{2}\varepsilon_t^2 \nabla\log p_t(\mathbf{x}_t^k)\right)\mathrm{d}t + \int_0^{t_\text{s}} \varepsilon_t \, \mathrm{d}\mathbf{w}_t^k$ (Eq.~\eqref{app:eq:single_step_sde})
        \ENDFOR
        \STATE \textit{$\triangleright$ \underline{\textbf{Reward Computation}}}
        \FOR{group $\mathcal{G}_i \in \{\mathcal{G}_1, \mathcal{G}_2\}$}
            \STATE Compute rewards: $\{r_k = r(\mathbf{x}_0^k, \mathbf{c})\}_{k \in \mathcal{G}_i}$
            \STATE Normalize advantages: $A_k \gets (r_k - \bar{r}_i) / \sigma_{r,i}$ for $k \in \mathcal{G}_i$
            \STATE Compute $\mathcal{J}_\text{GRPO} (\theta, \mathcal{G}_i)$
        \ENDFOR
        \STATE \textit{$\triangleright$ \underline{\textbf{Policy Optimization}}}
        \STATE Compute GRPO loss: $\mathcal{J}_{{\mathrm{GRPO}}}(\theta, \mathcal{G}) = \mathcal{J}_{{\mathrm{GRPO}}}(\theta, \mathcal{G}_1) + \mathcal{J}_{{\mathrm{GRPO}}}(\theta, \mathcal{G}_2)$ 
        \FOR{$t \in \mathcal{T}_{{\text{sub}}} \subseteq \{1, \ldots, T\}$}
            \STATE \textit{$\triangleright$ \underline{\textbf{Consistency Policy Gradient Optimization (CPGO)}}}
            \IF{$t \geq \tau$}
                \STATE Compute $\mathcal{L}_{{\text{CPGO}}} (\theta, \mathcal{G}) = - \mathbb{E}_{k} \left[\left\| \pi_\theta^{{\mathrm{ODE}}}(\mathbf{x}_t^k, t) - \pi_{{\theta_\text{old}}}^{{\mathrm{ODE}}}(\mathbf{x}_{t-1}^k, t-1) \right\|_2^2\right]$ (using velocities from L.~11 and 22)
            \ENDIF
            \STATE \textit{$\triangleright$ \underline{\textbf{Update Policy}}}
            \STATE Compute total loss: $\mathcal{J}_{{\mathrm{ConsistentRFT}}}(\theta, \mathcal{G}) = \mathcal{J}_{{\mathrm{GRPO}}}(\theta, \mathcal{G}) + \omega \cdot \mathcal{L}_{{\text{CPGO}}} (\theta, \mathcal{G})$
            \STATE Update $\theta$ via gradient descent on $\mathcal{J}_{{\mathrm{ConsistentRFT}}}(\theta, \mathcal{G})$
        \ENDFOR
    \ENDFOR
\ENDFOR
\end{algorithmic}
\end{algorithm}

Algorithm~\ref{alg:ConsistentRFT} outlines the training pipeline of ConsistentRFT in a single pass per iteration: for each prompt, we conduct a dynamic–granularity rollout that initializes $K$ trajectories (fine- or coarse-grained) and computes velocities over the coarse window ($t\in[t_s,T]$) while performing SDE sampling to obtain intermediate states and their coarse ODE perceptions; we then embed and cluster these perceptions to select a representative subset $\mathcal{G}_1$ and its complement $\mathcal{G}_2$, and continue SDE sampling for $\mathcal{G}_1$ from $t_s$ to $0$, recording fine-grained velocities ($t\in[0,t_s]$) to furnish full-trajectory information; finally, we compute rewards and normalize advantages separately for $\mathcal{G}_1$ and $\mathcal{G}_2$, and optimize a total objective that aggregates GRPO losses from both groups with a consistency term (CPGO). \textit{Notably, CPGO requires no additional data: it reuses the velocities collected in the preceding stages to enforce cross-step consistency between consecutive ODE predictions, yielding a consistency constraint without extra collection overhead.}

\subsection{Details about Clustering-based Selection \& Fine-Grained Refinement}
\label{app:clustering_selection}
\noindent\textbf{Coarse-Grained Perception.} Here, we revisit the main text and restate our objective: to sample a representative group. Starting from Gaussian noise $\mathbf{x}_T \sim \mathcal{N}(\mathbf{0}, \mathbf{I})$, we first perform $t_{{\mathrm{s}}}$ sampling steps to obtain the noisy samples and their corresponding coarse noise-aware perceptions, as follows:
\begin{equation}
    \begin{aligned}
        \begin{cases}
            \mathbf{x}_{t_{{\mathrm{s}}}}^k = \mathbf{x}_T^k - \displaystyle\int_{t_{{\mathrm{s}}}}^{T} \Big(\mathbf{v}_t^k - \tfrac{1}{2} \varepsilon_t^2 \, \nabla \log p_t(\mathbf{x}_t^k)\Big) \, \mathrm{d}t 
            + \displaystyle\int_{t_{{\mathrm{s}}}}^{T} \varepsilon_t \, \mathrm{d}\mathbf{w}_t^k, \\[4pt]
            \hat{\mathbf{x}}_{0}^k = \mathbf{x}_{t_{{\mathrm{s}}}}^k - t_{{\mathrm{s}}} \, \mathbf{v}_{t_{{\mathrm{s}}}}^k.
        \end{cases}
    \end{aligned}\label{app:eq:single_step_sde}
\end{equation}
where $\mathbf{v}_t^k = v_\theta(\mathbf{x}_t^k, t)$ denotes the model-predicted velocity, $\varepsilon_t$ is the noise schedule, $\mathbf{w}_t^k$ is a standard Brownian motion, and $t_{{\mathrm{s}}}$ is the intermediate perception step. The estimate $\hat{\mathbf{x}}_0^k$ corresponds to the single-step ODE prediction $\pi_\theta^{{\mathrm{ODE}}}(\mathbf{x}_{t_{{\mathrm{s}}}}^k, t_{{\mathrm{s}}})$.

\noindent\textbf{Clustering-based Selection.} By Eq.~\eqref{app:eq:single_step_sde}, we construct the group $\mathcal{G} = \{\mathbf{x}_t^k \mid k \in [1:K],\, t \in [t_{{\text{stop}}}:T]\}$, which aggregates noisy intermediate states together with their coarse noise-aware perceptions (see Fig.~\ref{supp:app:coarse_opt}). We first embed all elements of $\mathcal{G}$ into a latent space via the encoder of reward model, and subsequently perform $k$-means clustering in the latent space to obtain $K_1$ cluster centers, $\{\mathbf{c}_i\}_{i=1}^{K_1}$:
\begin{equation}
    \begin{aligned}
        \mathcal{C} = \{\mathbf{c}_i\}_{i=1}^{K_1}
        = \mathrm{KMeans}\!\left(\mathrm{Enc}\!\left(\mathcal{G}\right),\, K_1\right),
    \end{aligned}
\end{equation}
where $\mathrm{Enc}(\cdot)$ denotes the reward model's encoder.

We then select a representative subset $\mathcal{G}_1$ by choosing the $K_1$ samples nearest to the cluster centers, and define the complementary subset, as follow:
\begin{equation}
    \begin{aligned}
        \mathcal{G}_1 &= \{\mathbf{x}_t^{i}\}_{i=1}^{K_1} = \mathrm{Select}\!\big(\mathcal{G}, \mathcal{C}\big), 
        \qquad
        \mathcal{G}_2 = \mathcal{G} - \mathcal{G}_1,
    \end{aligned}
\end{equation}
where $\mathrm{Select}(\cdot, \cdot)$ chooses the $K_1$ samples nearest to the centers.

\noindent\textbf{Coarse-Grained Perception.}  To enrich $\mathcal{G}_1$ with full-trajectory information, we continue SDE-based sampling for the selected samples from the intermediate step to $t{=}0$, thereby completing their denoising trajectories:
\begin{equation}
    \begin{aligned}
    \mathbf{x}_{0}^k =&\; \mathbf{x}_{t_\text{s}}^k 
    - \int_{0}^{t_\text{s}} \!\left(\mathbf{v}_t^k - \tfrac{1}{2}\varepsilon_t^2 \nabla\log p_t(\mathbf{x}_t^k)\right)\!\mathrm{d}t 
    + \int_{0}^{t_\text{s}} \!\varepsilon_t \,\mathrm{d}\mathbf{w}_t^k, \\
    \mathcal{G}_\text{f} \leftarrow &\; \{\mathbf{x}_{t}^k \mid k \in [1:K],\, t \in [0:t_{{\text{stop}}}]\}\\
    \mathcal{G}_1 \leftarrow &\; \mathcal{G}_1 \;{\cup}\; \mathcal{G}_\text{f}.
    \end{aligned}
\end{equation}
In contrast, $\mathcal{G}_2$ retains only coarse, intermediate-level states. These two subsets at different granularities are optimized separately in subsequent stages.

\noindent \textbf{Dual-Granularities Optimization.} Here, we have obtained two groups at different granularities, $\mathcal{G}_1$ and $\mathcal{G}_2$, where $\mathcal{G}_1$ aggregates representative, fully denoised, detail-rich samples and $\mathcal{G}_2$ retains coarser, intermediate-level samples. We optimize them separately:
\begin{equation*}
\begin{aligned}
\mathcal{J}_{{\mathrm{GRPO}}}(\theta, \mathcal{G}) = \mathcal{J}_{{\mathrm{GRPO}}}(\theta, \mathcal{G}_1) + \mathcal{J}_{{\mathrm{GRPO}}}(\theta, \mathcal{G}_2).
\end{aligned}
\end{equation*}

\subsection{Coarse- and Fine-Grained Optimization}
\label{app:coarse_grained_optimization}
Existing post-training methods for rectified flow models differ in how they balance coarse- and fine-grained exploration. Besides the fine-grained rollout strategy adopted by DanceGRPO~\cite{xue2025dancegrpo} and its variants (see Sec.~\ref{sec:motivation}), some methods such as FlowGRPO~\cite{liu2025flow} perform coarser exploration. \textit{This discrepancy is largely driven by the underlying task and reward design.} DanceGRPO primarily targets \textit{continuous, model-based rewards} (e.g., ImageReward~\cite{xu2023imagereward} or HPS), where even subtle changes in local details can be reflected in the reward signal, making fine-grained exploration effective. In contrast, FlowGRPO adopts a \textit{coarse-grained exploration scheme} that mainly adjusts global scene composition and object layout, making it difficult to focus on specific local regions and often leading to blurrier (See Fig.~\ref{supp:app:blurred_img}) and detail under-optimization (See Fig.~\ref{supp:app:coarse_opt}).

In Sec.~\ref{sec:experiments}, we report quantitative results on model-based reward tasks comparing coarse- and fine-grained optimization. Here, we provide an additional perspective on object-centric compositional evaluation using the GenEval benchmark~\cite{ghosh2023geneval}. As summarized in Tab.~\ref{tab:geneval}, incorporating Dynamic Granularity Rollout (DGR) into FlowGRPO yields consistent improvements across several GenEval categories, especially counting and attribute binding, without sacrificing performance on single- or two-object cases.

Beyond pure coarse- or fine-grained rollouts, our experiments indicate that dynamically mixing these granularities further improves robustness: coarse-grained exploration encourages semantic diversity at the scene level, while fine-grained refinement sharpens local details. This synergy is particularly beneficial for challenging object-centric prompts (e.g., small objects such as spoons), where the model must simultaneously place and render objects accurately. Overall, our dynamic-granularity rollout enables RFT methods to achieve better performance on tasks with both continuous and discrete reward signals.

\begin{table*}[t]
    \centering
    \caption{Results on the GenEval benchmark~\cite{ghosh2023geneval}. ``Obj.'' denotes object and ``Attr.'' denotes attribute. We report the compositional alignment scores for FlowGRPO with and without Dynamic Granularity Rollout (DGR).}
    \resizebox{\linewidth}{!}{%
    \begin{tabular}{l@{\hspace{0.9cm}}c@{\hspace{0.9cm}}c@{\hspace{0.9cm}}c@{\hspace{0.9cm}}c@{\hspace{0.9cm}}c@{\hspace{0.9cm}}c@{\hspace{0.9cm}}c}
    \toprule
    \textbf{Model} & \textbf{Overall} & \textbf{Single Obj.} & \textbf{Two Obj.} & \textbf{Counting} & \textbf{Colors} & \textbf{Position} & \textbf{Attr. Binding} \\
    \midrule
    FlowGRPO~\cite{liu2025flow} & \cellcolor{lightgray}0.95 & \textbf{1.00} & \textbf{0.99} & 0.95 & 0.92 & \textbf{0.99} & 0.86 \\
    FlowGRPO w/ DGR (Ours) & \cellcolor{lightgray}\textbf{0.96} & \textbf{1.00} & \textbf{0.99} & \textbf{0.97} & \textbf{0.93} & 0.98 & \textbf{0.89} \\
    \bottomrule
    \end{tabular}}
    \label{tab:geneval}
\end{table*}

\subsection{Discussion about ConsistentRFT for Online DPO, DDPO \& GRPO}
\label{app:consistent_rft_discussion}

\noindent In App.~\ref{app:preliminaries}, we presented online DPO, GRPO, and DDPO from a unified perspective as schemes that follow a common exploration, comparison, and update pattern over model-generated samples. Here we elaborate on this connection and explain why ConsistentRFT can consistently improve all three methods in the online setting.

\noindent \textbf{Shared exploration and comparison paradigm.} In the exploration stage, these methods sample candidate trajectories or images from the current policy. A reward model then scores these samples, and the learning signal is constructed by comparing samples within a shared context. GRPO explicitly performs group-wise comparison: it normalizes rewards within a prompt-level group to obtain advantages and uses these advantages as weights when updating the policy, thereby encouraging samples with higher group-relative rewards. Online DPO, by contrast, forms positive and negative preference pairs and maximizes the probability gap between preferred and unpreferred samples. Similar insights on contrastive learning have also been made in recent studies~\cite{wu2025takes,tong2025delving}.

 The original DDPO formulation aggregates samples at the global level~\cite{xue2025dancegrpo,dance_grpo_issue2}, rather than organizing them into prompt-level groups. While this paradigm is conceptually simple, it becomes difficult to optimize stably at scale, especially when the reward landscape is highly heterogeneous across prompts~\cite{deng2024prdp,dance_grpo_issue2}. Recent works have reported pronounced instability~\cite{deng2024prdp,dance_grpo_issue2}. Motivated by the design of DanceGRPO, we therefore adopt a prompt-level group-based formulation for DDPO, while retaining its objective. This modification brings DDPO structurally closer to GRPO and DPO, and substantially improves optimization stability in our experiments.

\noindent \textbf{Why ConsistentRFT unifies improvements across DPO, GRPO, and DDPO.} Given that all three algorithms rely on exploration over trajectories and comparative feedback within groups or pairs, they can all benefit from dynamic-granularity rollout. ConsistentRFT provides such a rollout mechanism: it adaptively mixes coarse-grained exploration with fine-grained refinement and augments the base objective with a consistency regularizer that aligns ODE-based predictions with SDE trajectories.  As a result, ConsistentRFT serves as a unified plug-in that consistently improves DPO, GRPO, and DDPO in the online setting.

\section{Additional Related Works}
\label{app:related_works}

\noindent \textbf{Reinforcement Fine-Tuning Methods.}
Early works in reinforcement fine-tuning for diffusion models include DDPO~\cite{black2023training}, which adapts policy gradient methods from reinforcement learning to fine-tune text-to-image models. DPOK~\cite{fan2023dpok} extends this approach with improved reward modeling. Direct Preference Optimization (DPO)~\cite{wallace2024diffusion} simplifies the process by eliminating the need for explicit reward models, instead learning directly from preference pairs. D3PO~\cite{yang2024using} further refines this by incorporating human feedback without requiring a separate reward model. These methods have demonstrated significant improvements in aligning generated images with human preferences and complex semantic objectives~\cite{liu2024alignment}.

\noindent \textbf{Concurrent GRPO-Style Works.}
Building upon DanceGRPO~\cite{xue2025dancegrpo} and Flow-GRPO~\cite{liu2025flow}, several concurrent works have explored orthogonal improvements to flow-based reinforcement fine-tuning. MixGRPO~\cite{li2025mixgrpo} combines ODE and SDE sampling strategies to unlock efficiency in flow-based GRPO, achieving better sample efficiency while maintaining generation quality by mixing deterministic and stochastic rollouts. Pref-GRPO~\cite{wang2025pref} incorporates pairwise preference rewards to enable more stable text-to-image reinforcement learning, addressing the challenge of learning from relative preferences rather than absolute reward scores. TempFlow-GRPO~\cite{he2025tempflow} introduces temporal scheduling mechanisms that dynamically adjust the optimization focus across different timesteps, recognizing that different stages of the generation process may require different optimization strategies. BranchGRPO~\cite{li2025branchgrpo} proposes structured branching in diffusion models to stabilize training dynamics and improve exploration efficiency during the rollout phase. Smart-GRPO~\cite{yu2025smart} focuses on intelligent noise sampling strategies, smartly sampling noise to achieve more efficient reinforcement learning of flow-matching models. Wang et al.~\cite{wang2025coefficients} introduce coefficients-preserving sampling methods to preserve important flow coefficients during the sampling process, maintaining the mathematical properties of the flow model. Dynamic-TreeRPO~\cite{fu2025dynamic} breaks the independent trajectory bottleneck by introducing structured sampling with dynamic tree structures, enabling more flexible trajectory exploration. G2RPO~\cite{zhou2025text} investigates granular GRPO for precise reward optimization in flow models, focusing on fine-grained reward signals at different generation stages. Sheng et al.~\cite{sheng2025understanding} provide theoretical analysis on understanding sampler stochasticity in training diffusion models for RLHF, offering insights into the role of noise in the reinforcement learning process. DiffusionNFT~\cite{zheng2025diffusionnft} proposes online diffusion reinforcement with forward process, introducing a novel perspective on leveraging the forward diffusion process for more effective training. Beyond flow-based models, AR-GRPO~\cite{yuan2025argrpo} extends GRPO to autoregressive image generation models, demonstrating the versatility of group-based policy optimization across different generative paradigms. Similarly, Zhang et al.~\cite{zhang2025group} introduce group critical-token policy optimization specifically designed for autoregressive image generation, focusing on identifying and optimizing critical tokens in the generation sequence. Ye et al.~\cite{ye2025reinforcement} explore reinforcement learning with inverse rewards for world model post-training, addressing the challenge of aligning video generation models with desired dynamics.

\section{Visual Hallucination Evaluator}
\label{app:vh_evaluator}
In this section, we describe the visual hallucination evaluator used to evaluate the performance of our method.
\subsection{Low-Level Evaluation}
We first describe the low-level image-space and latent-space metrics used by our visual hallucination evaluator to quantify detail over-optimization, artifacts, and noise level.

\subsubsection{Laplacian Variance}
Let $I: \Omega \to \mathbb{R}$ denote a grayscale image defined on domain $\Omega \subset \mathbb{Z}^2$. The Laplacian Variance~\cite{Bansal2016} is defined as the variance of the discrete Laplacian operator applied to $I$:
\begin{equation}
\mathcal{S}(I) = \text{Var}(\nabla^2 I),
\end{equation}
where the discrete Laplacian is given by
\begin{equation}
\nabla^2 I(x,y) = I(x+1,y) + I(x-1,y) + I(x,y+1) + I(x,y-1) - 4I(x,y),
\end{equation}
and the variance is computed as
\begin{equation}
\text{Var}(\nabla^2 I) = \frac{1}{|\Omega|} \sum_{(x,y) \in \Omega} (\nabla^2 I(x,y) - \mu)^2,
\end{equation}
where $\mu = \frac{1}{|\Omega|} \sum_{(x,y) \in \Omega} \nabla^2 I(x,y)$ is the mean of the Laplacian response.

The Laplacian operator~\cite{Bansal2016} is a second-order differential operator that isolates high-frequency components of the image. In a sharp image, edges exhibit rapid intensity transitions, producing large-magnitude Laplacian responses with high variance. In contrast, blurred images exhibit smooth transitions with attenuated Laplacian responses and lower variance. This metric quantifies the degree of edge definition and is invariant to global intensity shifts.

\subsubsection{High-Frequency Energy}
Let $I: \Omega \to \mathbb{R}$ denote a grayscale image. The high-frequency energy~\cite{abdel2003analysis} is defined as the mean absolute response of a high-pass filter applied to $I$:
\begin{equation}
\mathcal{E}_{{\text{hf}}}(I) = \frac{1}{|\Omega|} \sum_{(x,y) \in \Omega} |H(x,y)|,
\end{equation}
where
\begin{equation}
H(x,y) = (I * K_{{\text{hp}}})(x,y) = \sum_{u,v} I(x-u, y-v) \cdot K_{{\text{hp}}}(u,v)
\end{equation}
is the high-pass filtered image obtained via convolution, and the high-pass filter kernel $K_{{\text{hp}}}$ is defined as
\begin{equation}
K_{{\text{hp}}} = \begin{bmatrix} -1 & -1 & -1 \\ -1 & 8 & -1 \\ -1 & -1 & -1 \end{bmatrix}.
\end{equation}

High-frequency components represent rapid spatial variations in image intensity. The magnitude of high-frequency energy serves as an indicator of image complexity, texture richness, and potential artifacts. In the context of image enhancement, excessive high-pass filtering amplifies both legitimate details and noise, producing characteristic artifacts. The mean absolute response quantifies the overall magnitude of these high-frequency components.

\subsubsection{Edge Metric}
Let $I: \Omega \to \mathbb{R}$ denote a grayscale image. The edge metric quantifies ringing artifacts by measuring intensity variation in edge neighborhoods~\cite{ziou1998edge, abdel2003analysis}.

\noindent \textbf{Edge Detection.} Edges are detected using the Canny edge detector:
\begin{equation}
E(x,y) = \text{Canny}(I; \tau_1, \tau_2),
\end{equation}
where $\tau_1$ and $\tau_2$ are the lower and upper gradient magnitude thresholds, respectively. The Canny detector produces a binary edge map $E: \Omega \to \{0,1\}$.

\noindent \textbf{Morphological Dilation.} The edge map is dilated to capture the edge neighborhood:
\begin{equation}
E_{{\text{dil}}}(x,y) = (E \oplus B)(x,y) = \max_{(u,v) \in B} E(x-u, y-v),
\end{equation}
where $\oplus$ denotes morphological dilation and $B$ is a $3 \times 3$ structuring element. Dilation is applied iteratively $n = 2$ times to expand the edge region.

\noindent \textbf{Artifact Quantification.} The edge artifact metric is defined as the standard deviation of pixel intensities within the dilated edge region:
\begin{equation}
\mathcal{A}_{{\text{edge}}}(I) = \sqrt{\frac{1}{|M|} \sum_{(x,y) \in M} (I(x,y) - \bar{I}_M)^2},
\end{equation}
where $M = \{(x,y) \in \Omega : E_{{\text{dil}}}(x,y) > 0\}$ is the set of pixels in the dilated edge region, and $\bar{I}_M = \frac{1}{|M|} \sum_{(x,y) \in M} I(x,y)$ is the mean intensity in $M$.

Over-sharpening produces characteristic ringing artifacts at edges, manifesting as bright and dark halos adjacent to edge transitions (Gibbs phenomenon). These artifacts are characterized by abnormally high intensity variation in edge neighborhoods. By restricting the standard deviation computation to edge regions, this metric isolates the contribution of ringing artifacts while minimizing contamination from natural texture variation. The metric is grounded in the theory of edge-preserving filtering and artifact characterization.

\subsubsection{Noise Estimation}
Let $I: \Omega \to \mathbb{R}$ denote a grayscale image. The noise level is estimated from smooth (low-texture) regions where pixel variations are primarily attributable to noise~\cite{Shin2005, Rakhshanfar2016}.

\noindent \textbf{Local Texture Characterization.} We first compute the local standard deviation at each pixel to characterize local texture:
\begin{equation}
\sigma_{{\text{local}}}(x,y) = \sqrt{\frac{1}{K^2} \sum_{(u,v) \in W(x,y)} (I(u,v) - \bar{I}_{W})^2},
\end{equation}
where $W(x,y)$ is a square window of size $K \times K$ centered at $(x,y)$, and $\bar{I}_W$ is the local mean within the window.

\noindent \textbf{Smooth Region Identification.} Smooth regions are identified as those with local standard deviation below the 30th percentile:
\begin{equation}
M_{{\text{smooth}}} = \{(x,y) \in \Omega : \sigma_{{\text{local}}}(x,y) < P_{30}(\sigma_{{\text{local}}})\},
\end{equation}
where $P_{30}$ denotes the 30th percentile of the local standard deviation distribution.

\noindent \textbf{Noise Map Computation.} The noise map is computed as the absolute difference between the original image and a Gaussian-smoothed version:
\begin{equation}
N_{{\text{map}}}(x,y) = |I(x,y) - G_{\sigma}(I)(x,y)|,
\end{equation}
where $G_{\sigma}$ is a Gaussian low-pass filter with standard deviation $\sigma = 1.0$ and kernel size $5 \times 5$.

\noindent \textbf{Noise Level Estimation.} The noise level is estimated as the mean of the noise map restricted to smooth regions:
\begin{equation}
\mathcal{L}_{{\text{noise}}}(I) = \frac{1}{|M_{{\text{smooth}}}|} \sum_{(x,y) \in M_{{\text{smooth}}}} N_{{\text{map}}}(x,y).
\end{equation}
This metric quantifies the noise level by averaging noise map values over regions with minimal texture, providing a robust estimate of additive noise.

In smooth image regions with minimal texture, the difference between the original image and a slightly smoothed version approximates the noise component, as texture is attenuated by the smoothing operation while noise remains largely unchanged. By restricting this estimation to regions identified as smooth via local standard deviation, the metric avoids contamination from texture and edge structures. This approach is grounded in the principle that noise is signal-independent and spatially uncorrelated, while texture exhibits spatial correlation and concentrates in high-variance regions.

\subsubsection{Latent Consistency}
Flow matching models learn to transform noise into data by modeling a continuous-time ordinary differential equation (ODE). The latent consistency metric quantifies the deviation of the learned flow from the ideal linear interpolation path, providing a measure of trajectory stability and model fidelity.

\noindent \textbf{Reverse Process.} Given a flow matching model parameterized by velocity field $\mathbf{v}_\theta: \mathbb{R}^d \times [0,1] \to \mathbb{R}^d$, the reverse sampling process follows the ODE
\begin{equation}
\left\{
\begin{aligned}
&\frac{\mathrm{d}\mathbf{x}_t}{\mathrm{d}t} = \mathbf{v}_\theta (\mathbf{x}_t,t), \\
&\mathbf{x}_t = \mathbf{x}_1 - \int_{1}^{t} \mathbf{v}_\theta (\mathbf{x}_s,s)\; \mathrm{d}s,
\end{aligned}
\right.
\label{eq:reverse}
\end{equation}
where $\mathbf{x}_t \in \mathbb{R}^d$ denotes the latent representation at time $t \in [0,1]$, with $\mathbf{x}_1 \sim \mathcal{N}(\mathbf{0}, \mathbf{I})$ representing the initial noise and $\mathbf{x}_0$ the generated clean latent. The integral is typically approximated using numerical ODE solvers such as Euler or Runge--Kutta methods with $T$ discrete timesteps.

\noindent \textbf{Ideal Flow Trajectory.} In the theoretical framework of flow matching, the optimal transport path between noise distribution $p_1 = \mathcal{N}(\mathbf{0}, \mathbf{I})$ and data distribution $p_0$ follows a linear interpolation. Given a clean latent $\mathbf{x}_0$ obtained via \eqref{eq:reverse} and initial noise $\mathbf{x}_1 \sim \mathcal{N}(\mathbf{0}, \mathbf{I})$, the ideal noised latent at time $t$ is
\begin{equation}
\mathbf{x}_t^{\text{ideal}} = (1-t) \cdot \mathbf{x}_0 + t \cdot \mathbf{x}_1, \quad t \in [0,1].
\label{eq:ideal}
\end{equation}
This represents the straight-line geodesic in latent space connecting the data point $\mathbf{x}_0$ to its corresponding noise sample $\mathbf{x}_1$, which minimizes the transport cost under the $L^2$ metric.

\noindent \textbf{Latent Consistency Metric.} The latent consistency metric quantifies the deviation between the actual trajectory $\{\mathbf{x}_t\}_{t=0}^1$ produced by the learned model and the ideal linear path $\{\mathbf{x}_t^{\text{ideal}}\}_{t=0}^1$:
\begin{equation}
\text{Latent Consistency}(\mathbf{v}_\theta) = \frac{1}{\text{dim}(\mathbf{x}_0)} \mathbb{E}_{t \sim [0,1]} \left\| \mathbf{x}_{t} - \mathbf{x}_{t}^{\text{ideal}} \right\|_2^2,
\label{eq:consistency}
\end{equation}
where $\{t_i\}_{i=1}^T$ are uniformly spaced timesteps in $[0,1]$, and $\mathbf{x}_{t_i}$ is computed via \eqref{eq:reverse}.

This metric serves as a diagnostic tool for assessing model quality: \textit{Low consistency}: indicates that the learned flow deviates significantly from the optimal transport path, which suggests potential issues such as mode collapse, training instability, or over-optimization artifacts. \textit{High consistency}: suggests that the model closely follows the theoretical optimal path and indicates stable training and faithful representation of the data distribution.

\subsection{High-Level Evaluation}
Similar to PredGRPO~\cite{wang2025pref}, we complement the low-level metrics with a high-level evaluation based on a pre-trained multimodal large language model (MLLM), instantiated as Qwen-VL-72B~\cite{Qwen-VL}. To enable the model to understand the evaluation task, we adopt an in-context learning setup: the prompt includes an ``Example Comparison'' section that provides a reference pair illustrating different levels of over-optimization. The corresponding visual examples are shown in Fig.~\ref{supp:vh_example}.

\begin{figure}[t]
  \centering
  \includegraphics[width=\linewidth]{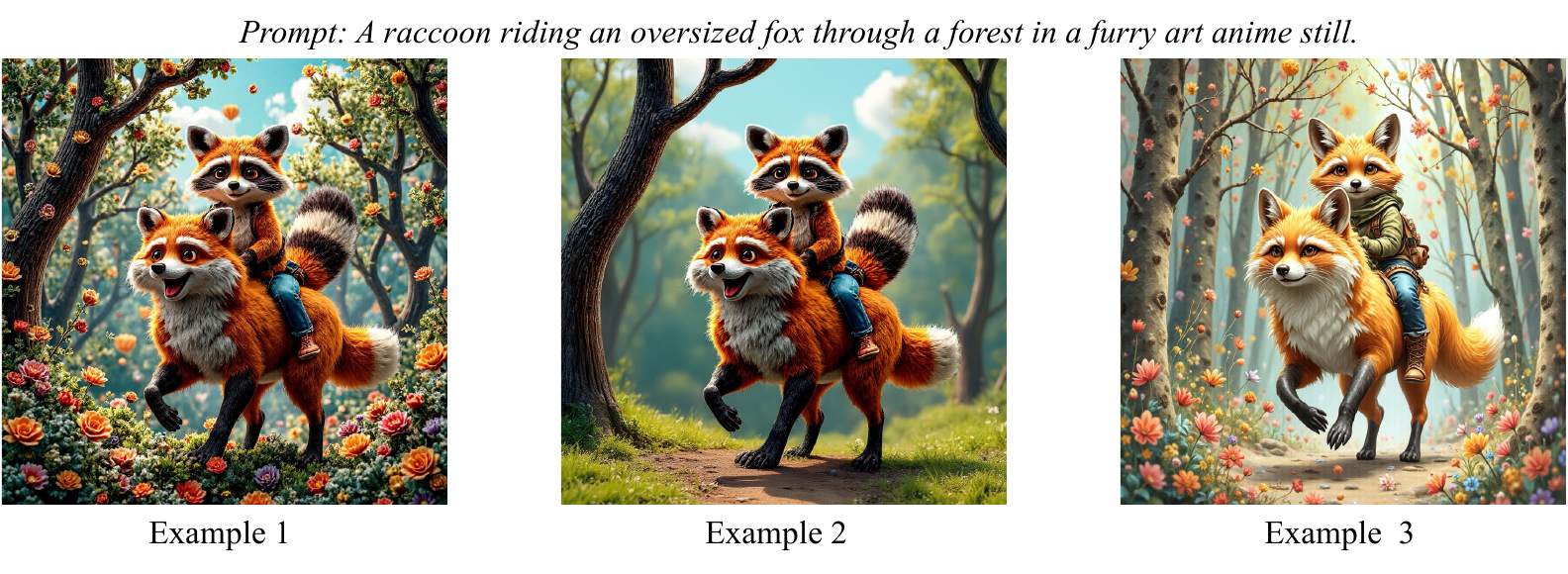}
  \caption{Reference examples used in the VH-Evaluator prompt. The examples illustrate (i) accumulation of irrelevant details versus moderate detailing (Example 1 \textit{v.s.} Example 2) and (ii) over-sharpened details versus natural sharpening (Example 1 \textit{v.s.} Example 3), corresponding to the ``Example Comparison'' section in the prompt.}
  \label{supp:vh_example}
\end{figure}

\begin{tcolorbox}[
    breakable,
    halign=flush left,
    colback=red!5!white,
    colframe=red!75!black,
    title={{\Large\textbf{Evaluation Prompts}}},
    fonttitle=\bfseries,
    fontupper=\ttfamily,
    width=\textwidth
]
\small
You are a professional image quality assessment expert specializing in evaluating over-optimization hallucination phenomena in images described by ``<prompt>". \textit{(Author's Note: In text-to-image generation, evaluation is naturally conditioned on prompts, but in other domains (e.g., image editing) such prompts may not exist or be well defined. Thus, <prompt> is optional.)}

\vspace{0.5em}
\textbf{Over-Optimization Hallucination Definition}

Over-optimization hallucination includes the following types:

\textbf{1. Grid Pattern Artifacts}
\begin{itemize}
\item Regular grid textures, repetitive patterns, and other artifacts appear
\item Usually caused by the internal structure of generative models
\end{itemize}

\textbf{2. Texture Over-Enhancement}
\begin{itemize}
\item Over-enhanced texture details that make the image look unnatural
\item Surfaces appear overly rough or overly smooth
\end{itemize}

\vspace{0.5em}
\textbf{Evaluation Task}

Please evaluate the over-optimization hallucination phenomena in the following image:

\textbf{Image to Evaluate:} The image below

\vspace{0.5em}
\textbf{Scoring Criteria}

\begin{itemize}
\item \textbf{Over-optimization Level}: 0--5 points, 0 means no problem at all, 5 means extremely severe
  \begin{itemize}
  \item detail\_sharpness\_score: Detail sharpening level (edge harshness, unnatural sharpening, high-frequency noise)
  \item irrelevant\_details\_score: Irrelevant details level (extra details unrelated to prompt, over-decoration, information overload)
  \item grid\_pattern\_score: Grid texture level (extra grid behind the generated image)
  \end{itemize}
\item \textbf{Overall Hallucination Level}: Average of the above three items
\item \textbf{Confidence}: 0--1, indicating your certainty about the scoring
\end{itemize}

\vspace{0.5em}
\textbf{Key Evaluation Points}

\begin{enumerate}
\item \textbf{Detail Inspection}:
   \begin{itemize}
   \item Are there unnecessary detail accumulations?
   \item Are edges over-sharpened?
   \item Are there elements unrelated to the original intent?
   \end{itemize}

\item \textbf{Naturalness Assessment}: Does the image look natural overall, or does it appear ``over-processed''?

\item \textbf{Artifact Detection}:
   \begin{itemize}
   \item Look for grid patterns or repetitive textures
   \item Check for over-enhanced or unnatural details
   \item Identify any signs of over-optimization
   \end{itemize}
\end{enumerate}

\vspace{0.5em}
\textbf{Reference Examples}

The following are three reference examples showing different types of over-optimization hallucination:

\textbf{Example Comparison 1: Irrelevant Details Accumulation vs Moderate Details}
\begin{itemize}
\item \textbf{Example 1 (Over-optimized)}: Added a large number of irrelevant detail elements, generated excessive decorative content, appearing chaotic and information-overloaded
\item \textbf{Example 2 (Moderately optimized)}: Maintained moderate details without over-decoration, information is clear and well-organized
\item \textbf{Key Difference}: Example 1's irrelevant\_details\_score should be significantly higher than Example 2's
\end{itemize}

\textbf{Example Comparison 2: Over-Sharpened Details vs Natural Sharpening}
\begin{itemize}
\item \textbf{Example 1 (Over-optimized)}: Edges are over-sharpened, appearing harsh and unnatural, with obvious high-frequency noise and ``over-refined'' feeling
\item \textbf{Example 3 (Moderately optimized)}: Edge sharpening is moderate, maintaining naturalness, details are clear but not harsh
\item \textbf{Key Difference}: Example 1's detail\_sharpness\_score should be significantly higher than Example 3's
\end{itemize}

\vspace{0.5em}
\textbf{Evaluation Guidance}

When evaluating, please refer to these examples:
\begin{itemize}
\item When you see \textbf{large amounts of irrelevant details} similar to Example 1, increase irrelevant\_details\_score
\item When you see \textbf{over-sharpened edges} similar to Example 1, increase detail\_sharpness\_score
\item Example 2 and Example 3 represent more reasonable optimization levels and should receive lower hallucination scores
\end{itemize}

\vspace{0.5em}
\textbf{Return Format (Must be valid JSON)}

\begin{lstlisting}[keepspaces=true, basicstyle=\ttfamily\small, breaklines=True]
{
    "image": {
        "has_over_optimization": true/false,
        "over_optimization_level": 0-5,
        "detail_sharpness_score": 0-5,
        "irrelevant_details_score": 0-5,
        "grid_pattern_score": 0-5,
        "overall_hallucination_score": 0-5,
        "confidence": 0-1,
        "detailed_analysis": "Detailed analysis description"
    }
}
\end{lstlisting}

Please carefully analyze the image and return the evaluation results in the above JSON format.

\end{tcolorbox}

\subsection{Visual Hallucination Evaluator Results}
\label{app:vh_evaluator_results}

Fig.~\ref{supp:vh_result} displays qualitative evaluation results from the VH-Evaluator, showcasing images at varying degrees of over-optimization hallucination. The results illustrate how the MLLM-based evaluator successfully identifies and differentiates between different hallucination severity levels, from minor over-optimization artifacts to severe detail accumulation and sharpening artifacts.

\begin{figure}[t]
  \centering
  \includegraphics[width=\linewidth]{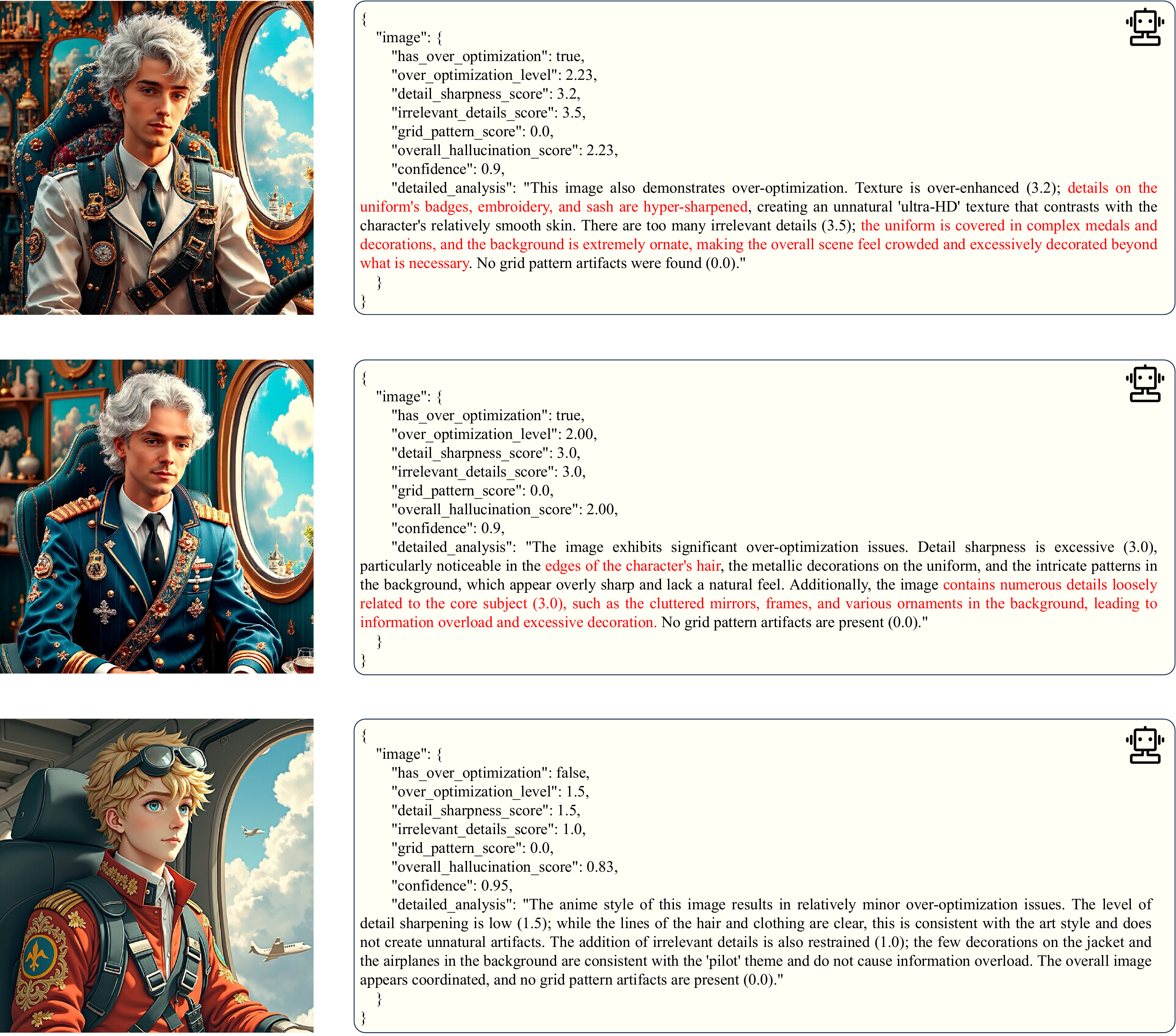}
  \caption{Qualitative results from the VH-Evaluator demonstrating detection of over-optimization hallucinations at varying levels.}
  \vspace{-0.4cm}
  \label{supp:vh_result}
\end{figure}

\section{Theoretical Justification}
\label{app:theoretic}
In this section, we provide the corollaries for existing RFT methods, including GRPO, DPO, and DDPO, and theoretical justification for our CPGO.
\subsection{Proof of Corollary 1}
\noindent \textbf{Corollary (Reinterpretation of GRPO).}\label{app:thm:reinterpretation}
Given a flow model $\theta$, a reward model $r(\mathbf{x},c)$, and trajectories sampled via SDE in Eq.~\eqref{eq:exploration}, the gradient of the GRPO satisfies
\begin{equation*}
\begin{split}
\nabla_{{\theta}} \mathcal{J}_\mathrm{GRPO} = \begin{dcases}
  0, & \text{if } (u_1 \wedge v_1) \vee  (u_2 \wedge \neg v_1), \\
  \mathbb{E}\!\left[  -\omega(t)\cdot \mathcal{A}^k \cdot \nabla_{{\theta}} \;\left\| \mu_{{\theta}}(\mathbf{x}_{t}, t, \mathbf{c}) - \mathbf{x}_{t-1} \right\|_2^2 \right], &\text{else.}
\end{dcases}
\end{split}
\end{equation*}
where $\mu_\theta(\cdot)$ denotes the mean value of predicted distribution of $p_\theta (\mathbf{x}_t \mid \mathbf{x}_{t-1})$, $\mathbb{I}(\cdot)$ is the indicator function, and conditions $u_1$ and $u_2$ are defined as follows:
\begin{equation*}
\begin{aligned}
  u_1 \leftarrow \mathbb{I} \big(\mathbf{\rho}_{t,i} \leq 1-\epsilon\big),\quad
  u_2 \leftarrow \mathbb{I} \big(\mathbf{\rho}_{t,i} \geq 1+\epsilon\big),\quad
  v_1 \leftarrow \mathbb{I} \big(\mathcal{A}^k < 0\big).
\end{aligned}
\end{equation*}

\begin{proof}[Proof of Corollary 1]
\label{app:theoretic:cannyor}
We prove that the clipped GRPO gradient either vanishes under clipping or coincides with the gradient of a weighted squared-error objective for the flow model's mean prediction. The standard clipped policy gradient objective for GRPO is:
\begin{equation}
\begin{aligned}
\mathcal{J}_{{\mathrm{GRPO}}}(\theta) = \mathbb{E}_{{\substack{\mathbf{c}\sim \mathcal{D}_c,
\mathbf{x}_t^i\ \sim \pi_{{\theta_{{\mathrm{old}}}}}(\cdot|\mathbf{c})
}}} \left[
\min\left( \rho_{t,i} A_i, \mathrm{clip}\left( \rho_{t,i}, 1-\epsilon, 1+\epsilon \right) A_i \right)
\right]
\end{aligned}
\label{app:eq:grpo_objective}
\end{equation}
where $\rho_{t,i} = \frac{p_{{\theta}}(\mathbf{x}_{t}^i\;|\;\mathbf{x}_{t-1}^i, \mathbf{c})}{p_{{\theta_{{\mathrm{old}}}}}(\mathbf{x}_{t}^i\;|\;\mathbf{x}_{t-1}^i, \mathbf{c})}$ is the probability ratio and $A_i$ is the advantage.

We decompose the clipped objective by defining indicator variables:
\begin{equation}
\begin{aligned}
u_1 &\leftarrow \mathbb{I}(\rho_{t,i} \leq 1-\epsilon), \quad
u_2 &\leftarrow \mathbb{I}(\rho_{t,i} \geq 1+\epsilon), \quad
v_1 &\leftarrow \mathbb{I}(A_i < 0).
\end{aligned}
\label{app:eq:indicator_vars}
\end{equation}

The clipped objective can be rewritten as:
\begin{equation}
\begin{aligned}
\mathcal{P}(A_i, \rho_{t,i}) &= \min\left( \rho_{t,i} A_i, \mathrm{clip}\left( \rho_{t,i}, 1-\epsilon, 1+\epsilon \right) A_i \right) \\
&= \begin{cases}
    (1-\epsilon) A_i, & \text{if } u_1 \wedge v_1, \\
    (1+\epsilon) A_i, & \text{if } u_2 \wedge \neg v_1, \\
    \rho_{t,i} A_i, & \text{otherwise}.
\end{cases}
\end{aligned}
\label{app:eq:clipped_decomposition}
\end{equation}

The above decomposition implies that when the probability ratio deviates significantly from 1, the clipped objective suppresses updates; when it remains within the clipping bounds (the typical stable-training regime), the objective reduces to the unclipped form:
\begin{equation}
\begin{aligned}
\nabla_\theta \mathcal{J}_{{\mathrm{GRPO}}}(\theta) &= 
\nabla_\theta \mathbb{E}_{{\substack{\mathbf{c}\sim \mathcal{D}_c,
\mathbf{x}_t^i\ \sim \pi_{{\theta_{{\mathrm{old}}}}}(\cdot|\mathbf{c})
}}} \left[
\rho_{t,i} A_i
\right].
\end{aligned}
\label{app:eq:unclipped}
\end{equation}

Next, applying a change of measure from $\pi_{{\theta_{{\mathrm{old}}}}}$ to the uniform distribution $\mathcal{U}$ yields:
\begin{equation}
\begin{aligned}
\mathcal{J}_{{\mathrm{GRPO}}}(\theta) &= \mathbb{E}_{{\substack{\mathbf{c}\sim \mathcal{D}_c,
\mathbf{x}_t^i\ \sim \pi_{{\theta_{{\mathrm{old}}}}}(\cdot|\mathbf{c})
}}} \left[
A_i \frac{p_{{\theta}}(\mathbf{x}_{t}^i\;|\;\mathbf{x}_{t-1}^i, \mathbf{c})}{p_{{\theta_{{\mathrm{old}}}}}(\mathbf{x}_{t}^i\;|\;\mathbf{x}_{t-1}^i, \mathbf{c})}
\right] \\
&= \mathbb{E}_{{\substack{\mathbf{c}\sim \mathcal{D}_c,
\mathbf{x}_t^i\ \sim \mathcal{U}
}}} \left[
A_i\cdot p_{{\theta_{{\mathrm{old}}}}}(\mathbf{x}_{t}^i\;|\;\mathbf{x}_{t-1}^i, \mathbf{c}) \cdot\frac{p_{{\theta}}(\mathbf{x}_{t}^i\;|\;\mathbf{x}_{t-1}^i, \mathbf{c})}{p_{{\theta_{{\mathrm{old}}}}}(\mathbf{x}_{t}^i\;|\;\mathbf{x}_{t-1}^i, \mathbf{c})}
\right] \\
&= \mathbb{E}_{{\substack{\mathbf{c}\sim \mathcal{D}_c,
\mathbf{x}_t^i\ \sim \mathcal{U}
}}} \left[
A_i\cdot p_{{\theta}}(\mathbf{x}_{t}^i\;|\;\mathbf{x}_{t-1}^i, \mathbf{c})
\right].
\end{aligned}
\label{app:eq:importance_weighted}
\end{equation}

Using the log-derivative trick, we obtain:
\begin{equation}
\begin{aligned}
\nabla_\theta \log p_{{\theta}}(\mathbf{x}_{t}^i\;|\;\mathbf{x}_{t-1}^i, \mathbf{c}) &= \frac{\nabla_\theta p_{{\theta}}(\mathbf{x}_{t}^i\;|\;\mathbf{x}_{t-1}^i, \mathbf{c})}{p_{{\theta}}(\mathbf{x}_{t}^i\;|\;\mathbf{x}_{t-1}^i, \mathbf{c})},
\end{aligned}
\label{app:eq:log_derivative}
\end{equation}
which implies:
\begin{equation}
\nabla_\theta p_{{\theta}}(\mathbf{x}_{t}^i\;|\;\mathbf{x}_{t-1}^i, \mathbf{c}) = p_{{\theta}}(\mathbf{x}_{t}^i\;|\;\mathbf{x}_{t-1}^i, \mathbf{c}) \cdot \nabla_\theta \log p_{{\theta}}(\mathbf{x}_{t}^i\;|\;\mathbf{x}_{t-1}^i, \mathbf{c}).
\end{equation}

Taking the gradient of Eq.~\eqref{app:eq:importance_weighted}:
\begin{equation}
\begin{aligned}
\nabla_\theta \mathcal{J}_{{\mathrm{GRPO}}}(\theta) &= \mathbb{E}_{{\substack{\mathbf{c}\sim \mathcal{D}_c,
\mathbf{x}_t^i\ \sim \mathcal{U}
}}} \left[
A_i\cdot \nabla_\theta p_{{\theta}}(\mathbf{x}_{t}^i\;|\;\mathbf{x}_{t-1}^i, \mathbf{c})
\right] \\
&= \mathbb{E}_{{\substack{\mathbf{c}\sim \mathcal{D}_c,
\mathbf{x}_t^i\ \sim \mathcal{U}
}}} \left[
A_i\cdot p_{{\theta}}(\mathbf{x}_{t}^i\;|\;\mathbf{x}_{t-1}^i, \mathbf{c}) \cdot \nabla_\theta \log p_{{\theta}}(\mathbf{x}_{t}^i\;|\;\mathbf{x}_{t-1}^i, \mathbf{c})
\right] \\
&= \mathbb{E}_{{\substack{\mathbf{c}\sim \mathcal{D}_c,
\mathbf{x}_t^i\ \sim \pi_{{\theta}}(\mathbf{x}_{t}^i\;|\;\mathbf{x}_{t-1}^i, \mathbf{c})
}}} \left[
A_i\cdot \nabla_\theta \log p_{{\theta}}(\mathbf{x}_{t}^i\;|\;\mathbf{x}_{t-1}^i, \mathbf{c})
\right].
\end{aligned}
\label{app:eq:gradient}
\end{equation}
For flow-matching models, the conditional distribution at step $t$ is Gaussian:
\begin{equation}
p_\theta (\mathbf{x}_t|\mathbf{x}_{t-\Delta t}, \mathbf{c}) \sim \mathcal{N}\left(\mu_\theta(\mathbf{x}_{t}, t, \mathbf{c}); \mathbf{x}_{t-\Delta t}, \eta^2\Delta t\right),
\end{equation}
where $\mu_\theta$ is the predicted mean and $\eta^2 \Delta t$ is the variance. The log-probability is:
\begin{equation}
\begin{aligned}
\log p_\theta (\mathbf{x}_t|\mathbf{x}_{t-\Delta t}, \mathbf{c}) &= - \frac{\Vert \mu_\theta(\mathbf{x}_{t}, t, \mathbf{c}) - \mathbf{x}_{t-\Delta t}\Vert_2^2}{2\eta^2 \Delta t} + \text{const}.
\end{aligned}
\label{app:eq:log_prob}
\end{equation}
Consequently, we have:
\begin{equation}
\nabla_\theta \log p_\theta (\mathbf{x}_t|\mathbf{x}_{t-\Delta t}, \mathbf{c}) = - \nabla_\theta \frac{\Vert \mu_\theta(\mathbf{x}_{t}, t, \mathbf{c}) - \mathbf{x}_{t-\Delta t}\Vert_2^2}{2\eta^2 \Delta t}.
\end{equation}
Plugging the above into the gradient expression in Eq.~\eqref{app:eq:gradient}, we obtain:
\begin{equation*}
\begin{split}
\nabla_{{\theta}} \mathcal{J}_\mathrm{GRPO} = \begin{dcases}
  0,  &\text{if } (u_1 \wedge v_1) \vee  (u_2 \wedge \neg v_1), \\
   \mathbb{E}\!\left[ -\omega(t)\cdot \mathcal{A}^k \cdot \nabla_{{\theta}}\; \left\| \mu_{{\theta}}(\mathbf{x}_{t}, t, \mathbf{c}) - \mathbf{x}_{t-1} \right\|_2^2 \right], &\text{else.}
\end{dcases}
\end{split}
\end{equation*}

Combining the above, we recover the stated corollary form: the gradient vanishes in the clipped regions and otherwise equals the gradient of a reward- and time-weighted squared prediction error $\|\mu_\theta(\mathbf{x}_t,t,\mathbf{c})-\mathbf{x}_{t-1}\|_2^2$, concluding the proof.
\end{proof}

\subsection{Corollary for DDPO \& DPO}
\label{app:theoretic:cannyor_dpo}

\subsubsection{Corollary for DDPO}
\noindent \textbf{Corollary (DDPO Trajectory Imitation).} Given a flow model $\theta$, a reward model $r(\mathbf{x},c)$, and trajectories sampled via SDE in Eq.~\eqref{eq:exploration}, the gradient of DDPO satisfies:
\begin{equation*}
\begin{split}
\mathcal{J}_\mathrm{DDPO} = \nabla_{{\theta}} \mathbb{E}\!\left[ \omega(t) \cdot r(\mathbf{x}_0^k, \mathbf{c}) \cdot \nabla_{{\theta}} \; \left\| \mu_{{\theta}}(\mathbf{x}_{t}, t, \mathbf{c}) - \mathbf{x}_{t-1} \right\|_2^2 \right],
\end{split}
\end{equation*}
where $\mu_\theta(\cdot)$ denotes the mean of the predicted distribution $p_\theta(\mathbf{x}_t \mid \mathbf{x}_{t-1}, \mathbf{c})$.

\begin{proof}[Proof of DDPO Corollary]
We prove that the DDPO gradient reduces to a reward-weighted trajectory imitation objective. The standard DDPO objective is:
\begin{equation}
\begin{aligned}
\nabla_\theta \mathcal{J}_{{\mathrm{DDPO}}}(\theta) = \nabla_\theta \mathbb{E}_{{\substack{\mathbf{c}\sim \mathcal{D}_c,
\mathbf{x}_t^k\ \sim \pi_{{\theta_{{\mathrm{old}}}}}(\cdot|\mathbf{c}),\, t \sim \mathcal{U}(0,T),\, k \sim \mathcal{U}(1,K)
}}} \left[ \frac{p_{{\theta}}(\mathbf{x}^{k}_t \mid \mathbf{x}^{k}_{t-1})}{p_{{\theta_\text{old}}}(\mathbf{x}^{k}_t \mid \mathbf{x}^{k}_{t-1})}\cdot r(\mathbf{x}^k_0,\mathbf{c}) \cdot \log p_{{\theta}}(\mathbf{x}^{k}_t \mid \mathbf{x}^{k}_{t-1}, \mathbf{c}) \right].
\end{aligned}
\label{app:eq:ddpo_objective}
\end{equation}

From importance sampling, we derive the following gradient:
\begin{equation}
\begin{aligned}
\nabla_\theta \mathcal{J}_{{\mathrm{DDPO}}}(\theta) &= \mathbb{E}_{{\substack{\mathbf{c}\sim \mathcal{D}_c
}}} \left[ r(\mathbf{x}^k_0,\mathbf{c}) \cdot p_{{\theta}}(\mathbf{x}^{k}_t) \cdot \nabla_\theta \log p_{{\theta}}(\mathbf{x}^{k}_t \mid \mathbf{x}^{k}_{t-1}, \mathbf{c}) \right]\\
 &= \mathbb{E}_{{\substack{\mathbf{c}\sim \mathcal{D}_c, \mathbf{x}_t^k \sim \pi_{{\theta}}(\cdot|\mathbf{c})}}} \left[ r(\mathbf{x}^k_0,\mathbf{c}) \cdot \nabla_\theta \log p_{{\theta}}(\mathbf{x}^{k}_t \mid \mathbf{x}^{k}_{t-1}, \mathbf{c}) \right]
\end{aligned}
\label{app:eq:ddpo_gradient_1}
\end{equation}

For flow-matching models, the conditional distribution at step $t$ is Gaussian:
\begin{equation}
p_\theta (\mathbf{x}_t|\mathbf{x}_{t-1}, \mathbf{c}) \sim \mathcal{N}\left(\mu_\theta(\mathbf{x}_{t}, t, \mathbf{c}); \mathbf{x}_{t-1}, \eta^2\Delta t\right),
\end{equation}

where $\mu_\theta$ is the predicted mean and $\eta^2 \Delta t$ is the variance. The log-probability is:
\begin{equation}
\begin{aligned}
\log p_\theta (\mathbf{x}_t|\mathbf{x}_{t-1}, \mathbf{c}) &= - \frac{\Vert \mu_\theta(\mathbf{x}_{t}, t, \mathbf{c}) - \mathbf{x}_{t-1}\Vert_2^2}{2\eta^2 \Delta t} + \text{const}.
\end{aligned}
\label{app:eq:ddpo_log_prob}
\end{equation}

Consequently:
\begin{equation}
\nabla_\theta \log p_\theta (\mathbf{x}_t|\mathbf{x}_{t-1}, \mathbf{c}) = - \nabla_\theta \frac{\Vert \mu_\theta(\mathbf{x}_{t}, t, \mathbf{c}) - \mathbf{x}_{t-1}\Vert_2^2}{2\eta^2 \Delta t}.
\end{equation}

Plugging this into Eq.~\eqref{app:eq:ddpo_gradient_1}:
\begin{equation}
\begin{aligned}
\nabla_\theta \mathcal{J}_{{\mathrm{DDPO}}}(\theta) &= \mathbb{E}_{{\substack{\mathbf{c}\sim \mathcal{D}_c, \mathbf{x}_t^k \sim \pi_{{\theta}}(\cdot|\mathbf{c})}}} \left[ r(\mathbf{x}^k_0,\mathbf{c}) \cdot \left( - \nabla_\theta \frac{\Vert \mu_\theta(\mathbf{x}_{t}, t, \mathbf{c}) - \mathbf{x}_{t-1}\Vert_2^2}{2\eta^2 \Delta t} \right) \right].
\end{aligned}
\end{equation}

This establishes that DDPO reduces to reward-weighted trajectory imitation, where the model learns to imitate high-reward trajectories. 
\end{proof}

\subsubsection{Corollary for DPO}

\noindent \textbf{Corollary (DPO Trajectory Imitation).} Given a flow model $\theta$, a reference model $\theta_{{\mathrm{ref}}}$, and preference pairs $(\mathbf{x}^1, \mathbf{x}^2)$ where $\mathbf{x}^1 \succ \mathbf{x}^2$ sampled via SDE in Eq.~\eqref{eq:exploration}, the gradient of DPO satisfies
\begin{equation*}
\begin{split}
\nabla_{{\theta}} \mathcal{J}_\mathrm{DPO}(\theta)
=
&= \mathbb{E} \left[ (1 - \sigma(m_t)) \cdot \left( -\omega(t) \, \nabla_\theta \Big( \big\Vert \mu_\theta(\mathbf{x}_{t}^1, t, \mathbf{c}) - \mathbf{x}_{t-1}^1\big\Vert_2^2 - \big\Vert \mu_\theta(\mathbf{x}_{t}^2, t, \mathbf{c}) - \mathbf{x}_{t-1}^2\big\Vert_2^2 \Big) \right) \right] \\[0.3em]
\end{split}
\end{equation*}
where 
\begin{equation*}
 m_t = \beta \log \frac{p_{{\theta}}(\mathbf{x}^{1}_t \mid \mathbf{x}^{1}_{t-1}, c)}{p_{{\theta_{{\mathrm{ref}}}}}(\mathbf{x}^{1}_t \mid \mathbf{x}^{1}_{t-1}, c)} - \beta \log \frac{p_{{\theta}}(\mathbf{x}^{2}_t \mid \mathbf{x}^{2}_{t-1}, c)}{p_{{\theta_{{\mathrm{ref}}}}}(\mathbf{x}^{2}_t \mid \mathbf{x}^{2}_{t-1}, c)}
\end{equation*}
denotes the per-timestep preference margin, $\beta > 0$ is the temperature parameter, and $\omega(t)>0$ is a constant that depends only on $\beta$ and the Gaussian variance (e.g., $\omega(t) = \tfrac{\beta}{2\eta^2\Delta t}$ under the parameterization in Eq.~\eqref{app:eq:ddpo_log_prob}) but is independent of $\theta$.

\begin{proof}[Proof of DPO Corollary]
The DPO objective for preference pairs is:
\begin{equation}
\mathcal{J}_{{\mathrm{DPO}}}(\theta) = \mathbb{E}_{c\sim\mathcal{D},\mathbf{x}^{1}\succ \mathbf{x}^{2}, t\sim\mathcal{U}(0,T)} \left[ \log \sigma\left( \beta \log \frac{p_{{\theta}}(\mathbf{x}^{1}_t \mid \mathbf{x}^{1}_{t-1}, c)}{p_{{\theta_{{\mathrm{ref}}}}}(\mathbf{x}^{1}_t \mid \mathbf{x}^{1}_{t-1}, c)} - \beta \log \frac{p_{{\theta}}(\mathbf{x}^{2}_t \mid \mathbf{x}^{2}_{t-1}, c)}{p_{{\theta_{{\mathrm{ref}}}}}(\mathbf{x}^{2}_t \mid \mathbf{x}^{2}_{t-1}, c)} \right) \right].
\end{equation}

Define the per-timestep preference margin
\begin{equation}
 m_t := \beta \log \frac{p_{{\theta}}(\mathbf{x}^{1}_t \mid \mathbf{x}^{1}_{t-1}, c)}{p_{{\theta_{{\mathrm{ref}}}}}(\mathbf{x}^{1}_t \mid \mathbf{x}^{1}_{t-1}, c)} - \beta \log \frac{p_{{\theta}}(\mathbf{x}^{2}_t \mid \mathbf{x}^{2}_{t-1}, c)}{p_{{\theta_{{\mathrm{ref}}}}}(\mathbf{x}^{2}_t \mid \mathbf{x}^{2}_{t-1}, c)},
\end{equation}
so that $\mathcal{J}_{{\mathrm{DPO}}}(\theta) = \mathbb{E}[\log \sigma(m_t)]$. Taking the gradient and applying the chain rule with the sigmoid derivative $\sigma'(z) = \sigma(z)(1-\sigma(z))$ yields
\begin{equation}
\nabla_\theta \mathcal{J}_{{\mathrm{DPO}}}(\theta) = \mathbb{E} \left[ (1 - \sigma(m_t)) \cdot \nabla_\theta m_t \right].
\label{app:eq:dpo_grad_margin}
\end{equation}

Next, we express $m_t$ in terms of the Gaussian likelihoods used by the flow model. For flow-matching models, the conditional distribution at step $t$ is Gaussian with variance $\eta^2\Delta t$: 
\begin{equation}
p_\theta (\mathbf{x}_t|\mathbf{x}_{t-1}, \mathbf{c}) \sim \mathcal{N}\left(\mu_\theta(\mathbf{x}_{t}, t, \mathbf{c}); \mathbf{x}_{t-1}, \eta^2\Delta t\right),
\end{equation}
so that
\begin{equation}
\log p_\theta (\mathbf{x}_t|\mathbf{x}_{t-1}, \mathbf{c}) = - \frac{\big\Vert \mu_\theta(\mathbf{x}_{t}, t, \mathbf{c}) - \mathbf{x}_{t-1}\big\Vert_2^2}{2\eta^2 \Delta t} + \text{const}.
\end{equation}
where the constant does not depend on $\theta$. Since the reference model $\theta_\mathrm{ref}$ is fixed, its log-likelihood terms are constant with respect to $\theta$ and vanish under $\nabla_\theta$. Consequently, the contribution of the reference model $\theta_\mathrm{ref}$ to the margin $m_t$ is a constant with respect to $\theta$, and thus plays no role in the gradient. Therefore, up to an additive constant independent of $\theta$, we have
\begin{equation}
 m_t = -\frac{\beta}{2\eta^2\Delta t} \Big( \big\Vert \mu_\theta(\mathbf{x}_{t}^1, t, \mathbf{c}) - \mathbf{x}_{t-1}^1\big\Vert_2^2 - \big\Vert \mu_\theta(\mathbf{x}_{t}^2, t, \mathbf{c}) - \mathbf{x}_{t-1}^2\big\Vert_2^2 \Big) + \text{const}.
\end{equation}
Taking the gradient with respect to $\theta$ gives
\begin{equation}
\nabla_\theta m_t = -\frac{\beta}{2\eta^2\Delta t} \, \nabla_\theta \Big( \big\Vert \mu_\theta(\mathbf{x}_{t}^1, t, \mathbf{c}) - \mathbf{x}_{t-1}^1\big\Vert_2^2 - \big\Vert \mu_\theta(\mathbf{x}_{t}^2, t, \mathbf{c}) - \mathbf{x}_{t-1}^2\big\Vert_2^2 \Big).
\end{equation}

Substituting this expression into Eq.~\eqref{app:eq:dpo_grad_margin} and defining $\omega(t) := \tfrac{\beta}{2\eta^2\Delta t} > 0$, we obtain
\begin{equation}
\begin{aligned}
\nabla_\theta \mathcal{J}_{{\mathrm{DPO}}}(\theta)
&= \mathbb{E} \left[ (1 - \sigma(m_t)) \cdot \left( -\omega(t) \, \nabla_\theta \Big( \big\Vert \mu_\theta(\mathbf{x}_{t}^1, t, \mathbf{c}) - \mathbf{x}_{t-1}^1\big\Vert_2^2 - \big\Vert \mu_\theta(\mathbf{x}_{t}^2, t, \mathbf{c}) - \mathbf{x}_{t-1}^2\big\Vert_2^2 \Big) \right) \right] \\[0.3em]
\end{aligned}
\end{equation}
The proof is completed.
\end{proof}
This shows that maximizing the DPO objective is equivalent (up to the positive scalar factor $\omega(t)$) to minimizing a preference-weighted trajectory imitation loss, in which the model is encouraged to decrease the prediction error along preferred trajectories and increase it along dispreferred trajectories, with weights given by $(1-\sigma(m_t))$ that depend on the current preference margin.

\subsection{Theoretical Justification for CPGO}
\label{app:theoretic:cpgo}

We provide theoretical justification for CPGO by establishing a connection to consistency models~\cite{song2023consistency}. The key insight is that minimizing the CPGO objective encourages the current model to maintain consistency with the previous model across the denoising trajectory, which in turn controls the accumulated error in multi-step ODE integration.

\noindent \textbf{Notation.} For brevity, we denote $\pi_\theta(\mathbf{x}, t) := \pi_\theta^{\mathrm{ODE}}(\mathbf{x}, t)$ as the ODE-based single-step prediction function, and omit the sample index $k$ when clear from context. We use $\mathbf{x}_{t_n}$ to denote the state at discrete time step $t_n$ in the trajectory.

\begin{theorem}[Consistency Property of Flow Models]
\label{thm:cpgo_consistency}
Let $\Delta t := \max_{n\in [1,T-1]} \{ |t_{n+1}-t_{n}|\}$ denote the maximum time step size, and $\pi_\theta$ satisfy the following assumptions:
\begin{enumerate}
    \item $\pi_\theta$ is Lipschitz continuous in $\mathbf{x}$ with constant $L > 0$: $\|\pi_\theta(\mathbf{x}, t) - \pi_\theta(\mathbf{y}, t)\|_2 \leq L \|\mathbf{x} - \mathbf{y}\|_2$.
    \item The boundary condition is satisfied: $\pi_\theta(\mathbf{x}, 0) = \mathbf{x}$.
    \item The ODE solver has local truncation error $O((t_{n+1}-t_{n})^{p+1})$ for some $p \geq 1$.
\end{enumerate}
If $\mathcal{J}_{{\mathrm{CPGO}}}(\theta, \mathcal{G}) = 0$, then the prediction error between the current and old models satisfies:
\begin{equation*}
\sup_{n,\mathbf{x}} \|\pi_\theta(\mathbf{x}, t_n) - \pi_{{\theta_\mathrm{old}}}(\mathbf{x}, t_n)\|_2 = O((\Delta t)^{p+1}).
\end{equation*}
\end{theorem}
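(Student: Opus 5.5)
This follows the consistency-model error analysis of \cite{song2023consistency} (their Theorem 1), adapted so that the ``teacher'' is the old model's one-step prediction at the adjacent time.

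\textbf{Step 1: the pointwise identity.} The hypothesis $\mathcal{J}_{\mathrm{CPGO}}(\theta,\mathcal{G})=0$ is an expectation of nonnegative squared norms. We read it, as in consistency distillation, as holding on the support of the trajectory distribution, and extend it to all $\mathbf{x}$ by continuity. This gives the pointwise identity
\begin{equation*}
\pi_\theta(\mathbf{x}_{t_{n+1}},t_{n+1})=\pi_{\theta_{\mathrm{old}}}(\mathbf{x}_{t_n},t_n),
\end{equation*}
where $\mathbf{x}_{t_n}$ is obtained from $\mathbf{x}_{t_{n+1}}$ by one solver step of the old model's ODE. Note that the $\tau$-filter restricts the identity to $t\geq\tau$; handling that restriction is addressed in the last paragraph.

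\textbf{Step 2: the error recursion.} Define
\begin{equation*}
e_n:=\pi_\theta(\mathbf{x}_{t_n},t_n)-\pi_{\theta_{\mathrm{old}}}(\mathbf{x}_{t_n},t_n),
\end{equation*}
evaluated along exact trajectories of $\pi_{\theta_{\mathrm{old}}}$'s ODE. By the boundary condition (Assumption 2), $e_0=\pi_\theta(\mathbf{x},0)-\pi_{\theta_{\mathrm{old}}}(\mathbf{x},0)=0$. For $e_{n+1}$:
\begin{itemize}
\item Insert the identity from Step 1.
\item Add and subtract $\pi_\theta(\mathbf{x}_{t_n},t_n)$.
\item Use that $\pi_{\theta_{\mathrm{old}}}$ is constant along exact ODE trajectories, which is the meaning of single-step (consistency) prediction for the flow.
\end{itemize}
This yields
\begin{equation*}
\|e_{n+1}\|\le \|e_n\| + \|\pi_\theta(\mathbf{x}_{t_n},t_n)-\pi_\theta(\hat{\mathbf{x}}_{t_n},t_n)\|,
\end{equation*}
where $\hat{\mathbf{x}}_{t_n}$ is the solver iterate and $\mathbf{x}_{t_n}$ the exact trajectory point.

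\textbf{Step 3: closing the recursion.} By Lipschitz continuity (Assumption 1) and the local truncation error (Assumption 3), the last term is at most $L\,O((t_{n+1}-t_n)^{p+1})$. Telescoping from $e_0=0$ gives
\begin{equation*}
\|e_n\|\le L\sum_k O((t_{k+1}-t_k)^{p+1}).
\end{equation*}
Taking the supremum over $n$ and $\mathbf{x}$ then gives the claimed bound.

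\textbf{Main obstacle: the order of the sum.} The naive telescoping sum is $O(T(\Delta t)^{p+1})=O((\Delta t)^{p})$, which is exactly the rate of \cite{song2023consistency}, whereas the statement claims $O((\Delta t)^{p+1})$. To reach the stated order I would first try the following. Because the target in $\mathcal{J}_{\mathrm{CPGO}}$ comes from the old model at the adjacent step, the recursion can be anchored at each step separately rather than telescoped. One bounds $e_{n+1}$ directly by the single local error at step $n$, using $e_n=0$ on the discrete trajectory itself, which is where the loss vanishes. Only the one-step solver defect then contributes, giving $O((t_{n+1}-t_n)^{p+1})\le O((\Delta t)^{p+1})$ uniformly in $n$.

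\textbf{Fallback.} If this local anchoring cannot be justified off the trajectory, I would state the bound as $O((\Delta t)^{p})$ via the telescoped sum, noting that it coincides with the claim when $T\Delta t$ is treated as $O(\Delta t)$-independent. Separately, the threshold $\tau$ confines the argument to $t\ge\tau$. There the base case is replaced by the first unfiltered step, and the boundary condition is used only to control the filtered range $t<\tau$.
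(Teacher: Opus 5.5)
There is a genuine gap: as written, your argument does not reach the stated order. Steps~2--3 give only the telescoped bound $O(T(\Delta t)^{p+1})=O((\Delta t)^{p})$. The proposed rescue rests on a false premise: $\mathcal{J}_{\mathrm{CPGO}}=0$ does not give $e_n=0$ on the discrete trajectory. It gives only the cross-step identity $\pi_\theta(\mathbf{x}_{t_{n+1}},t_{n+1})=\pi_{\theta_{\mathrm{old}}}(\hat{\mathbf{x}}_{t_n},t_n)$. The fallback proves a strictly weaker statement, since $O((\Delta t)^{p})$ agrees with $O((\Delta t)^{p+1})$ only if $T$ stays bounded, not when $T\Delta t$ does.

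More seriously, the Step~2 recursion does not follow from the identity at all. After inserting it you hold $\pi_{\theta_{\mathrm{old}}}(\hat{\mathbf{x}}_{t_n},t_n)$. To produce the Lipschitz term $\pi_\theta(\mathbf{x}_{t_n},t_n)-\pi_\theta(\hat{\mathbf{x}}_{t_n},t_n)$ you would need $\pi_{\theta_{\mathrm{old}}}(\hat{\mathbf{x}}_{t_n},t_n)=\pi_\theta(\hat{\mathbf{x}}_{t_n},t_n)$. That is the very error you are bounding, evaluated at an off-trajectory point. The recursion structure belongs to consistency distillation, where the target is a stop-gradient copy of the \emph{current} network and the error is measured against the true consistency function, so errors genuinely accumulate.

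The missing idea is that in CPGO the target and the comparator are the same frozen model, so there is nothing to telescope. Do not add and subtract $\pi_\theta$; simply rearrange the identity, as the paper does in \eqref{eq:error_relation}:
\[
e_{n+1}=\pi_{\theta_{\mathrm{old}}}(\hat{\mathbf{x}}_{t_n},t_n)-\pi_{\theta_{\mathrm{old}}}(\mathbf{x}_{t_{n+1}},t_{n+1}).
\]
This is a single one-step defect of the old model. The paper identifies it with the solver's local truncation error, giving $O((t_{n+1}-t_n)^{p+1})\le O((\Delta t)^{p+1})$ uniformly in $n$, with no recursion from $e_0$.

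Your invariance assumption is exactly what makes that identification legitimate; the paper leaves it implicit. It replaces $\pi_{\theta_{\mathrm{old}}}(\mathbf{x}_{t_{n+1}},t_{n+1})$ by $\pi_{\theta_{\mathrm{old}}}(\mathbf{x}_{t_n},t_n)$, whence $\|e_{n+1}\|\le L\,\|\hat{\mathbf{x}}_{t_n}-\mathbf{x}_{t_n}\|=O((\Delta t)^{p+1})$. Here $L$ must be the Lipschitz constant of $\pi_{\theta_{\mathrm{old}}}$ (Assumption~1 read for the old model), not of $\pi_\theta$.

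Separately, your claim that the boundary condition controls the $\tau$-filtered range is wrong. Assumption~2 pins down only $t=0$, and on the filtered steps the loss carries no information. The paper's proof simply ignores $\tau$.
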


\begin{proof}[Proof of Theorem~\ref{thm:cpgo_consistency}]
\label{proof:cpgo_consistency}

The CPGO objective is defined as:
\begin{equation}
\mathcal{J}_{{\mathrm{CPGO}}}(\theta, \mathcal{G}) = \mathbb{E}_{{\mathbf{x}_t \sim \mathcal{G}}} \left[\|\pi_\theta(\mathbf{x}_t, t) - \pi_{{\theta_\text{old}}}(\mathbf{x}_{t-1}, t-1)\|_2^2\right],
\label{eq:cpgo_objective}
\end{equation}
where $\mathbf{x}_{t-1}$ is the predecessor state in the trajectory. When $\mathcal{J}_{{\mathrm{CPGO}}}(\theta, \mathcal{G}) = 0$, we have:
\begin{equation}
\mathbb{E}_{{\mathbf{x}_t, \mathbf{x}_{t-1}}}  \left[\|\pi_\theta(\mathbf{x}_t, t) - \pi_{{\theta_\text{old}}}(\mathbf{x}_{t-1}, t-1)\|_2^2\right] = 0.
\label{eq:cpgo_zero}
\end{equation}

Since the integrand is non-negative and its expectation is zero, the integrand must vanish almost everywhere over the support of the joint distribution $p(\mathbf{x}_t, \mathbf{x}_{t-1})$:
\begin{equation}
\|\pi_\theta(\mathbf{x}_t, t) - \pi_{{\theta_\text{old}}}(\mathbf{x}_{t-1}, t-1)\|_2 = 0.
\label{eq:cpgo_pointwise}
\end{equation}

For a trajectory $\{\mathbf{x}_{t_n}\}_{n=0}^{T}$ generated by the old model, the consistency condition (Eq.~\eqref{eq:cpgo_pointwise}) implies that at each step $n$:
\begin{equation}
\pi_\theta(\mathbf{x}_{t_n}, t_n) = \pi_{{\theta_\text{old}}}(\mathbf{x}_{t_{n-1}}, t_{n-1})
\label{eq:trajectory_consistency}
\end{equation}

This means the current model's prediction at $(\mathbf{x}_{t_n}, t_n)$ matches the old model's prediction at the previous step $(\mathbf{x}_{t_{n-1}}, t_{n-1})$. Rearranging:
\begin{equation}
\pi_\theta(\mathbf{x}_{t_n}, t_n) - \pi_{{\theta_\text{old}}}(\mathbf{x}_{t_n}, t_n) = \pi_{{\theta_\text{old}}}(\mathbf{x}_{t_{n-1}}, t_{n-1}) - \pi_{{\theta_\text{old}}}(\mathbf{x}_{t_n}, t_n).
\label{eq:error_relation}
\end{equation}

The right-hand side of Eq.~\eqref{eq:error_relation} represents the single-step ODE integration error of the old model. By Assumption 3, the ODE solver has local truncation error $O((t_{n+1}-t_n)^{p+1})$. For a $p$-th order method applied to the ODE $\frac{d\mathbf{x}}{dt} = v_{{\theta_\text{old}}}(\mathbf{x}, t)$, the single-step error satisfies:
\begin{equation}
\|\pi_{{\theta_\text{old}}}(\mathbf{x}_{t_{n-1}}, t_{n-1}) - \pi_{{\theta_\text{old}}}(\mathbf{x}_{t_n}, t_n)\|_2 = O((t_n - t_{n-1})^{p+1}).
\label{eq:ode_error}
\end{equation}

By the Lipschitz assumption on $\pi_\theta$ (Assumption 1), we can bound the model disagreement:
\begin{equation}
\begin{aligned}
\|\pi_\theta(\mathbf{x}_{t_n}, t_n) - \pi_{{\theta_\text{old}}}(\mathbf{x}_{t_n}, t_n)\|_2 &\leq \|\pi_{{\theta_\text{old}}}(\mathbf{x}_{t_{n-1}}, t_{n-1}) - \pi_{{\theta_\text{old}}}(\mathbf{x}_{t_n}, t_n)\|_2 \\
&= O((t_n - t_{n-1})^{p+1}) \leq O((\Delta t)^{p+1}).
\end{aligned}
\label{eq:error_bound}
\end{equation}

Taking the supremum over all time steps and states:
\begin{equation}
\sup_{n,\mathbf{x}} \|\pi_\theta(\mathbf{x}_{t_n}, t_n) - \pi_{{\theta_\text{old}}}(\mathbf{x}_{t_n}, t_n)\|_2 = O((\Delta t)^{p+1}).
\label{eq:final_bound}
\end{equation}

This bound shows that when CPGO is minimized, the prediction error between the current and old models is controlled by the ODE discretization error, which decays as $O((\Delta t)^{p+1})$ with finer time steps. The proof is completed.
\end{proof}

\section{Visualization}
\label{app:visualization}
Here, we provide visualizations corresponding to the additional experimental results reported in Sec.~\ref{app:add_exp_results}.

\begin{figure}[t]
  \centering
  \includegraphics[width=0.95\linewidth]{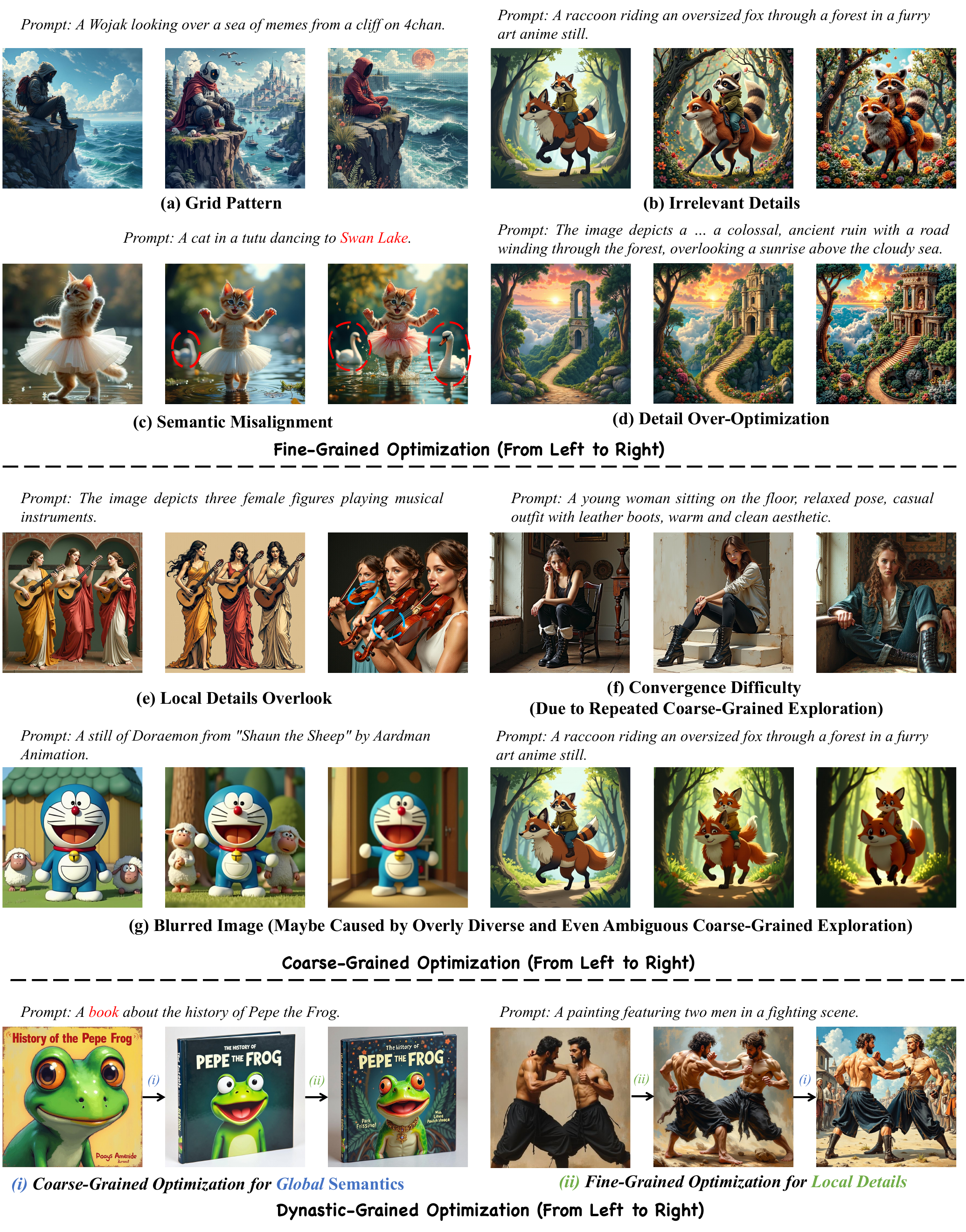}
  \caption{Qualitative comparison of coarse-, fine-, and dynamic-grained optimization strategies. Fine-grained optimization achieves sharp details but suffers from over-sharpening and detail accumulation artifacts. Coarse-grained optimization maintains semantic diversity but may lack fine details. Our dynamic-grained approach balances both, combining the strengths of each strategy to achieve visually superior results with fewer hallucination artifacts.}
  \label{supp:app:coarse_opt}
\end{figure}

\begin{figure}[t]
  \centering
  \includegraphics[width=\linewidth]{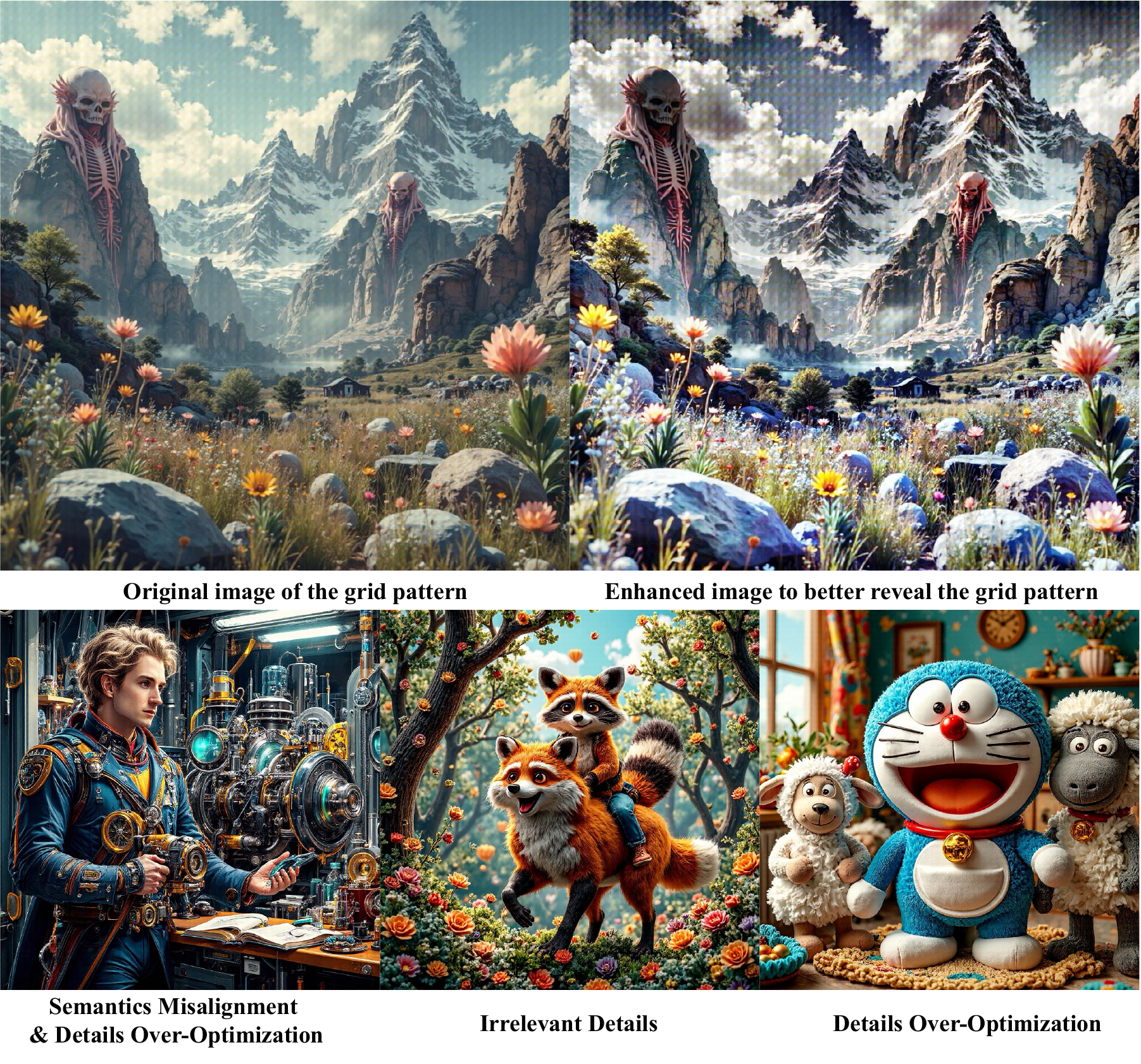}
  \caption{Enlarged view of Fig.~1 from the Sec. Introduction. This enlarged visualization is provided to allow readers to more clearly observe fine-grained details and visual hallucination artifacts, including detail over-sharpening, irrelevant detail accumulation, and grid pattern artifacts discussed in the paper. \textit{{These results are reproduced using official model weights or implementation settings of existing methods, with 8 GPUs.}}}
  \label{supp:vh_large}
\end{figure}

\begin{figure}[t]
  \centering
  \includegraphics[width=\linewidth]{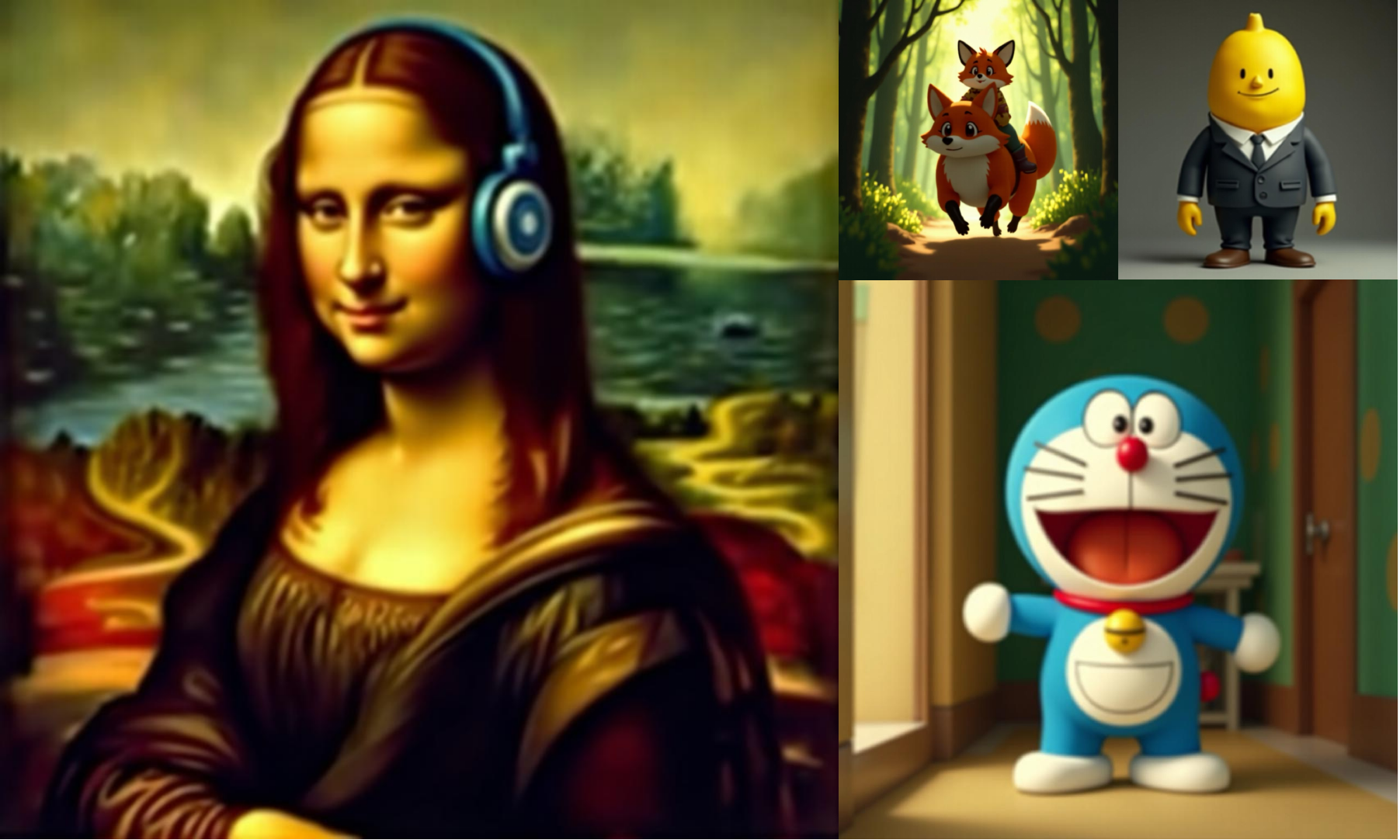}
  \caption{Enlarged view of blurred image. \textit{{These results are reproduced using official model weights or implementation settings of existing methods, with 8 GPUs.}}}
  \label{supp:app:blurred_img}
\end{figure}

\end{document}